\newcommand{\down}[1]{\ensuremath{\downarrow\!{#1}}}
\newcommand{\cdotl}{\ensuremath{\!\cdot\!}}
\newcommand{\commentn}[1]{}
\newcommand{\change}[1]{#1}
\newcommand{\eqdef}{%
  \mathrel{\vbox{\offinterlineskip\ialign{%
    \hfil##\hfil\cr
    $\scriptscriptstyle\mathrm{def}$\cr
    \noalign{\kern1pt}
    $=$\cr
    \noalign{\kern-0.1pt}
}}}}
\newcommand{\fn}[1]{\hbox{\em #1}}
\newcommand{\LP}{{\bf LP}}
\def\qed{~\hfill$\Box$}
\newtheorem{definition}{Definition}
\newtheorem{theorem}{Theorem}
\newtheorem{example}{Example}
\newtheorem{lemma}{Lemma}[section]
\newtheorem{corollary}{Corollary}
\newtheorem{proposition}{Proposition}
\newenvironment{proofof}[1]{\vspace{5pt}\setlength{\parindent}{0cm}\setlength{\parskip}{0.2cm} {\em Proof of #1}.}{}
\renewenvironment{proof}{\setlength{\parindent}{0cm}\setlength{\parskip}{0.2cm} {\emph{Proof}}.}{\vspace{0.25cm}}
\def\qed{~\hfill{$\Box$}}
\newcommand{\tuple}[1]{\ensuremath{\langle #1 \rangle}}
\newcommand{\signature}{\tuple{At, Lb}}
\newcommand{\set}[1]{\ensuremath{\{#1\}}}
\newcommand{\setb}[1]{\ensuremath{\big\{\ #1\ \big\}}}
\newcommand{\setm}[2]{\ensuremath{\{\ #1\ \big|\ #2\ \}}}
\newcommand{\setbm}[2]{\ensuremath{\big\{\ #1\ \big|\ #2\ \big\}}}
\def\R{R}
\newcommand{\CS}[2]{\ensuremath{\mathbf{C}_{#1}^{#2}}}
\newcommand{\causes}{\CS{Lb}{}}
\newcommand{\VS}[2]{\ensuremath{\mathbf{V}_{#1}^{#2}}}
\newcommand{\VLb}{\VS{Lb}{}}
\newcommand{\In}{{\ensuremath{\mathbf{I}}}}
\newcommand{\Fi}{\ensuremath{\mathbf{F}}}
\newcommand{\sU}{\ensuremath{\mathcal{U}}}
\newcommand{\Id}{\ensuremath{\mathbf{I}}}
\newcommand{\cB}{\ensuremath{\underline{\mathfrak{B}}}}
\def\bottomI{\ensuremath{\mathbf{0}}}
\def\topI{\ensuremath{\mathbf{1}}}
\newcommand{\nop}[1]{}
\newcommand{\tPpp}[3]{\text{\mbox{\ensuremath{T_{#1}\!\uparrow^{\ #2}(\bottomI)(#3)}}}}
\newcommand{\tpp}[2]{\tPpp{P}{#1}{#2}}
\newcommand{\tPp}[2]{\text{\mbox{\ensuremath{T_{#1}\!\uparrow^{\ #2}(\bottomI)}}}}
\newcommand{\tp}[1]{\tPp{P}{#1}}
\def\Not{\hbox{\em not} \ }
\def\sneg{\hbox{\em } \ \neg}
\newcounter{programcount}
\newcommand{\newprog}{\refstepcounter{programcount}\ensuremath{P_{\arabic{programcount}}}}
\newcommand{\progref}[1]{\ensuremath{P_{\ref{#1}}}}
\DeclareDocumentCommand\impl{o m m}{\ensuremath{	\IfNoValueTF{#1}{}{{#1}:}{#3}\leftarrow{#2}}}
\newenvironment{examplecont}[1]{\vspace{5pt} \noindent \emph{Example~\ref{#1} (continued)} \\ \noindent}{\qed \vspace{5pt}}
\renewenvironment{examplecont}[1]{\begin{example}[Ex.~\ref{#1} continued]}{\end{example}}
\title[Theory and Practice of Logic Programming]
      {Causal Graph Justifications of Logic Programs\thanks{This research was partially supported by Spanish MEC project TIN2009-14562-C05-04, Xunta program INCITE 2011 and Inditex-University of Corunna 2013 grants, as well as by the Austrian Science Fund (FWF) project P24090.}}
\author[P. Cabalar, J. Fandinno \& M. Fink]
         {Pedro Cabalar, Jorge Fandinno\\
          Department of Computer Science\\
		  University of Corunna, Spain\\
		  \email{\{cabalar, jorge.fandino\}@udc.es}
		  \and Michael Fink\\
          Vienna University of Technology,\\
          Institute for Information Systems\\
          Vienna, Austria\\
		  \email{fink@kr.tuwien.ac.at}
		 }
\begin{document}

\label{firstpage}

\maketitle

\titlespacing*{\section}{0pt}{8pt plus 4pt minus 4pt}{4pt plus 2pt minus 2pt}

\begin{abstract}
In this work we propose a multi-valued extension of logic programs under the stable models semantics where each true atom in a model is associated with a set of justifications. These justifications are expressed in terms of \emph{causal graphs} formed by rule labels and edges that represent their application ordering. For positive programs, we show that the causal justifications obtained for a given atom have a direct correspondence to (relevant) syntactic proofs of that atom using the program rules involved in the graphs. The most interesting contribution is that this causal information is obtained in a purely semantic way, by algebraic operations (product, sum and application) on a lattice of causal values whose ordering relation expresses when a justification is stronger than another. Finally, for programs with negation, we define the concept of \emph{causal stable model} by introducing an analogous transformation to Gelfond and Lifschitz's program reduct. As a result, default negation behaves as ``absence of proof'' and no justification is derived from negative literals, something that turns out convenient for elaboration tolerance, as we explain with a running example.
\end{abstract}

\begin{keywords}
Answer Set Programming, Causality, Knowledge Representation, Multi-valued Logic Programming
\end{keywords}

\section{Introduction}

An important difference between classical models and most Logic Programming (LP) semantics is that, in the latter, true atoms must be founded or justified by a given derivation. Consequently, falsity is understood as absence of proof: for instance, a common informal way reading for default literal $\Not p$ is ``there is no way to derive $p$.'' Although this idea seems quite intuitive, it actually resorts to a concept, the \emph{ways to derive} $p$,  outside the scope of the standard LP semantics. In other words, LP semantics point out whether there exists some derivation for an atom, but do not provide the derivations themselves, if several alternatives exist. 

However, such information on justifications for atoms can be of great interest for Knowledge Representation (KR), and especially, for dealing with problems related to causality. In the area of diagnosis, for instance, when a discrepancy between expected and observed behaviour is found, it may be convenient to not only exhibit a set of malfunctioning components as explanation, but also the way (a causal graph) in which these breakdowns have eventually caused the discrepancies. Another potential application area is legal reasoning where determining a legal responsability usually involves finding out which agent or agents have eventually caused a given result, regardless the chain of effects involved in the process. An important challenge in causal reasoning is the capability of not only deriving facts of the form
\mbox{``$A$ has caused $B$,''}
but also being able to represent them and reason about them. As an example, take the assertion: %the following:
\begin{itemize}
\item[]  ``If somebody causes an accident, (s)he is legally responsible for that.''
\end{itemize}

\noindent This law does not specify the possible ways in which a person may cause an accident. Depending on a representation of the domain, the chain of events from the agent's action(s) to the final effect may be simple (a direct effect) or involve a complex set of indirect effects and defaults like inertia. Regarding representation of the above law, for instance, one might think of an informal rule: %of the form:
\begin{gather*}
responsible(X,Y)
	\leftarrow action(A),\ person(X),\ accident(Y),\ 
	\text{``\ensuremath{do(A,X) \ \hbox{\tt caused} \ occurs(Y) }'' }.
\end{gather*}
\noindent If the pseudo-literal ``$do(A,X) \ \hbox{\tt caused} \ occurs(Y)$'' actually corresponds to an explicit representation of all the possible ways of causing an accident, however, one immediately runs into a problem of \emph{elaboration tolerance}~\cite{McC98} --- adding new rules that causally connect $do(A,X)$ to $occurs(Y)$ (in a direct or indirect way) would force us to build new rules for $responsible(X,Y)$. What is needed instead, and what we actually propose as an eventual aim and future extension of our work, is to introduce, indeed, some kind of new LP literal ``$A$ caused $B$,'' with \emph{an associated semantics} capable of revealing causes $A$ of a given true atom $B$.

While not straightforward, the rewarding perspective of such a semantic approach is an extension of Answer Set Programming (ASP)~\cite{BrewkaET11} with causal literals capable of representing different kinds of causal influences (sufficient cause, necessary cause, etc). In this paper, we tackle the above issue and, as a first step and basic underlying requirement, develop a suitable semantics capable of associating causal justifications with each true atom. To this end, we propose a multi-valued extension of logic programs under the stable model semantics~\cite{GL88} where each true atom in a model is associated with a set of justifications in the form of \emph{causal graphs}. To further illustrate our motivation, consider the following example.

\begin{example}[From~\citeNP{Cabalar11}]\label{ex:prison}
Some country has a law $l$ that asserts that driving drunk is punishable with imprisonment. On the other hand, a second law $m$ specifies that resisting arrest has the same effect. The execution $e$ of a sentence establishes that a punishment implies imprisonment. Suppose that some person drove drunk and resisted to be arrested.\qed
\end{example}
We can capture this scenario with the following logic program \newprog\label{prg:prison}:
\[
\begin{array}{rl@{\hspace{25pt}}rl@{\hspace{25pt}}rl}
l: & punish \leftarrow drive,\ drunk  & 
m: & punish \leftarrow resist        &
e: & prison \leftarrow punish        \\
d: & drive &
k: & drunk &
r: & resist
\end{array}
\]
\noindent The least model of this positive program makes atom $prison$ true, so we know that there exists a possible derivation for it. In particular, two alternative justifications can be made, corresponding to the graphs in Figure~\ref{fig:prisonA}: driving drunk and, independently, resisting to authority (vertices and edges respectively corresponds with rule labels and their dependences).
%\change{Although a formally treatment of pseudo-literals in the form of ``$do(A)\ \hbox{\tt caused}\ prison$'' is outside of the scope of this work, informally it would be interpreted as true when $A$ is $drive$ and $drunk$ altogether or $resist$. Such evaluation is easy to check in Figure~\ref{fig:prisonA} where labels $d$ and $k$, from $drive$ and $drunk$, and label $r$, from $resist$, are respectively the starting vertices of causal graphs $G_1$ and $G_2$. Other interesting pseudo-literals covering sufficient and necessary cause are also easy to check with respect to causal graphs.}

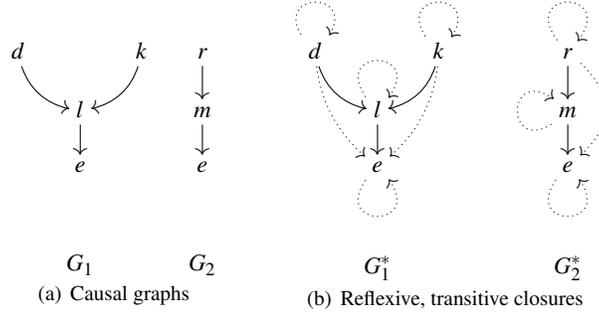
\begin{figure}\centering
\subfigure[Causal graphs]{ \label{fig:prisonA}
	$
	\xymatrix @-5mm {
	\\
	{d} \ar@/_/[dr] &             & {k} \ar@/^/[dl] & {r} \ar[d] \\
	                & {l} \ar[d]  &                 & {m} \ar[d] \\
	                & {e}         &                 & {e} \\ \\ 
	                & {G_1}       &                 & {G_2} 
	}
	$
}
\hspace*{20pt}
\subfigure[Reflexive, transitive closures]{ \label{fig:prisonB}
	$
	\xymatrix @-5mm {
	\\
	  {d} \ar@/_/[dr] \ar@{.>}@(ul,ur) \ar@/_/@{.>}[ddr] & & 
	  {k} \ar@/^/[dl] \ar@{.>}@(ul,ur) \ar@/^/@{.>}[ddl] &  \hspace{10pt} & 
	{r} \ar[d] \ar@{.>}@(ul,ur) \ar@/^{5mm}/@{.>}[dd] \\
	   & {l} \ar[d] \ar@{.>}@(ul,ur) & & &
	{m} \ar[d] \ar@{.>}@(dl,ul) \\
	   & {e} \ar@{.>}@(dl,dr) & & &
	{e} \ar@{.>}@(dl,dr)\\ \\
	            & {G^*_1} & & & {G^*_2}
	}
	$
}
\caption{Derivations $G_1$ and $G_2$ justifying atom $prison$ in program \progref{prg:prison}.}
\end{figure}
More specifically, we summarise our contributions as follows.
\begin{itemize}
\item We define a multi-valued semantics for (normal) logic programs based on causal graphs. An important result is that, despite of this semantic nature, we are able to show that causal values have a direct correspondence to (relevant) syntactic proofs using the program rules involved in the graphs (cf.~Section~\ref{sec:programs}).
\item We also define an ordering relation that specifies when a cause is \emph{stronger} than another, and show how causal values form a lattice with three associated algebraic operations: a product `$*$' representing conjunction or joint causation; a sum `$+$' representing alternative causes; and a non-commutative product `$\cdot$' that stands for rule application.
We study beneficial properties of these operations that allow manipulating and reasoning with causal values in an analytical way (cf.~Sections~\ref{sec:graphs} and~\ref{sec:alternatives}).
\end{itemize}

\vspace{-3pt}

\noindent Fostered by its algebraic treatment of causal values, our work facilitates the incorporation of dedicated, more specific causal expressions representing causal influence of different kinds.

%The rest of the paper is organised as follows. \comment{FILL HERE} Finally, Section~\ref{sec:conc} concludes the paper.

%%%%%%%%%%%%%%%%%%%%%%%%%%%%%%%%%%%%%%%%%%%%%%%%%%%%%%%%%%%%%%%%%%%%%%%%%%%%%%%%%%%%%%%
\vspace{-3pt}

\section{Causes as graphs}\label{sec:graphs}

\vspace{-2pt}
%%%%%%%%%%%%%%%%%%%%%%%%%%%%%%%%%%%%%%%%%%%%%%%%%%%%%%%%%%%%%%%%%%%%%%%%%%%%%%%%%%%%%%%

In this and subsequent Section~\ref{sec:alternatives}, we introduce the lattice of causal values in two different steps. In a first step, we focus on the idea of an ``individual'' cause and then we proceed to explain the concept of causal value that allows collecting different alternative causes.

We begin recalling several graph definitions and notation.
A (directed) \emph{graph} is a pair $\tuple{V,E}$ where $V$ is a set of vertices $V$ and $E$ is a set of edges $E \subseteq V \times V$. In the following definitions, let $G=\tuple{V,E}$ and $G'=\tuple{V',E'}$ be two graphs. We say that $G$ is a \emph{subgraph} of $G'$, written $G \subseteq G'$, when $V \subseteq V'$ and $E \subseteq E'$. We write $G \cup G'$ to stand for the graph $\tuple{V \cup V',E \cup E'}$. We represent the reflexive and transitive closure of $G$ as $G^*$. Finally, we introduce a \emph{concatenation} operation $G \odot G'$ on graphs corresponding to a graph with vertices $V \cup V'$ and edges $E \cup E' \cup \{(x,y) \ | \ x \in V, y \in V'\}$.
Notice that, $G \cup G' \subseteq G \odot G'$, that is, the concatenation extends the union of graphs by adding all possible arcs that go from some node in $G$ to some node in $G'$.

\begin{definition}[Causal graph]
Given some set $Lb$ of (rule) labels,
a \emph{causal graph} (\emph{c-graph}) $G$ is a reflexively and transitively closed directed graph, i.e., $G^*=G$, whose vertices are labels, i.e.
$V\subseteq Lb$. We denote by $\causes$ the set of all possible causal graphs over $Lb$.\qed
\end{definition}

\noindent
Intuitively, the vertices correspond to rules involved in a derivation of a given atom (or formula), and the edges point out a (partial) ordering of application of rules in the derivation. Figure~\ref{fig:prisonA} shows two causal graphs with labels from  \progref{prg:prison}. 
%Causal graphs captures not only the direct application of rules in a justification, but also a dependence relation among them. E.g., to form a causal graph, $G_2$ in Figure~\ref{fig:prisonA} would also include an arc $(r,e)$ meaning that the application of rule $e$ for $prison$ depends on $r$ for $resist$.
Transitivity is crucial for interpreting the subgraph relation $G \subseteq G'$ as a way to express that $G'$ is redundant with respect to $G$. For instance, a graph $G_3$ formed by the single edge $(r,e)$ \emph{is not a subgraph} of $G_2$ but is simpler (does not require using $m$). This fact is captured by $G^*_3 \subseteq G^*_2$. Reflexivity is convenient for simpler definitions. For instance, the causal graph formed by a single label $l$ also has a single edge $(l,l)$---we call this an \emph{atomic} causal graphs and represent it just by its label. For simplicity, we will usually omit transitive and reflexive arcs when depicting a causal graph. For instance, taking $G_1$ and $G_2$ in Figure~\ref{fig:prisonA} as causal graphs actually amounts to considering the graphs shown in Figure~\ref{fig:prisonB}, where previously omitted arcs are shown as dotted lines. We define next a natural ordering relation among them.

\begin{definition}[Sufficient]
A causal graph $G$ is \emph{sufficient} for another causal graph $G'$, written $G \leq G'$, when $G \supseteq G'$.\qed
\end{definition}

\noindent Saying that $G$ is sufficient for $G'$ intuitively means that $G$ contains enough information to yield the same effect than $G'$, but perhaps more than needed (this explains $G \supseteq G'$). For this reason, we sometimes read $G \leq G'$ as ``$G'$ is \emph{stronger} than $G$.''
% For instance, causal graph $G_2$ for $prison$ is obviously sufficient for justifying atom $punish$, although $e$ is not necessary for that purpose and can be removed, so that arc $(r,m)$ is actually enough. 
%Generally, a cause is sufficient for any other cause that, informally speaking, results from ``removing'' a part of the justification. As another example, $G_1$ is sufficient for the cause formed by (the closure of) arcs $(d,e)$ and $(k,e)$, which can be regarded as the result of removing $l$ in $G_1$.

Since graphs with the subgraph relation form a poset, the set of causal graphs also constitutes a poset $\tuple{\causes,\leq}$ with a top element corresponding to the empty c-graph $G_\emptyset=\tuple{\emptyset,\emptyset}$. This stands for a kind of ``absolute truth'' and is of interest for including rules or facts one does not want to label, that is, their effect will not be traced in the justifications.

Any causal graph can be built up from labels (atomic causal graphs) using two basic operations: the product $G * G' \ \eqdef \ (G \cup G')^*$ that stands for union of causal graphs or \emph{joint interaction}, and the concatenation $G \cdot G'\ \eqdef \ (G \odot G')^*$ that captures their \emph{sequential application}. The reason for applying a closure is that the result of $G \cup G'$ and $G \odot G'$ does not need to be closed under transitivity. We can extend the product to any (possibly empty and possibly infinite) set of causal graphs $S$ so that
$\prod S \ \eqdef \ \big( \ \bigcup_{G \in S} G \ \big)^*$.

%Intuitively, $G * G'$ represents the concurrence of $G$ and $G'$ in the derivation of a rule body (a conjunction). 

\begin{examplecont}{ex:prison}
The cause for the body of $l$ in \progref{prg:prison} is the product of causes for $drive$ and $drunk$, that is $d * k$ formed with vertices $\{d,k\}$ and edges $\{(d,d),(k,k)\}$. As a result, the explanation of the rule head, $punish$, is formed by the concatenation of its body cause $d * k$ with its label, that is $(d * k) \cdot l$. In its turn, this becomes the cause for the body of $e$ and so, we get the explanation $(d*k)\cdot l \cdot e$ for atom $prison$ represented as $G_1$ in Figure~\ref{fig:prisonB}. Similarly, $G_2$ corresponds to $r \cdot m \cdot e$. \qed
\end{examplecont}

\noindent When writing these causal expressions, we assume that `$\cdot$' has higher priority than `$*$'. Furthermore, we will usually omit `$\cdot$'  when applied to consecutive labels, so that $r \cdot m \cdot e$ will be abbreviated as $rme$. It is easy to see that $G * G' = G' * G$ while, in general, $G \cdot G' \neq G' \cdot G$, that is, concatenation is not commutative. Another observation is that $G \cdot G' \leq G * G'$, that is, concatenation is sufficient for the product, but the opposite does not hold in general. Moreover, in our running example, we can check that $(d*k) \cdot l$ is equal to $(d \cdot l) * (k \cdot l)$. In fact, application distributes over products and, as a result, we can identify any causal graph with the product of all its edges.
%In our example, $G_1$ in Figure~\ref{fig:prisonB} corresponds to the product of $d l * k l * l e$ with the reflexive edges $d d * k k * l l * e e$ and the transitive transitive edges $d e * k e$. We will see later that, in fact, reflexive and transitive edges actually follow from $d l * k l * l e$ using some algebraic properties commented in the next section.
To conclude this section, we note that the set of causal graphs $\causes$ ordered by $\leq$ forms a
lower semilattice $\tuple{\causes, *}$, where the product constitutes the infimum.

\section{Alternative causes}\label{sec:alternatives}

Having settled the case for individual causes, let us now proceed to represent situations in which several alternative and independent causes can be found for an atom $p$. The obvious possibility is just using a \emph{set of causal graphs} for that purpose.
%In our example, a causal interpretation of $punish$ would be a set containing $(d*k)\cdot l$ and $r m$.
However, we should additionally disregard causes for which a stronger alternative exists. For instance, as we saw before, cause $r m e$ is sufficient for explaining $punish$ and therefore, it is also an alternative way to prove
this atom, but redundant in the presence of the stronger cause $r m$. This suggests to choose sets of $\leq$-maximal causal graphs as `truth values' for our multi-valued semantics. In principle, this is the central idea, although $\leq$-maximal causal graphs incur some minor inconveniences in mathematical treatment. For instance, if we collect different alternative causes by just using the union of sets of maximal causal graphs, the elements in the result need not be maximal. Besides, the operations of product and concatenation are expected to extend to the sets adopted as causal values. To address these issues, a more solid representation is obtained resorting to \emph{ideals} of causal graphs.

Given any poset $\tuple{A,\leq}$, an \emph{ideal} $I$ is any set $I \subseteq A$ satisfying\footnote{We use terminology from~\cite{Stumme97}. In some texts this is known as \emph{semi-ideal} or \emph{down-set} to differentiate this definition from the stronger case in which ideals are applied on a (full) lattice rather than a semi-lattice.}: if $x \in I$ and $y \leq x$ then $y \in I$. A compact way of representing an ideal $I$ is by using its set of maximal elements $S$, since the rest of $I$ contains \emph{all elements below} them. The \emph{principal ideal} of an individual element $x \in A$ is denoted as
$\downarrow x \ \eqdef \ \{y \in A \mid y \leq x\} $. We extend this notion for any set of elements $S$ so that
\mbox{$\downarrow S \ \eqdef \ \bigcup \{\downarrow x \mid x \in S\} = \{y \in A \mid y \leq x, \text{for some } x \in S\}$}.
Thus, we will usually represent an ideal $I$ as $\downarrow S$ where $S$ are the maximal
\footnote{Note that, in the case of causal graphs, the existence of maximal elements for the $\leq$-relation amounts to the existence of minimal elements for the subgraph relation, and this holds since the latter is well-founded.}
elements in $I$. In fact, maximal elements constitute the relevant information provided by the ideal, while keeping all other elements is convenient for simplicity of algebraic treament (but we do not assign a particular meaning to them).

\vspace{-1pt}

\begin{definition}[Causal Value]
Given a set of labels $Lb$, a \emph{causal value} is any ideal for the poset $\tuple{\causes,\leq}$.
We denote by $\VLb$ the set of causal values.\qed
\end{definition}

\vspace{-1pt}

% \noindent
% The product of causal graphs maps to the intersection of values, that is \mbox{$\down{(G_1 * G_2)} = \down{G_1} \cap \down{G_2}$}.
% Analogously, the ordering relation $\leq$ among causal graphs maps to the subset relation $\subseteq$ among causal values, i.e. $G_1 \leq G_2$ iff $\down{G_1} \subseteq\ \down{G_2}$. In fact, for ideals $U, U'$ we will keep the notation $U \leq U'$ to stand for $U \subseteq U'$. Concatenation also extends easily to any pair $U, U'$ of causal values as:
% %\begin{align*}
% $U\cdot U' \ \eqdef \ \down{\setm{ G \cdot G' }{ G \in U \text{ and } G' \in U'}}.$
% %\end{align*}
% Finally, the union of causal values allows for collecting alternative causes stemming from different rules. 
% %This will be represented later using an addition operation. 

% \begin{examplecont}{ex:prison}
% The interpretation for $punish$ has two alternative causes $(d*k)\cdot l$ and $r m$ that become the causal values $\down{(d*k)\cdot l}$ and
% $\down{r m}$. The causal value for $punish$ is then formed by their union:
% \vspace{-5pt}
% \[
% \down{(d*k)\cdot l} \ \ \ \cup \ \ \down{rm} \ \ 
% = \ \ \down{\set{ \ (d*k)\cdot l, \ rm \ } }
% \vspace{-5pt}
% \]
% This ideal contains, among others, the cause $r m e$, although it is not maximal due to $r m$:
% \begin{IEEEeqnarray*}{c+x*}
% \down{ \set{ \ (d*k)\cdot l, \ rm \ } } \ \ \ \cup \ \down{ rme } \ = \ \down{  \set{ \ (d*k)\cdot l, \ rm, \ rme } } \ = \ 
% \down{ \set{ \ (d*k)\cdot l, \ rm \ } } & $\Box$
% \end{IEEEeqnarray*}
% %\[
% %\vspace{-5pt} 
% %\]
% \end{examplecont}

\noindent
Product, concatenation and the $\leq$-relation are easily extended to any pair $U$, $U'$ of causal values respectively as: \mbox{$U*U' \ \eqdef \ U\cap U'$} and $U\cdot U' \ \eqdef \ \down{\setm{ G \cdot G' }{ G \in U \text{ and } G' \in U'}}$ and $U \leq U'$ iff $U \subseteq U'$. We also define addition as: \mbox{$U+U' \ \eqdef \ U\cup U'$} allowing to collect alternative causes. \change{Using terminology and results from lattice theory in~\cite{Stumme97} we can prove the following.}

\vspace{-1pt}

\begin{theorem}\label{theorem:freelattice}
{$\VLb$ forms a free, completely distributive lattice with join $+$ and meet $*$ generated by the lower semilattice $\tuple{\causes,*}$ with the injective homomorphism (or embedding) $\downarrow:\causes\longrightarrow\VLb$ 
.\qed}
% $\tuple{\VLb,\cup,\cap}$ is the free completely distributive lattice 
\end{theorem}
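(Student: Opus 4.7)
The plan is to reduce the statement to a standard result of lattice theory: the down-set lattice of a meet-semilattice is the free completely distributive lattice it generates (Stumme 1997). Accordingly, I would check four ingredients one after the other: that $\tuple{\causes,*}$ is genuinely a meet-semilattice, that $\VLb$ is a completely distributive lattice under $+$ and $*$, that $\down{}$ is an injective semilattice embedding, and that it enjoys the universal property characterising freeness.

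First I would verify that $G*G' = (G\cup G')^*$ is the greatest lower bound of $G$ and $G'$ under $\leq$. Any causal graph $H$ satisfies both $H \leq G$ and $H \leq G'$ iff $H\supseteq G \cup G'$, and since $H$ is already reflexively and transitively closed this is equivalent to $H\supseteq (G\cup G')^* = G*G'$, i.e., $H\leq G*G'$. Hence $*$ is the binary meet and $\tuple{\causes,*,\leq}$ is a meet-semilattice. Second, I would observe that arbitrary unions and intersections of down-sets of any poset are again down-sets, so $\tuple{\VLb,\subseteq}$ is a complete lattice whose joins and meets coincide with the defined operations $+$ and $*$. Complete distributivity is the classical fact that the lattice of down-sets of an arbitrary poset is completely distributive. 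The embedding $\down{}:\causes \to \VLb$ is injective because $G$ is the unique $\leq$-maximum of $\down{G}$, and it preserves meets because $\down{(G*G')} = \down{G}\cap\down{G'}$, which follows immediately from the characterisation of $*$ as meet just established.

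Finally, to witness freeness I would show that any meet-preserving map $f:\causes \to L$ into a completely distributive lattice $L$ factors uniquely through $\down{}$ as $\hat f(I) = \bigvee_{G\in I} f(G)$. Uniqueness of the extension is forced by the identity $I = \bigvee_{G\in I}\down{G}$ together with join preservation of complete lattice homomorphisms. Preservation of arbitrary joins by $\hat f$ is immediate from the definition, so the main technical obstacle is meet preservation. This is precisely where complete distributivity of $L$ enters: it allows rewriting
\begin{equation*}
\hat f(I_1)*\hat f(I_2) \;=\; \Bigl(\bigvee_{G_1\in I_1} f(G_1)\Bigr)*\Bigl(\bigvee_{G_2\in I_2} f(G_2)\Bigr) \;=\; \bigvee_{G_1\in I_1,\, G_2\in I_2} f(G_1*G_2),
\end{equation*}
and then noting that $G_1*G_2\in I_1\cap I_2$ (it lies below both $G_1$ and $G_2$, which belong to the respective down-sets), while conversely every $H\in I_1\cap I_2$ is recovered as $H*H$; hence this join equals $\hat f(I_1\cap I_2)$. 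The whole argument hinges on this step, and in particular on $L$ being completely distributive rather than merely distributive, which is why the theorem can only assert freeness in that richer category.
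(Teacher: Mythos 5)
Your proof is correct in substance but follows a genuinely different route from the paper's. The paper does not verify the universal property directly: it invokes Stumme's theorem that the free completely distributive complete lattice generated by a partial lattice is the concept lattice $\underline{\mathfrak{B}}\tuple{\mathbf{F},\VLb,\Delta}$ of the filter--ideal context (with $F\Delta I$ iff $F\cap I\neq\emptyset$), and then spends most of its effort proving that this concept lattice is isomorphic to the ideal lattice $\tuple{\VLb,\cup,\cap}$ via $(\Id_t^I,\Id_t)\mapsto\bigcap\Id_t$, finally obtaining $\downarrow$ as the composition of Stumme's embedding with that isomorphism. Your argument instead establishes freeness from first principles: $*$ is the meet on $\causes$, down-sets form a completely distributive complete ring of sets, and the extension $\hat f(I)=\bigvee_{G\in I}f(G)$ is the unique complete homomorphism through which a meet-preserving $f$ factors, with complete distributivity of the target entering exactly at the meet-preservation step. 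What the paper's route buys is that freeness never has to be argued at all (it is inherited from the cited theorem), at the cost of a fairly heavy isomorphism verification; what your route buys is self-containedness and transparency about where distributivity is used. One point you should tighten: freeness in the category of \emph{completely} distributive lattices requires $\hat f$ to preserve \emph{arbitrary} meets, not just binary ones, and your displayed computation only treats the binary case. The same choice-function argument works for $\bigwedge_{t}\hat f(I_t)$, but only if the generating map $f$ is required to preserve the arbitrary infima $\prod S$ that exist in $\causes$ (the paper indeed treats $\tuple{\causes,*}$ as a partial lattice in which \emph{all} infima are defined); with merely binary meet preservation of $f$ one only gets $f(\prod_t\varphi(t))\leq\bigwedge_t f(\varphi(t))$, which is the wrong direction for infinite index sets. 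Stating the universal property at that level of generality closes the gap.
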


\vspace{-1pt}

\noindent
\change{Essentially, this theorem means that the mapping $\downarrow$ from c-graph $G$ to its principal ideal $\down{G}$ is preserved for their respective products, \mbox{$\down{(G_1 * G_2)}\, =\, \down{G_1}\ * \down{G_2}$}, and  ordering relations: \mbox{$G_1\leq G_2$} (among c-graphs) iff $\down{G_1} \leq\ \down{G_2}$ (among causal values).}

\vspace{-1pt}

\begin{examplecont}{ex:prison}
The interpretation for $punish$ has two alternative causes $(d*k)\cdot l$ and $r m$ that become the causal values $\down{(d*k)\cdot l}$ and
$\down{r m}$. The causal value for $punish$ is then formed by their addition:
\vspace{-5pt}
\[
\down{(d*k)\cdot l} \ \ \ + \ \ \down{rm} \ \ 
= \ \ \down{(d*k)\cdot l} \ \ \ \cup \ \ \down{rm} \ \ 
= \ \ \down{\set{ \ (d*k)\cdot l, \ rm \ } }
\vspace{-5pt}
\]
This ideal contains, among others, the cause $r m e$, although it is not maximal due to $r m$:
\vspace{-5pt}
\begin{IEEEeqnarray*}{c+x*}
\down{ \set{ \ (d*k)\cdot l, \ rm \ } } \ \ \ \cup \ \down{ rme } \ = \ \down{  \set{ \ (d*k)\cdot l, \ rm, \ rme } } \ = \ 
\down{ \set{ \ (d*k)\cdot l, \ rm \ } } & $\Box$
\end{IEEEeqnarray*}
%\[
%\vspace{-5pt} 
%\]
\end{examplecont}

\noindent
\change{The term completely distributive lattice in Theorem~\ref{theorem:freelattice} means that meet (resp. join) operation is defined for any infinite subset of $\VLb$ and distributes over infinite joins (resp. meets).
There is also a bottom element, the empty ideal $\emptyset$ (standing for ``falsity'') that will be denoted as $0$, and a top element, the ideal formed by the empty causal graph $\down{ G_\emptyset}=\causes$ (standing for ``absolute truth'') that is denoted as $1$ from now on.}
% By abuse of terminology we will stick to the term ``cause'' referring to any causal value of the form $\down{ G}$ where $G$ is a cause.
To improve readability, we introduce the syntactic notion of \emph{causal terms}, that allow representing the possible causal values without explicitly resorting to graphs or ideals.

\vspace{-3pt}

\begin{definition}[Causal term]
A \emph{(causal) term}, $t$, over a set of labels $Lb$, is recursively defined as one of the following expressions $t ::= l \ | \ \prod S \ | \ \sum S \ | \ t_1 \cdot t_2$ where $l \in Lb$, $t_1, t_2$ are in their turn causal terms and $S$ is a (possibly empty and possible infinite) set of causal terms. When $S$ is finite and non-empty, $S=\{t_1, \dots, t_n\}$ we write $\prod S$ simply as $t_1 * \dots * t_n$ and $\sum S$ as $t_1 + \dots + t_n$.\qed
\end{definition}

\noindent We assume that `$*$' has higher priority than `$+$'.
% defined as follows:
% \[
% \begin{array}{rcl@{\hspace{20pt}}rcl}
% value(l) &\eqdef & \downarrow l &
% value(t_1 \cdot t_2 ) & \eqdef & value(t_1)\cdot value(t_2) \\
% value\Big( \sum S \Big) &\eqdef& \bigcup \setbm{value(t)}{t\in S} &
% value\Big( \prod S \Big) &\eqdef& \bigcap \setbm{value(t)}{t\in S}
% \end{array}
% \]
The causal value associated to a causal term is naturally obtained by the recursive application of its operators until we reach the level of labels, so that each label $l$ in the term actually stands for the principal ideal $\down{l}$.
%This ideal collects the set of causes with $l$ among their vertices, that is, $\downarrow l = \{ G \in \causes \mid G=\tuple{V,E}, \text{and} \ l \in V\}$. 
When $S=\emptyset$, the union in $S$ corresponds to the bottom causal value that is, $\sum \emptyset = 0$. Analogously, the intersection of elements in $S=\emptyset$ corresponds to top causal value, i.e., $\prod \emptyset = 1$.

From now on, we will use causal terms as compact representations of causal values.
Individual causes (i.e. causal graphs) correspond to terms without addition (note that this also excludes $0$, the empty sum). 
%Similar to the distributivity of `$*$' and `$\cdot$' for causes, 
Several interesting algebraic properties can be proved for causal values. In particular, Theorem~\ref{theorem:freelattice} guarantees that they form a free completely distributive lattice with respect to `$*$' and `$+$' satisfying the standard properties
%\footnote{See Figure~\ref{fig:DBLattice}. For readability sake, we just give there the properties for the case of finite sums and products.}
such as associativity, commutativity, idempotence, absorption or distributivity on both directions\footnote{\change{The term ``free lattice'' in Theorem~\ref{theorem:freelattice} means that \emph{any} equivalence with $*$ and $+$ can be derived from these properties.}}. Besides, as usual, $0$ (resp. $1$) is the annihilator for `$*$' (resp. `$+$') and the identity for `$+$' (resp. `$*$'). More significantly, the main properties for `$\cdot$' are shown in Figure~\ref{fig:appl}.

\vspace{-10pt}

\begin{figure}[htbp]
\begin{center}
\newcommand{\titleSep}{-5pt}
\newcommand{\contentSep}{-15pt}
\newcommand{\rowSep}{5pt}
$
\begin{array}{c}
\hbox{\em Associativity}\vspace{\titleSep}\\
\hline\vspace{\contentSep}\\
\begin{array}{r@{\ }c@{\ }r@{}c@{}l c r@{}c@{}l@{\ }c@{\ }l@{\ }}
t & \cdot & (u & \cdot & w) & = & (t & \cdot & u) & \cdot & w\\
\\
\end{array}
\end{array}
$
\ \ \ \
$
\begin{array}{c}
\hbox{\em Absorption}\vspace{\titleSep}\\
\hline\vspace{\contentSep}\\
\begin{array}{r@{\ }c@{\ }c@{\ }c@{\ }l c r@{\ }c@{\ }r@{\ }c@{\ }c@{\ }c@{\ }c@{\ }l@{\ }}
&& t &&& = & t & + & u & \cdot & t & \cdot & w \\
u & \cdot & t & \cdot & w & = & t & * & u & \cdot & t & \cdot & w
\end{array}
\end{array}
$
\ \ \ \
$
\begin{array}{c}
\hbox{\em Identity}\vspace{\titleSep}\\
\hline\vspace{\contentSep}\\
\begin{array}{rc r@{\ }c@{\ }l@{\ }}
t & = & 1 & \cdot & t\\
t & = & t & \cdot & 1
\end{array}
\end{array}
$
\ \ \ \
$
\begin{array}{c}
\hbox{\em Annihilator}\vspace{\titleSep}\\
\hline\vspace{\contentSep}\\
\begin{array}{rc r@{\ }c@{\ }l@{\ }}
0 & = & t & \cdot & 0\\
0 & = & 0 & \cdot & t\\
\end{array}
\end{array}
$
\\
\vspace{\rowSep}
$
\begin{array}{c}
\hbox{\em Indempotence}\vspace{\titleSep}\\
\hline\vspace{\contentSep}\\
\begin{array}{r@{\ }c@{\ }l@{\ }c@{\ }l }
l & \cdot & l  & = & l\\
\\
\\
\end{array}
\end{array}
$
\hspace{.05cm}
$
\begin{array}{c}
\hbox{\em Addition\ distributivity}\vspace{\titleSep}\\
\hline\vspace{\contentSep}\\
\begin{array}{r@{\ }c@{\ }r@{}c@{}l c r@{}c@{}l@{\ }c@{\ }r@{}c@{}l@{}}
t & \cdot & (u & + & w) & = & (t & \cdot & u) & + & (t & \cdot & w)\\
( t & + & u ) & \cdot & w & = & (t & \cdot & w) & + & (u & \cdot & w)\\ \\
\end{array}
\end{array}
$
\hspace{.05cm}
$
\begin{array}{c}
\hbox{\em Product\ distributivity}\vspace{\titleSep}\\
\hline\vspace{\contentSep}\\
\begin{array}{rcl}
c \cdot d \cdot e & = & (c \cdot d) * (d \cdot e) \ \ \ \hbox{with} \ d \neq 1 \\
c \cdot (d*e)     & = & (c \cdot d) * (c \cdot e) \\
(c*d) \cdot e     & = & (c \cdot e) * (d \cdot e)
\end{array}
\end{array}
$
\end{center}
\vspace{-5pt}
\caption{Properties of the `$\cdot$' operator ($c,d,e$ are terms without `$+$' and $l$ is a label).}
\label{fig:appl}
\end{figure}

%%%%%%%%%%%%%%%%%%%%%%%%%%%%%%%%%%%%%%%%%%%%%%%%%%%%%%%%%%%%%%%%%%%%%
\vspace{-15pt}
\section{Positive programs and minimal models} \label{sec:programs}
%%%%%%%%%%%%%%%%%%%%%%%%%%%%%%%%%%%%%%%%%%%%%%%%%%%%%%%%%%%%%%%%%%%%%

Let us now reconsider logic programs and provide a semantics based on the causal values we have just defined. For the syntax, we recall standard LP definitions, just slightly extending it by introducing rule labels. A \emph{signature} is a pair \signature\ of sets that respectively represent a set of \emph{atoms} (or \emph{propositions}) and a set of \emph{labels}. As usual, a \emph{literal} is defined as an atom $p$ (positive literal) or its default negation $\Not p$ (negative literal). In this paper, we will concentrate on programs without disjunction in the head (leaving its treatment for future work).

\begin{definition}[Causal logic program]\label{def:causal.P}
Given a signature $\langle At,Lb\rangle$, a \emph{(causal) logic program} $P$ is a (possible infinite) set of rules of the form:
\vspace{-12pt}
\begin{eqnarray}
t: H \leftarrow B_1, \dotsc, B_n, \label{f:rule} 
\end{eqnarray}

\vspace{-5pt}
\noindent where $t\in Lb\cup\set{1}$, $H$ is an atom (the \emph{head} of the rule) and $B_1, \dotsc, B_n$ are literals (the \emph{body}).\qed
\end{definition}

\noindent For any rule $\R$ of the form \eqref{f:rule} we define $label(\R)\eqdef t$. We denote by $head(\R) \eqdef H$ its \emph{head}, and by $body(\R)\eqdef \{B_1,\dots,B_n\}$ its \emph{body}. When $n=0$ we say that the rule is a \emph{fact} and omit the symbol `$\leftarrow$.' When $t \in Lb$ we say that the rule is labelled; otherwise $t=1$ and we omit both $t$ and `$:$'. By these conventions, for instance, an unlabelled fact $p$ is actually an abbreviation of $(1: p \leftarrow )$. A logic program $P$ is \emph{positive} if it contains no default negation. A program is \emph{uniquely labelled} if no pair of labelled rules share the same label, and \emph{completely labelled} if, additionally, all rules are labelled. For instance, \progref{prg:prison} is completely labelled.

Given a signature \signature\ a \emph{causal interpretation} is a mapping $I:At~\longrightarrow~\VLb$ assigning a causal value to each atom.
For any interpretations
$I,$ $J$,
we say that
$I\leq J$ when $I(p) \leq J(p)$ for each atom $p \in At$.
Hence, there is a
\mbox{$\leq$-bottom} (resp. $\leq$-top) interpretation \bottomI\
(resp. \topI) that stands for the interpretation mapping each atom $p$ to $0$ (resp. $1$). The value assigned to a negative literal $\Not p$ by an interpretation $I$, denoted as $I(\Not p)$, is  defined as: $I(\Not p) \eqdef 1$ if
\mbox{$I(p)=0$}; and $I(\Not p)\eqdef 0$ otherwise.
An interpretation is \emph{two-valued} if it maps all atoms to $\{0,1\}$.
Furthermore, for any causal interpretation, its corresponding two-valued interpretation, written $I^{cl}$, is defined so that for any atom $p$: $I^{cl}(p) \eqdef 0$ if $I(p)=0$; and $I^{cl}(p)\eqdef 1$ otherwise.

\begin{definition}[Causal model]\label{def:causal.M}
Given a positive causal logic program $P$, a causal interpretation $I$ is a \emph{causal model}, in symbols $I \models P$, if and only if, for each rule $\R \in P$ of the form \eqref{f:rule}, the following condition holds:
\begin{IEEEeqnarray*}{c+x*}
\big( \ I(B_1) * \dotsc * I(B_n) \ \big) \cdot t \leq I(H) & $\Box$
\end{IEEEeqnarray*}
\end{definition}

\begin{examplecont}{ex:prison}
Take rule $l$ from program \progref{prg:prison} and let $I$ be such that $I(drive)=d$ and $I(drunk)=k$. Then $I$ will be a model of $l$ when $(d*k)\cdot l \leq I(punish)$. In particular, this holds when \mbox{$I(punish)=(d*k)\cdot l+ r\cdot m$} which was the value we expected for that atom. But it would also hold when, for instance, $I(punish)=l+m$ or $I(punish)=1$. The inequality in Definition~\ref{def:causal.M} is important to accommodate possible additional facts such as $(l: punish)$ or even $(1:punish)$ in the program.\qed
\end{examplecont}

\noindent
The fact that any $I(punish)$ greater than $(d*k)\cdot l+r\cdot m$ also becomes a model clearly points out the need for selecting \emph{minimal} models. In fact, as it is the case for non-causal programs, positive programs have a $\leq$-least model that can be computed by iterating an extension of the well-known \emph{direct consequences operator}~\cite{vEK76}.

\begin{samepage}
\begin{definition}[Direct consequences]\label{def:tp}Given a positive logic program $P$ over signature $\tuple{At,Lb}$, the operator of \emph{direct consequences} is a function $T_P : \In  \longrightarrow \In$ such that, for any causal interpretation $I$ and any atom $p \in At$:
\begin{align*}
T_P(I)(p)
	\ \eqdef \ \sum \big\{ \ \big( \ I(B_1) * \dotsc * I(B_n) \ \big) \cdot t \ \mid \ (t: p \leftarrow B_1, \dotsc, B_n ) \in P \ \big\}
\end{align*}
\end{definition}
\end{samepage}

\begin{theorem}\label{theorem:tp.properties}
Let $P$ be a (possibly infinite) positive logic program with $n$ causal rules.
Then,
($i$) $\mathit{lfp}(T_P)$ is the least model of $P$, and
($ii$)  $\mathit{lfp}(T_P)=\tp{\omega}=\tp{n}$.\qed
\end{theorem}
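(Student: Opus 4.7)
The plan is to follow the classical fixpoint approach of \cite{vEK76}, lifted to the completely distributive lattice of causal values established by Theorem~\ref{theorem:freelattice}. The space $\In$ of causal interpretations, ordered pointwise by $\leq$, inherits from $\VLb$ the structure of a complete lattice, so any monotone operator on $\In$ admits a $\leq$-least fixpoint by Knaster--Tarski, and Kleene's theorem applies as soon as continuity is available.

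For part~(i), I first establish monotonicity of $T_P$, which reduces to monotonicity of `$*$', `$\cdot$' and `$+$' in each argument. These are immediate from their definitions in Section~\ref{sec:alternatives} as intersection, downward-closure of concatenations, and union of ideals respectively. Next, I verify that causal models of $P$ coincide with the post-fixpoints of $T_P$: unfolding Definition~\ref{def:causal.M} rule by rule and comparing with the sum in Definition~\ref{def:tp}, one sees that $I\models P$ iff $T_P(I)\leq I$, using that a sum lies below a value iff each of its summands does. Consequently every model lies above $\lfp(T_P)$, and $\lfp(T_P)$, being a fixpoint, is itself a model, yielding~(i).

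For part~(ii), I first show that $T_P$ is continuous on ascending chains, so that Kleene's theorem gives $\lfp(T_P)=\tp{\omega}$. Continuity of `$*$' and `$+$' with respect to directed joins is a direct consequence of the complete distributivity of $\tuple{\VLb,*,+}$ (Theorem~\ref{theorem:freelattice}), and continuity of `$\cdot$' follows from the addition-distributivity laws recorded in Figure~\ref{fig:appl}, applied pointwise to the ideals. Since each rule body is finite, the finite product in $T_P(I)(p)$ commutes with directed joins, and so does the outer (possibly infinite) sum over rules with head $p$.

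The main obstacle is the sharp bound $\tp{\omega}=\tp{n}$, for which I would give a structural argument on the shape of derivations. Any causal graph $G$ occurring in $\tp{\omega}(p)$ has vertex set $V(G)\subseteq Lb$ drawn from the $n$ labels appearing in $P$, so $|V(G)|\leq n$. Induction on $|V(G)|$ should establish that $G$ already belongs to $\tp{|V(G)|}(p)$: given any derivation of $p$ whose associated causal graph is $G$, I collapse repeated occurrences of each label along a derivation chain using the identities $l\cdot l=l$ and the reflexive-transitive closure built into causal graphs, obtaining a derivation of depth at most $|V(G)|$ that witnesses the same (or a stronger) graph. Since every such $G$ is therefore contributed by iteration $n$, no new values arise afterwards, $\tp{n}$ is a fixpoint of $T_P$, and the equality $\tp{\omega}=\tp{n}$ follows.
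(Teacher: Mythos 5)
Your treatment of part~(i) and of $\lfp(T_P)=\tp{\omega}$ is sound and takes a more self-contained route than the paper, which instead encodes causal programs into Generalized Annotated Logic Programming and imports monotonicity and continuity from that framework; your direct identification of the models of $P$ with the post-fixpoints of $T_P$, and the continuity argument via distribution of intersection, union and `$\cdot$' over directed unions of ideals, are both correct.

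The gap is in the finite bound $\tp{\omega}=\tp{n}$. Your induction hypothesis --- that any $G\in\tpp{\omega}{p}$ already belongs to $\tpp{|V(G)|}{p}$ --- is false for the programs the theorem actually covers, namely arbitrary positive programs that need not be completely or uniquely labelled. Take the chain of $n$ unlabelled rules $p_1\leftarrow$, $p_2\leftarrow p_1$, \dots, $p_n\leftarrow p_{n-1}$: the only causal graph ever assigned to $p_n$ is $G_\emptyset$, with $|V(G_\emptyset)|=0$, yet it first appears at stage $n$, not stage $0$. The same failure occurs when several rules share one label $l$: every value collapses to a one-vertex graph, but still needs $n$ iterations to reach $p_n$. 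Your collapsing step also tacitly assumes that a repeated label on a root-to-leaf chain of a derivation is a repeated \emph{rule} with the same head --- which is exactly unique labelling --- and, even then, splicing out the intermediate segment yields a \emph{strictly stronger} graph rather than the same one, so the argument only closes if you restrict attention to maximal elements of the ideal and recover the rest by downward closure. The paper handles all of this by first passing to a completely labelled variant $P'$ of $P$, proving the bound there via the \emph{height} (longest simple path) of \emph{maximal} causal graphs --- the key structural lemma being that a maximal graph produced at stage $k$ never repeats the label of its top rule inside the subgraphs for its body atoms, so height bounds the stage, and height is at most the number of labels $n$ --- and then transferring the result back to $P$ with a label-replacement lemma showing that $\tpp{k}{p}$ is obtained from the completely labelled iteration by substituting labels. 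Without both the reduction to a completely labelled program and the restriction to maximal graphs, the induction on $|V(G)|$ cannot go through as stated.
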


\noindent The proof of this theorem relies on an encoding of causal logic programs into \emph{Generalized Annotated Logic Programming} (GAP)~\cite{KiferS92} and applying existing results for that general multi-valued LP framework. Theorem~\ref{theorem:tp.properties} just guarantees that the least fixpoint of $T_P$ is well-behaved, but does not explain the nature of the obtained causal values. We illustrate next that these values have a direct relation to the syntactic idea of \emph{proof} in a positive program.
%\change{Let us illustrate now the existent relation between the interpretation of an atom and the common idea of \emph{proof} or derivation of such atom from a positive program.}

\begin{definition}\label{def:proof}
Given a positive program $P$, a \emph{proof} $\pi(p)$ of an atom $p$ can be recursively defined as a derivation: 
\begin{eqnarray*}
\pi(p) & \eqdef & \frac{\pi(B_1) \ \dotsc \ \pi(B_n)}{p} \  (\R),
\end{eqnarray*}
\noindent where $\R \in P$ is a rule with $head(\R)=p$ and $body(\R)=\{B_1, \dots, B_n\}$. When $n=0$, the  derivation antecedent $\pi(B_1) \ \dotsc \ \pi(B_n)$ is replaced by $\top$ (corresponding to the empty body).\qed
\end{definition}

\noindent
Each derivation in a proof is a particular application of Modus Ponens where, once the body (conjunction of literals $B$) of a rule $\R$ ($p\leftarrow B$) has been proved, then the head $p$ can be concluded.

\begin{figure}[htbp]
\vspace{-20pt}
\[
\cfrac{
	\cfrac{
	   \cfrac{\top}{drive} \ (d) \hspace{20pt}
	   \cfrac{\top}{drunk} \ (k)
	}
	{punish} \ (l)
}
{prison} \ (e)
\hspace{20pt}
\cfrac{
  \cfrac{
     \cfrac{\top}{resist} \ (r)
  }
  {punish} \ (m)
} 
{prison} \ (e)
\hspace{20pt}
\cfrac {
	\cfrac{
		\cfrac{
			\cfrac{
			   \cfrac{\top}{drive} \ (d) \hspace{20pt}
			   \cfrac{\top}{drunk} \ (k)
			}
			{punish} \ (l)
		}
		{sentence} \ (s)
	}
	{punish} \ (n)
}
{prison} \ (e)
\]
\caption{Some proofs for atom $prison$ (the rightmost proof is redundant).}
\label{fig:proofs}
\vspace{-20pt}
\end{figure}
%\vspace{5pt}
%\noindent \emph{Example~\ref{ex:prison} (continued)}
\begin{examplecont}{ex:prison}
\noindent Program \progref{prg:prison} is positive and, in fact, completely labelled, so we can identify each rule with its label. Atom $prison$ can be derived in \progref{prg:prison} using the two proofs on the left in Figure~\ref{fig:proofs}. These two proofs have a clear correspondence to causes $(d*k)\cdot l e$ and $r m e$ depicted in Figure~\ref{fig:prisonB}. In fact, the least model $I$ of \progref{prg:prison} assigns causal value $I(punish)=(d*k) \cdot l e + r m e$.\qed
%\vspace{5pt}
\end{examplecont}

\noindent Let $P$ be a positive, completely labelled program. Given a proof $\pi$, we define its graph $G_\pi$ as follows. For each sub-derivation in $\pi$ of the form $\pi(p)$ in Definition~\ref{def:proof} we include an edge $(l_i,m)$ where $m$ is the label of rule $R$ and $l_i$ is the label of the top-level rule in $\pi(B_i)$, for all $i=1,\dots,n$. The vertices in $G_\pi$ exclusively collect the labels in those edges. We define $graph(\pi) \ \eqdef \ G^*_\pi$. The two left proofs in Figure~\ref{fig:proofs} are then obviously mapped to the causal graphs in Figure~\ref{fig:prisonB}. If $\Pi$ is a set of proofs, we define
$graph(\Pi) \ \eqdef \  \{graph(\pi) \mid \pi \in \Pi \}$.

A proof can be sometimes redundant, in the sense that some of its derivation steps could be removed. A natural way of defining a non-redundant proof is resorting to its associated graph. We say that a proof $\pi(p)$ of an atom $p$ in a positive, completely labelled program $P$ is \emph{redundant} if there exists another proof of $p$, $\pi'(p)$, such that $graph(\pi(p)) \leq graph(\pi'(p))$, in other words, we can build another proof $\pi'$ with a smaller associated graph.

\begin{example}\label{ex:prison3}
Suppose that we introduce an atom $sentence$ which acts as a synonym for $punish$. Furthermore, assume law $m$ mentions $sentence$ as its head now, instead of $punish$. Hence, let \newprog\label{prg:prison3} be program:
\vspace{-4pt}
\[
\begin{array}{rl@{\hspace{25pt}}rl@{\hspace{25pt}}rl}
l: & punish \leftarrow drive, drunk  & d: & drive & n: & punish \leftarrow sentence\\
m: & sentence \leftarrow resist      & k: & drunk & s: & sentence \leftarrow punish\\
e: & prison \leftarrow punish        & r: & resist
\end{array}
\]
\vspace{-8pt}

\noindent Then, the rightmost proof shown in Figure~\ref{fig:proofs} together with its associated graph $(d*k) \cdot l s n e$ is redundant, since the (still valid) leftmost proof in Figure~\ref{fig:proofs} for $prison$ has an associated stronger cause (or smaller graph) $(d*k) \cdot l e$. Considering the positive loop formed by $n$ and $s$, one may wonder why it does not spoil the computation of $T_{\progref{prg:prison3}}$ to iterate forever (adding more and more concatenations of $n$ and $s$). The reason is that, at a given point, subsequent iterations yield redundant graphs subsumed by previous steps. In particular, the iteration of 
$T_{\progref{prg:prison3}}$ yields the steps:
\[
\begin{array}{rccccccc}
i & drive & drunk & resist & sentence & punish & prison\vspace{-5pt}\\
\hline\vspace{-15pt}\\
1 & d     & k     & r      & 0                & 0                & 0      \\ 
2 & d     & k     & r      & rm               & (d*k)\cdot l     & 0      \\ 
3 & d     & k     & r      & rm+(d*k)\cdot ls & (d*k)\cdot l+rmn & (d*k)\cdot le    \\ 
4 & d     & k     & r      & rm+(d*k)\cdot ls & (d*k)\cdot l+rmn & ((d*k)\cdot l+rmn)\cdot e 
\end{array}
\]
\noindent reaching a fixpoint at step $4$. The value for $sentence$ at step $4$ would actually be the sum of $rm$ (derived from $resist$) with the value of $punish$ in the previous step, $(d*k)\cdot l+rmn$ followed by~$s$. This corresponds to:
\begin{IEEEeqnarray*}{rCl?l}
rm + (\underbrace{(d*k)\cdot l+rmn}_{punish}) \cdot s &=& rm + (d*k)\cdot ls + rmns & \text{distributivity}\vspace{-0.5cm}\\
 &=& rm + rmns + (d*k)\cdot ls & \text{commutativity} \\
 &=& rm + rm\cdot ns + (d*k)\cdot ls & \text{associativity} \\
 &=& rm + 1 \cdot rm \cdot ns + (d*k)\cdot ls & \text{identity} \\
 &=& rm + (d*k)\cdot ls & \text{absorption for `$+$' and `$\cdot$'} 
\end{IEEEeqnarray*}

\noindent That is, iterating the loop $rmns$ is redundant since a stronger cause $rm$ was obtained before.\qed
\end{example}

\vspace{-8pt}
\begin{theorem}\label{th:proofs}
Let $P$ be a positive, completely labelled program, and $\Pi_p$ the set of non-redundant proofs of some atom $p$ with respect to $P$. If $I$ denotes the least model of $P$, then:
\begin{IEEEeqnarray*}{c+x*}
G \in graph(\Pi_p) \ \ \ \text{iff} \ \ \ G \ \text{is a maximal causal graph in } I(p) & $\Box$
\end{IEEEeqnarray*}
\end{theorem}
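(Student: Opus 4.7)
The plan is to show that the least model $I$ assigns to each atom $p$ the downward closure of the set of all its proof-graphs, from which both directions follow by reading off maximal elements. The crucial intermediate step is the structural identity that, for any proof $\pi = \frac{\pi_1 \ \dotsc \ \pi_n}{p}\ (\R)$ with $label(\R)=t$,
\[
graph(\pi) \ = \ \bigl(graph(\pi_1) * \dotsc * graph(\pi_n)\bigr) \cdot t.
\]
The non-trivial inclusion is that the outer concatenation does not produce edges missing from $G_\pi^*$: concatenation with $t$ inserts an edge $(v,t)$ from every vertex $v$ of the product, while $G_\pi$ only contains $(l_i, t)$ for the top-level labels of sub-proofs. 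The two graphs coincide after closure because, by an easy induction on proof depth, every vertex of $graph(\pi_i)$ has a directed path to $l_i$, which, composed with $(l_i, t)$, yields $(v, t) \in G_\pi^*$.

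Next, I would prove by induction on $n$ that
\[
\tpp{n}{p} \ = \ \down\setm{graph(\pi)}{\pi \text{ a proof of } p \text{ of depth } \leq n}.
\]
The base case is immediate. For the inductive step, Definition~\ref{def:tp} gives $\tpp{n}{p}$ as a sum over the rules for $p$, each contributing $(I(B_1)*\dotsc*I(B_m))\cdot t$ with $I = \tpp{n-1}{\cdot}$. Distributing `$\cdot$' over `$+$' expands this into a sum of terms $(G_1*\dotsc*G_m)\cdot t$, one for each tuple of sub-proof graphs of the $B_i$ of depth at most $n-1$; by the structural identity, these are precisely the graphs of depth-$\leq n$ proofs of $p$ whose root rule has label $t$. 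Taking the union over all depths and invoking Theorem~\ref{theorem:tp.properties} then yields $I(p) \ = \ \down\setm{graph(\pi)}{\pi \text{ a proof of } p}$.

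Finally, the $\leq$-maximal c-graphs of an ideal $\down S$ coincide with the $\leq$-maximal elements of $S$ itself, whose existence is guaranteed by the well-foundedness of the subgraph relation noted after Definition~4. By unfolding the definition of redundancy, a proof $\pi$ is non-redundant iff no proof $\pi'$ yields $graph(\pi) < graph(\pi')$, iff $graph(\pi)$ is maximal among all proof-graphs of $p$, iff $graph(\pi)$ is maximal in $I(p)$. This closes both directions of the stated equivalence.

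The main obstacle is verifying the structural identity in the first step, which requires a careful unfolding of `$*$' and `$\cdot$' in terms of union, concatenation, and closure, together with the auxiliary lemma that every vertex of a proof-graph reaches its root label. The remaining steps, while somewhat involved notationally, are routine once this identity is in place and one pushes it through the iteration of $T_P$.
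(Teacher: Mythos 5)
Your proposal is correct and follows essentially the same route as the paper: your structural identity is exactly Lemma~\ref{lem:proof.graph.from.subproofs} (proved there by the same induction on the claim that every vertex of a sub-proof graph reaches its top-level label), and pushing it through the iteration of $T_P$ corresponds to Lemmas~\ref{lem:tp.causes.composition}, \ref{lem:tp.proof->graph} and~\ref{lem:tp.proof<-graph}. The only organizational difference is that you package the inductive step as the single ideal equality $\tpp{n}{p}=\down{\setm{graph(\pi)}{\pi \text{ of depth} \leq n}}$ and then read off maximal elements, whereas the paper proves the two inclusions separately and extracts maximal elements via a dedicated decomposition lemma (Lemma~\ref{lem:tp.causes.composition.max}); the mathematical content is the same.
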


\noindent Note the importance of this result: it reveals that the information we obtain by a purely semantic treatment of causal values (computing the least model by algebraic operations) has a one-to-one correspondence to syntactic proofs obtained by modus ponens that are further guaranteed to be non-redundant (they do not contain unnecessary steps).
Completely labelled programs are interesting for establishing the correspondence in the theorem above, but there are several scenarios in which one may be interested in disregarding the effect of rules in a program or in identifying a group of rules under the same label.

\begin{example}\label{ex:prison4}~Let \newprog\label{prg:prison4} be the following variation of \progref{prg:prison3}:
\[
\begin{array}{rl@{\hspace{25pt}}rl@{\hspace{25pt}}rl}
z: & sentence \leftarrow drive, drunk  & d: & drive & punish \leftarrow sentence\\
z: & punish \leftarrow resist          & k: & drunk & sentence \leftarrow punish\\
e: & prison \leftarrow punish          & r: & resist
\end{array}
\]
\noindent where $l$ and $m$ in \progref{prg:prison3} are now just two cases of a common law $z$, and $punish$ and $sentence$ depend on each other through unlabelled rules.\qed
\end{example}

\noindent Removing the labels in the positive cycle between $sentence$ and $punish$ captures the idea that, since they are synonyms, whenever we have a cause for $sentence$, it immediately becomes a cause for $punish$ and vice versa. By iterating the $T_P$ operator, it is not difficult to see that the least causal model $I_{\ref{prg:prison4}}$ makes the assignments $I_{\ref{prg:prison4}}(sentence)=I_{\ref{prg:prison4}}(punish)=(d*k)\cdot z+rz$ (that is $sentence$ and $punish$ are equivalent) and $I_{\ref{prg:prison4}}(prison)=(d*k)\cdot ze + rze$. This result could also be computed from the least model $I_{\ref{prg:prison3}}$ for \progref{prg:prison3} by replacing $l$ and $m$ by $z$ and ``removing'' $n$ and $s$ (that is, replacing them by $1$). This is, in fact, a general property we formalise as follows. Given two causal terms $t,u$ and a label $l$, we define $t[l\mapsto u]$ as the result of replacing label $l$ in $t$ by term $u$.

\begin{theorem}\label{theorem:least.model.label.replacing}
Let $P$ be a positive causal logic program and $P'$ be the result of replacing a label $l$ in $P$ by some $u$, where $u$ is any label or $1$. Furthermore, let $I$ and $I'$ be the least models of $P$ and $P'$, respectively.
Then, $I'(p)=I(p)[l \mapsto u]$ for any atom $p$.\qed
\end{theorem}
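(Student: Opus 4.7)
The plan is to use Theorem~\ref{theorem:tp.properties} to reduce the claim to a comparison between the iterations of $T_P$ and $T_{P'}$ starting from $\bottomI$, and then to prove by induction on the iteration stage $n$ that $\tPp{P'}{n}(p) = \tPp{P}{n}(p)[l \mapsto u]$ for every atom $p$. Since $I = \tp{\omega}$ and $I' = T_{P'}\uparrow^{\omega}(\bottomI)$, the desired equality follows by passing to the supremum over iteration stages, provided that $[l \mapsto u]$ commutes with the joins used to build $\tp{\omega}$.

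The preparatory step is to upgrade the syntactic substitution $[l \mapsto u]$ from causal terms to a well-defined operation on causal values. By Theorem~\ref{theorem:freelattice}, $\VLb$ is the free completely distributive lattice generated by the semilattice $\tuple{\causes,*}$, so it suffices to define the substitution on the generators (causal graphs) in a way compatible with the product and then extend by freeness. On a causal graph $G$, one replaces every occurrence of the vertex $l$ by $u$; when $u = 1$, the vertex $l$ is deleted but all edges witnessing transitive connections through $l$ (already present in $G$ because $G = G^{*}$) are retained, which matches the identity $1\cdot t = t\cdot 1 = t$ on the algebraic side. This map respects the product of graphs and therefore extends to a lattice homomorphism on $\VLb$; by complete distributivity, the extension also commutes with arbitrary sums and with the concatenation operator $\cdot$.

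For the inductive step, observe that the rules of $P'$ are in bijection with those of $P$: a rule $(t: p \leftarrow B_1, \dots, B_k) \in P$ yields the rule $(t[l \mapsto u]: p \leftarrow B_1, \dots, B_k) \in P'$. Applying Definition~\ref{def:tp} to $T_{P'}$ and using the inductive hypothesis $\tPp{P'}{n}(B_i) = \tPp{P}{n}(B_i)[l \mapsto u]$ inside each contribution, the homomorphism properties of $[l \mapsto u]$ pull the substitution outside of the $*$, $\cdot$, and $\sum$ operators, yielding $\tPp{P'}{n+1}(p) = \tPp{P}{n+1}(p)[l \mapsto u]$. The base case $n = 0$ is immediate since $\bottomI(p) = 0$ and $0[l \mapsto u] = 0$. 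Because $[l \mapsto u]$ preserves joins, the equality propagates to $n = \omega$, completing the argument.

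The main obstacle is showing that $[l \mapsto u]$ is genuinely well-defined on causal values rather than merely on the syntactic terms that represent them: two terms evaluating to the same ideal must have substituted forms evaluating to the same ideal. The delicate case is $u = 1$, where $l$ is erased from graphs but must leave behind its transitive contribution, since otherwise the identity and absorption laws of Figure~\ref{fig:appl} would be violated and the claimed lattice homomorphism could not exist. Once well-definedness is secured through the free-lattice characterisation, the remaining induction only propagates the substitution through the algebraic definition of $T_P$ and requires no further subtlety.
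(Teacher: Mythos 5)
Your proposal is correct and follows essentially the same route as the paper: both reduce the claim to the fixpoint characterisation $I=\tp{\omega}$ from Theorem~\ref{theorem:tp.properties} and then establish $\tPpp{P'}{k}{p}=\tpp{k}{p}[l\mapsto u]$ by induction on the iteration stage $k$, passing to $k=\omega$ at the end (this is exactly the paper's Lemma~\ref{lem:tp.label.replacing}). The only difference is packaging: where you secure well-definedness and homomorphy of $[l\mapsto u]$ on $\VLb$ through the free-lattice characterisation of Theorem~\ref{theorem:freelattice}, the paper instead argues element-wise on the causal graphs inside each ideal (via the membership characterisation of Lemma~\ref{lem:tp.causes.composition}) and separately relates term-level to value-level substitution in Lemma~\ref{lem:term.label.replacing}.
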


\noindent In particular, in our example, $I_{\ref{prg:prison4}}(p) = I_{\ref{prg:prison3}}(p) [l\mapsto z][m \mapsto z][n \mapsto 1][s \mapsto 1]$, for any atom $p$.
If we remove all labels in a program, we eventually get a standard, unlabelled program. Obviously, its least model will be two-valued, since removing all labels in causal terms, eventually collapses all of them to $\{0,1\}$. As a result, we can easily establish the following correspondence.

\begin{theorem}\label{theorem:least.model.classical.correspondence}
Let $P$ be a causal positive logic program and $P'$ its unlabelled version. Furthermore, let $I$ be the least causal model of $P$ and $I'$ the least classical model of $P'$. Then $I'=I^{cl}$. \qed
\end{theorem}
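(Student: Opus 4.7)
The plan is to derive this from Theorem~\ref{theorem:least.model.label.replacing} by substituting every label appearing in $P$ with the identity $1$, thereby reducing $P$ to a causal program whose consequence operator agrees with the classical one for $P'$.

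First I would enumerate the labels occurring in $P$ as $l_1,\ldots,l_k$ (the case where this set is infinite is analogous, applying the substitution simultaneously via an obvious extension of Theorem~\ref{theorem:least.model.label.replacing}) and iterate Theorem~\ref{theorem:least.model.label.replacing} to conclude that the least causal model $\tilde I$ of the program $\tilde P$ obtained from $P$ by replacing each $l_i$ by $1$ satisfies
\[
\tilde I(p) \ = \ I(p)[l_1 \mapsto 1]\cdots[l_k \mapsto 1]
\]
for every atom $p$. Syntactically, $\tilde P$ coincides with $P'$, since every rule now carries the implicit label~$1$.

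Next I would show that $\tilde I = I'$ under the identification of the causal values $0$ and $1$ with the classical truth values \emph{false} and \emph{true}. The subset $\{0,1\}$ is closed under `$+$', `$*$', and under `$\cdot$' whenever the only label involved is $1$: a direct check using the identity and annihilator laws of Section~\ref{sec:alternatives} shows that on $\{0,1\}$ these operations realize classical disjunction (for `$+$') and conjunction (for `$*$' and `$\cdot$'). Hence, starting from $\bottomI$, every iterate $T_{\tilde P}\!\uparrow^n(\bottomI)$ takes only values in $\{0,1\}$ (trivial induction on $n$), and on such two-valued interpretations $T_{\tilde P}$ coincides with the classical van~Emden--Kowalski consequence operator of $P'$. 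By Theorem~\ref{theorem:tp.properties} and the standard classical analogue, their least fixpoints agree, so $\tilde I(p) = I'(p)$ for every atom $p$.

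It then remains to identify $\tilde I$ with $I^{cl}$. By Theorem~\ref{theorem:freelattice}, every causal value admits a (possibly infinite) causal-term representation as a sum of products/concatenations of labels; applying the substitution $[l_i \mapsto 1]$ to each label collapses every such nonempty product to $1$ (by $1\cdot 1 = 1 = 1*1$), so any nonzero sum becomes $1$ while the empty sum stays $0$. Hence $\tilde I(p) = 1$ iff $I(p) \neq 0$, which is exactly the condition defining $I^{cl}(p)$, and $I' = \tilde I = I^{cl}$ follows. The delicate step is this last one: one has to justify rigorously that substituting $1$ for every label collapses every nonzero causal value to $1$. The cleanest route is a structural induction on causal terms, using Theorem~\ref{theorem:freelattice} to present every value by a term and then exploiting the identity laws $1\cdot t = t = t\cdot 1$ and $1 * t = t$ so that each generator disappears after the substitution.
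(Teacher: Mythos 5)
Your proposal is correct and follows essentially the same route as the paper: both arguments reduce to Theorem~\ref{theorem:least.model.label.replacing} applied once per label (the paper phrases this as an induction on the number of labelled rules, you as an iterated simultaneous substitution), combined with the observations that the fully unlabelled causal program has a two-valued least model agreeing with the classical one and that replacing a label by $1$ preserves the distinction between the value $0$ and nonzero values, so that the $(\cdot)^{cl}$ projection is unchanged. The extra detail you supply for the collapse of nonzero values to $1$ matches what the paper leaves implicit, so no gap.
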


%%%%%%%%%%%%%%%%%%%%%%%%%%%%%%%%%%%%%%%%%%%%%%%%%%%%%%%%%%%%%%%%%%%%%
\section{Default negation} \label{sec:negation}
%%%%%%%%%%%%%%%%%%%%%%%%%%%%%%%%%%%%%%%%%%%%%%%%%%%%%%%%%%%%%%%%%%%%%

To introduce default negation, let us consider the following variation of our running example.

\begin{example}\label{ex:prison2}
Assume now that law $e$ is a default and that there may be exceptional cases in which punishment is not effective. In particular, some of such exceptions are a pardon, that the punishment was revoked, or that the person has diplomatic immunity.
A possible program \newprog\label{prg:prison2} encoding this variant of the scenario is:
\[
\begin{array}{rl@{\hspace{25pt}}rl@{\hspace{25pt}}l}
l: & punish \leftarrow drive, drunk          & d: & drive & 
    abnormal \leftarrow pardon\\
m: & punish \leftarrow resist                & k: & drunk & 
    abnormal \leftarrow revoke \\
e: & prison \leftarrow punish, \Not abnormal & r: & resist &
    abnormal \leftarrow diplomat
\end{array}
\]
This program has a unique stable model which still keeps $prison$ true, since \emph{no proof for} $abnormal$ could be obtained, i.e. no exception occurred.
\qed
\end{example}

\noindent From a causal perspective, saying that the lack of an exception is part of a cause (e.g., for imprisonment) is rather counterintuitive. It is not the case that we go to prison because of not receiving a pardon, not having a punishment revocation, not being a diplomat, or whatever possible exception that might be added in the future\footnote{A case of the well-known \emph{qualification problem}~\cite{McC77}, i.e., the impossibility of listing all the possible conditions that prevent an action to cause a given effect. Appendix B (available online) contains a more elaborated example showing how the qualification problem may affect causal explanations when inertia is involved.}. Instead, as nothing violated default $e$, the justifications for $prison$ should be those shown in Figure~\ref{fig:prisonA}. In this way, falsity becomes the \emph{default situation} that is broken when a cause is found\footnote{The paper~\cite{Hitchcock2009cause} contains an extended discussion with several examples showing how people ordinarily understand causes as deviations from a norm.}. This interpretation carries over to negative literals, so that the presence of $\Not p$ in a rule body does not propagate causal information, but instead is a check for the absence of an exception. To capture this behaviour, we proceed to extend the traditional program reduct \cite{GL88} to causal logic programs.

\begin{samepage}
\begin{definition}[Program reduct]
The \emph{reduct} of program $P$ with respect to causal interpretation $I$, in symbols $P^I$, is the result of:
\vspace{-5pt}
\begin{enumerate}
\item removing from $P$ all rules $\R$, s.t.~$I(B) \neq 0$ for some negative literal $B\in\mathit{body}(\R)$;
\item removing all negative literals from the remaining rules of $P$.\qed
\end{enumerate}
\end{definition}
\end{samepage}

\noindent
An interpretation $I$ is a \emph{causal stable model} of program $P$ iff $I$ is the least causal model of $P^I$.

\begin{examplecont}{ex:prison2}
Suppose that we add atoms $(p: pardon)$ and $(d: diplomat)$ to program \progref{prg:prison2}. The only stable model $I$ of this extended program makes $I(prison)=0$ and $I(abnormal)=p+d$ as expected.\qed
\end{examplecont}

\begin{theorem}[Correspondence to non-causal stable models]\label{th:csm}
Let $P$ be a causal logic program and $P'$ its unlabelled version. Then:
\vspace{-5pt}
\begin{enumerate}
\item If $I$ is a causal stable model of $P$, then $I^{cl}$ is a stable model of $P'$.
\item If $I'$ is a stable model of $P'$ then there is a unique causal stable model $I$ of $P$ s.t. $I'=I^{cl}$.\qed
\end{enumerate}
\end{theorem}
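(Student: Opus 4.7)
The plan is to leverage the positive-program correspondence already established in Theorem~\ref{theorem:least.model.classical.correspondence} together with the observation that the reduct operation depends only on the two-valued projection $I^{cl}$ of a causal interpretation $I$. The first step is therefore to prove a commutation lemma:
\begin{equation*}
(P^I)'\ =\ (P')^{I^{cl}},
\end{equation*}
where $(P')^{I^{cl}}$ denotes the classical Gelfond--Lifschitz reduct of the unlabelled program $P'$. Verification is a routine unfolding: the condition in step~1 of the causal reduct inspects only $I(\Not p)$, whose value depends solely on whether $I(p)=0$, i.e., on $I^{cl}(p)$; hence the same set of rules is discarded in both reducts. Step~2 (deleting the negative literals from surviving rules) is identical in both constructions and commutes trivially with the removal of labels.

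For part~(1), let $I$ be a causal stable model of $P$, so that $I=\mathit{lfp}(T_{P^I})$. Since $P^I$ is a positive causal program, Theorem~\ref{theorem:least.model.classical.correspondence} gives that $I^{cl}$ is the least classical model of $(P^I)'$. Combining this with the commutation lemma yields that $I^{cl}$ is the least classical model of $(P')^{I^{cl}}$, which is exactly the defining condition for $I^{cl}$ to be a stable model of $P'$.

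For part~(2), given a stable model $I'$ of $P'$, I would define the candidate $I$ as the least causal model of the positive program $P^J$, where $J$ is any causal interpretation with $J^{cl}=I'$ (the choice is immaterial because, as noted above, the reduct depends only on the two-valued projection). By Theorem~\ref{theorem:least.model.classical.correspondence}, $I^{cl}$ is the least classical model of $(P^J)'$, which by the commutation lemma equals $(P')^{I'}$; but the latter's least classical model is $I'$ itself because $I'$ is a stable model of $P'$. Thus $I^{cl}=I'$, whence $P^I=P^J$, so $I=\mathit{lfp}(T_{P^I})$ and $I$ is a causal stable model of $P$. Uniqueness follows at once: any other causal stable model $K$ of $P$ with $K^{cl}=I'$ satisfies $P^K=P^{K^{cl}}=P^{I'}=P^J$ and therefore $K=\mathit{lfp}(T_{P^J})=I$. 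The main obstacle is really just the clean formulation and verification of the commutation lemma; once it is in hand, both directions reduce mechanically to Theorem~\ref{theorem:least.model.classical.correspondence}, and the well-definedness of the candidate in part~(2) is precisely what makes the stable-model construction canonical.
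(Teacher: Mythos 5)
Your proposal is correct and follows essentially the same route as the paper: the commutation of unlabelling with the reduct (the paper phrases it as ``$P^I=P^{I^{cl}}$ and $Q^{I^{cl}}$ is the unlabelled version of $P^I$'') combined with Theorem~\ref{theorem:least.model.classical.correspondence} applied to the positive reduct. Your explicit uniqueness argument in part~(2) is a welcome addition that the paper's proof leaves implicit.
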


\noindent
This theorem also shows a possible method for computing causal stable models of a program $P$. We may first run a standard ASP solver on the unlabelled version of $P$ to obtain a stable model~$I'$.
This stable model~$I'$ has a corresponding causal stable model $I$, such that $I'=I^{cl}$ and both interpretations coincide in their assignment of $0$'s. Therefore, $P^I=P^{I'}$ and we can use the latter to iterate the $T_P$ operator and obtain the least causal model of this reduct, which will mandatorily be a causal stable model due to Theorem~\ref{th:csm}.

%%%%%%%%%%%%%%%%%%%%%%%%%%%%%%%%%%%%%%%%%%%%%%%%%%%%%%%%%%%%%%%%%%%%%
\section{Related Work} \label{sec:related}
%%%%%%%%%%%%%%%%%%%%%%%%%%%%%%%%%%%%%%%%%%%%%%%%%%%%%%%%%%%%%%%%%%%%%

\citeN{Cabalar11} already introduced the main motivations of our work, but used \emph{ad hoc} operations on proof trees without resorting to algebraic structures. A preliminary version~\cite{CF13} of the current approach relied on chains of labels but was actually \emph{weaker}, missing basic properties we can derive now from causal graphs. 

There exists a vast literature on causal reasoning in Artificial Intelligence.
%with which we plan to establish future comparisons. 
%to which we briefly relate our work subsequently. 
Papers on reasoning about actions and change~\cite{Lin95,NMc97,Thi97} have been traditionally focused on using causal inference to solve representational problems (mostly, the frame, ramification and qualification problems) without paying much attention to the derivation of cause-effect relations. Perhaps the most established AI approach for causality is relying on \emph{causal networks}~\cite{Pearl00,HP05,Hal08}. In this approach, it is possible to conclude cause-effect relations like ``$A$ has caused $B$'' from the behaviour of structural equations by applying the counterfactual interpretation from \citeN{Hum1748}: ``had $A$ not happened, $B$ would not have happened.'' As discussed by~\citeN{Hall04}, the counterfactual-based definition of causation corresponds to recognising some kind of \emph{dependence} relation in the behaviour of a non-causal system description. As opposed to this, Hall considers a different (and incompatible) definition where causes must be connected to their effects via \emph{sequences of causal intermediates}, something that is closer to our explanations in terms of causal graphs.

Apart from the different AI approaches and attitudes towards causality, from the technical point of view, the current approach can be classified as a \emph{labelled deductive system}~\cite{BGLR04}. In particular, the work that has had a clearest and most influential relation to the current proposal is the \emph{Logic of Proofs} (\LP) by~\citeN{Art01}. We have borrowed from that formalism part of the notation for our causal terms and rule labellings and the fundamental idea of keeping track of justifications by considering  rule applications. 

Focusing on LP, our work obviously relates to explanations as provided by approaches to debugging in ASP~\cite{GPST08,PSE09,SST13,DAA13}.
% These works aim at explaining discrepancies between an expected result and the obtained stable models.
\citeN{PAA91} and \citeN{Denecker1993Justification} also define different semantics in terms of justifications, but do not provide calculi for them.
% Accordingly,
In these works, explanations usually contain all possible ways to derive an atom or to prevent its derivation, including paths through negation. This differs from a KR orientation where only the cause-effect relations that ``break the norm'' should be considered relevant. This point of view is also shared, e.g., by the counterfactual-based causal LP approach~\mbox{\cite{Vennekens11}}. \citeN{Fages1991} characterised stable models in terms of loop-free justifications expressed as partial order relations among atoms in positive bodies. We conjecture that the causal values obtained in our semantics formally capture Fages' justifications.
%
%For instance, in the ASP debugging approach~\cite{PSE09}, justifications also have the form of graphs, but negative labels are used there to show a dependence via negation. Another approach recently studied by~\citeN{DAA13} disregards the ordering of rule applications represented in the graph, but provides explanations in terms of boolean formulas constructed with atoms, rule labels and boolean operators: $\vee$ (corresponding to our $+$), $\wedge$ (corresponding to $*$), and $\neg$ to represent negative dependence.
%
A more far-fetched resemblance exists to work on the analysis of tabled Prolog computations. There, the goal is to identify potential causes for non-termination of program evaluations, which can be achieved examining so-called \emph{forest logs}, i.e., a log of table operations for a computation. By adding unique labels for rules (with the original intention to disambiguate analysis results, cf.~\citeN{lian-kife-13}, however not as an explicit means for representing knowledge), in principle a forest log implicitly contains the information necessary to read of the causal model of a completely labelled positive causal logic program. 

%%%%%%%%%%%%%%%%%%%%%%%%%%%%%%%%%%%%%%%%%%%%%%%%%%%%%%%%%%%%%%%%%%%%%
\section{Conclusions} \label{sec:conc}
%%%%%%%%%%%%%%%%%%%%%%%%%%%%%%%%%%%%%%%%%%%%%%%%%%%%%%%%%%%%%%%%%%%%%

In this paper we have provided a multi-valued semantics for normal logic programs whose truth values form a lattice of causal graphs.
A causal graph is nothing else but a graph of rule labels that reflects some order of rule applications. In this way, a model assigns to each true atom a value that contains justifications for its derivation from the existing rules. We have further provided three basic operations on the lattice: an addition, that stands for alternative, independent justifications; a product, that represents joint interaction of causes; and a concatenation that reflects rule application. We have shown that, for positive programs, there exists a least model that coincides with the least fixpoint of a direct consequences operator, analogous to~\mbox{\citeN{vEK76}}. With this, we are able to prove a direct correspondence between the semantic values we obtain and the syntactic idea of proof. These results have been extrapolated to stable models of programs with default negation, understanding the latter as ``absence of cause.''
Although, for space reasons, we have not dealt with programs with variables, their semantics is obtained from their (possibly infinite) grounding, as usual.

Several topics remain open for future study. An interesting issue is to replace the syntactic definition by a reduct in favour of a logical treatment of default negation, as has been done for (non-causal) stable models and their characterisation in terms of Equilibrium Logic \cite{Pearce06}. 
%This would surely simplify the quest for a necessary and sufficient condition for strong equivalence, following similar steps to~\cite{LPV01}. It may also allow extending the definition of causal stable models to an arbitrary syntax and to the first order case, where the use of variables in labels may also introduce new interesting features. 
Regarding the representation of causal information, a natural next step would be the consideration of syntactic operators for more specific knowledge like the influence of a particular event or label in a conclusion, expressing necessary or sufficient causes, or even dealing with counterfactuals.
% \change{Recalling the introduction's pseudo-literal
% ``$do(A,X)\ \hbox{\tt caused} occurs(Y)$'', it can refer to has $A$ some causal connection to $Y$, has $A$ been sufficient to $Y$ or has $A$ been necessary to $Y$. For instance in our first example $drive$, $drunk$ and $resist$ has some causal connection to $prison$, but only $resist$ is sufficient to it in the presence of the laws $m$ and $e$. In the other hand law $e$ is the only necessary cause to $prison$.}
Further ongoing work is focused on implementation, complexity assessment, and an extension to disjunctive programs, respectively the introduction of strong negation. 
Exploring related areas of KR and reasoning, such as, e.g., Paraconsistent Reasoning and Belief Revision, seems promising with respect to extending the range of problems to which our approach may effectively be applied.
% \change{Related areas such as, e.g., Paraconsistent Reasoning and Belief Revision, seems promising too.	
% }

% \nop{
% \change{
% There are also relations to other areas of KR and reasoning which would be interesting to explore in more depth. For instance, the introduction of a strong negation operator will immediately suggest a connection to approaches of Paraconsistent Reasoning.  In particular, one of the main problems with paraconsistent approaches is to deciding which parts of a theory do not propagate or \emph{depend} on an inconsistency. Such a decision might be more clear} in the presence of causal justifications for each derived conclusion. A related area for which similar connections might be exploited is Belief Revision. In this case, causal information can help to decide which \emph{relevant} part of a revised theory shall be withdrawn in the presence of new information that would otherwise lead inconsistency. \comment{MF: Moved ASP debugging up. I think we need to mention these works as related and at least provide a summary statement.}
% Finally, our main interest is the design of a high level action language on top of causal logic programs, \change{that is capable of  modelling typical scenarios from the literature on causality in Artificial Intelligence, and which may be effectively applied to corresponding applications, as well as to problems in related domains as mentioned above.}
% }

\paragraph{Acknowledgements} We are thankful to David Pearce, Manuel Ojeda, Jes\'us Medina, Carlos Damasio and Joost Vennekens for their suggestions and comments on earlier versions of this work. We also thank the anonymous reviewers for their help to improve the paper.

\newpage
\bibliographystyle{acmtrans}
\bibliography{refs}

%%%%%%%%%%%%%%%%%%%%%%%%%%%%%%%%%%%%%%%%%%%%%%%%%%%%%%%%%%%%%%%%%%%%%%%%%
\newpage

%%%%%%%%%%%%%%%%%%%%%%%%%%%%%%%%%%%%%%%%%%%%%%%%%%%%%%%%%%%%%%%%%%%%%%%%%%%%%%
\section*{Appendix A. Auxiliary figures}\label{sec:figs}
%%%%%%%%%%%%%%%%%%%%%%%%%%%%%%%%%%%%%%%%%%%%%%%%%%%%%%%%%%%%%%%%%%%%%%%%%%%%%%

\begin{figure}[htbp]
\begin{center}
$
\begin{array}{c}
\hbox{\em Associativity} \\
\hline
%$
\begin{array}{r@{\ }c@{\ }r@{}c@{}l c r@{}c@{}l@{\ }c@{\ }l@{\ }}
t & + & (u & + & w) & = & (t & + & u) & + & w\\
t & * & (u & * & w) & = & (t & * & u) & * & w
\end{array}
%$
\end{array}
$
\ \
$
\begin{array}{c}
\ \ \ \ \hbox{\em Commutativity}\ \ \ \ \\
\hline
%$
\begin{array}{r@{\ }c@{\ }l c r@{\ }c@{\ }l@{\ }}
t & + & u & = & u & + & t\\ 
t & * & u & = & u & * & t
\end{array}
%$
\end{array}
$
\ \
$
\begin{array}{c}
\hbox{\em Absorption} \\
\hline
%$
\begin{array}{c c r@{\ }c@{\ }r@{}c@{}l@{\ }}
t & = & t & + & (t & * & u)\\
t & = & t & * & (t & + & u)
\end{array}
%$
\end{array}
$
\ \
\\
\vspace{10pt}
$
\begin{array}{c}
\hbox{\em Distributive} \\
\hline
%$
\begin{array}{r@{\ }c@{\ }r@{}c@{}l c r@{}c@{}l@{\ }c@{\ }r@{}c@{}l@{}}
t & + & (u & * & w) & = & (t & + & u) & * & (t & + & w)\\
t & * & (u & + & w) & = & (t & * & u) & + & (t & * & w)
\end{array}
%$
\end{array}
$
\ \
$
\begin{array}{c}
Identity \\
\hline
%$
\begin{array}{rcr@{\ }c@{\ }l@{\ }}
t & = & t & + & 0\\
t & = & t & * & 1
\end{array}
%$
\end{array}
$
\ \
$
\begin{array}{c}
\hbox{\em Idempotence} \\
\hline
%$
\begin{array}{rcr@{\ }c@{\ }l@{\ }}
t & = & t & + & t\\
t & = & t & * & t
\end{array}
%$
\end{array}
$
\ \ 
$
\begin{array}{c}
\hbox{\em Annihilator} \\
\hline
%$
\begin{array}{rcr@{\ }c@{\ }l@{\ }}
1 & = & 1 & + & t\\
0 & = & 0 & * & t
\end{array}
%$
\end{array}
$
\end{center}
\caption{Sum and product satisfy the properties of a completely distributive lattice.}
\label{fig:DBLattice}
\end{figure}

%%%%%%%%%%%%%%%%%%%%%%%%%%%%%%%%%%%%%%%%%%%%%%%%%%%%%%%%%%%%%%%%%%%%%%%%%
\section*{Appendix B. An example of causal action theory}\label{sec:action}

In this section we consider a more elaborated example from~\citeN{Pearl00}.

\begin{example}\label{ex:sw}
Consider the circuit in Figure~\ref{fig:sw} with two switches, $a$ and $b$, and a lamp $l$. Note that $a$ is the main switch, while $b$ only affects the lamp when $a$ is up. Additionally, when the light is on, we want to track which wire section, $v$ or $w$, is conducting current to the lamp.\qed
\end{example}

As commented by~\citeN{Pearl00}, the interesting feature of this circuit is that, seen from outside as a black box, it behaves exactly as a pair of independent, parallel switches, so it is impossible to detect the causal dependence between $a$ and $b$ by a mere observation of performed actions and their effects on the lamp.  Figure~\ref{fig:sw} also includes a possible representation for this scenario; let us call it program \newprog\label{prg:sw}. It uses a pair of fluents $up(X)$ and $down(X)$ for the position of switch $X$, as well as $on$ and $\fn{off}$ to represent the state of the lamp. Fluents $up(X)$ and $down(X)$ (respectively, $on$ and $\fn{off}$) can be seen as the strong negation of each other, although we do not use an operator for that purpose\footnote{Notice how strong negation would point out the cause(s) for a boolean fluent to take value false, whereas default negation represents the absence of cause.}. Action $m(X,D)$ stands for ``move switch $X$ in direction $D \in \{u,d\}$'' ($up$ and $down$, respectively). Actions between state $t$ and $t+1$ are located in the resulting state. Finally, we have also labelled inertia laws (by $i$) to help keeping track of fluent justifications inherited by persistence.

\begin{figure}[htbp]
\begin{center}
\[
\begin{array}{cc}
\begin{array}{c}
\hbox{\underline{Circuit diagram}} \\ \\
\includegraphics[scale=0.55]{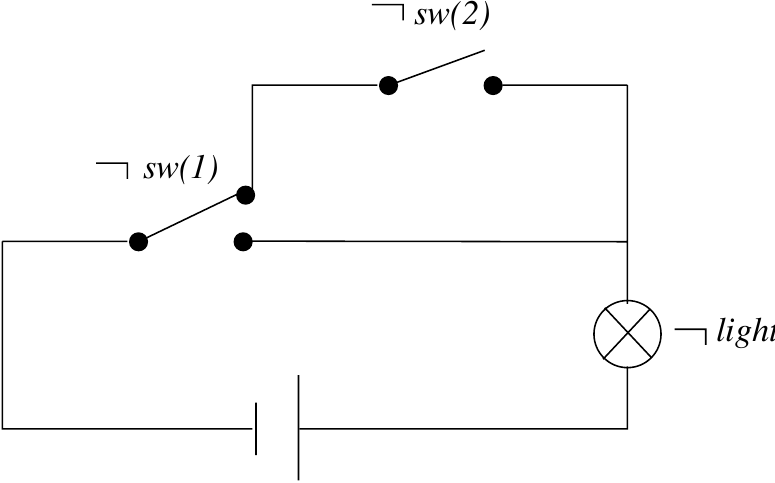} \\
 \\
\hbox{\underline{Inertia}}\\
\begin{array}{rrcl}
i: & up(X)_{t+1}   & \leftarrow & up(X)_t, \ \Not down(X)_{t+1}\\
i: & down(X)_{t+1}   & \leftarrow & down(X)_t, \ \Not up(X)_{t+1}
\end{array}
\end{array}
&
\begin{array}{c}
\hbox{\underline{Direct effects}} \\
\begin{array}{rrcl}
up(X)_t   & \leftarrow & m(X,u)_t\\
down(X)_t & \leftarrow & m(X,d)_t
\end{array}\\ \\
\hbox{\underline{Causal rules (indirect effects)}}\\
\begin{array}{rrcl}
v: & on_t & \leftarrow & down(a)_t\\
w: & on_t & \leftarrow & up(a)_t, \ down(b)_t \\
 & \fn{off}_t & \leftarrow & up(a)_t, \ up(b)_t   
\end{array}\\ \\
\hbox{\underline{Constraints}}\\
\begin{array}{rcl}
\bot & \leftarrow & m(X,u)_t, \ m(X,d)_t \\
\bot & \leftarrow & up(X)_t, \ down(X)_t \\
\bot & \leftarrow & on_t, \ \fn{off}_t
\end{array}
\\ \\
\hbox{\underline{Initial state}}\\
up(a)_0 \hspace{15pt} up(b)_0 \hspace{15pt} \fn{off}_0
\end{array} 
\end{array}
\]

\end{center}
\caption{A circuit with two switches together with a possible representation.}
\label{fig:sw}
\end{figure}

Suppose we perform the following sequence of actions: we first move down both switches, next switch $b$ is moved first up and then down, and finally we move up switch $a$. Assume also that each action occurrence is labelled with the action name so that, for instance, moving $b$ up in Situation~$1$ corresponds to the program fact $m(b,u)_1 : m(b,u)_1$. The table in Figure~\ref{fig:swproject} shows the resulting temporal projection. Note how the lamp turns on in Situation~$1$ but only because of $v$, that is, moving $a$ down. Movements of $b$ at $2$ and $3$ do not affect the lamp, and its  causal explanation ($down(a)$) is maintained by inertia. In Situation~$4$, the lamp is still on but the reason has changed. The explanation this time is that we had closed down $b$ at $3$ (and this persisted by inertia) while we have just moved $a$ up, firing rule $w$. 

\begin{figure}[htbp]
\begin{center}
\[
\begin{array}{c|ccccc}
t & 0 & 1 & 2 & 3 & 4\\
\hline
\hbox{Actions} & & m(a,d)_1, \ m(b,d)_1 & m(b,u)_2 & m(b,d)_3 & m(a,u)_4 \\
\hline
up(a)_t   & 1 & 0        & 0                & 0                & m(a,u)_4 \\
down(a)_t & 0 & m(a,d)_1 & m(a,d)_1 \cdot i & m(a,d)_1 \cdot i & 0 \\
\hline
up(b)_t   & 1 & 0        & m(b,u)_2  & 0        & 0 \\
down(b)_t & 0 & m(b,d)_1 & 0         & m(b,d)_3 & m(b,d)_3 \cdot i \\
\hline
on_t       & 0 & m(a,d)_1 \cdot v & m(a,d)_1 \cdot iv & m(a,d)_1 \cdot iv & 
  (\ m(b,d)_3 \cdot i * m(a,u)_4 \ )\cdot w \\
\fn{off}_t & 1 & 0                & 0                 & 0                 & 0 \\
\end{array}
\]
\end{center}
\caption{Temporal projection of a sequence of actions for program \progref{prg:sw}.}
\label{fig:swproject}
\end{figure}

\noindent This example also illustrates why we are not interested in providing negative justifications through default negation. This would mean to explicitly include non-occurrences of actions that might otherwise have violated  inertia. For instance, the explanation for $on_2$ would include the fact that we did not perform $m(a,u)_2$. Including this information for one transition is perhaps not so cumbersome, but suppose that, from $2$ we executed a high number of transitions without performing any action. The explanation for $on_3$ would \emph{additionally} collect that we did not perform $m(a,u)_3$ either. The explanation for $on_4$ should also collect the negation of further  possibilities: moving $a$ up at $4$; three movements of $a$ up, down and up; moving $b$ at $3$ and both switches at $4$; moving both switches at $3$ and $b$ at $4$; etc. It is easy to see that negative explanations grow exponentially: at step $t$ we would get the negation of \emph{all possible plans} for making $on_t$ false, while indeed, \emph{nothing has actually happened} (everything persisted by inertia).

\begin{example}[The gear wheels]\label{ex:wheels}
Consider a gear mechanism with a pair of wheels, each one powered by a separate motor. Each motor has a switch to start and stop it. There is another switch to connect or disconnect the wheels. \cite{McCain1997}.\qed
\end{example}

\noindent
This example can be captured by the logic program \newprog\label{prg:gears} formed by the following causal rules:

\vspace{-0.25cm}

{\small
\begin{gather*}
	\begin{IEEEeqnarraybox}{rlClCl}
			  &motor(W)_t	&\leftarrow&	start(W)_t				\\
		\sneg &motor(W)_t	&\leftarrow&	stop(W)_t				\\
			  &turn(W)_t	&\leftarrow&	motor(W)_t				\\
			  &coupled_t	&\leftarrow&	couple_t				\\
		\sneg &coupled_t	&\leftarrow&	uncouple_t 
	\end{IEEEeqnarraybox}
	\hspace{2cm}
	\begin{IEEEeqnarraybox}{rlCcl}
			  &turn(1)_t	&\leftarrow&	&turn(2)_t,\ coupled_t	\\
			  &turn(2)_t	&\leftarrow&	&turn(1)_t,\ coupled_t	\\
		\sneg &turn(1)_t	&\leftarrow& \sneg &turn(2)_t,\ coupled_t\\
		\sneg &turn(2)_t	&\leftarrow& \sneg &turn(1)_t,\ coupled_t\\
	\end{IEEEeqnarraybox}
\end{gather*}}

\vspace{-0.25cm}

plus the fhe following inertia axioms:

\vspace{-0.25cm}

{\small
\begin{IEEEeqnarray}{rl C rl rrl}
	  &motor(W)_{t+1} &\leftarrow& 		 &motor(W)_t,\
										&\Not& \sneg  &motor(W)_{t+1}
		\notag\\
\sneg &motor(W)_{t+1} &\leftarrow& \sneg &motor(W)_t,\
										&\Not& 		&motor(W)_{t+1}
		\notag\\
	  &turn(W)_{t+1} &\leftarrow& 		 &turn(W)_t,\
										&\Not& \sneg  &turn(W)_{t+1}
		\label{prg:gears.inertia.turn}\\
\sneg &turn(W)_{t+1} &\leftarrow& \sneg &turn(W)_t,\
										&\Not& 		&turn(W)_{t+1}
		\notag\\
	  &coupled_{t+1} &\leftarrow& 		 &coupled_t,\
										&\Not& \sneg  &coupled_{t+1}
		\notag\\
\sneg &coupled_{t+1} &\leftarrow& \sneg &coupled_t,\
										&\Not& 		&coupled_{t+1}
		\notag
\end{IEEEeqnarray}}

\vspace{-0.25cm}

\noindent
Suppose that initially both motors are off an the wheels are immobile. This is reflected by the following set of facts $\setb{ \!\!\sneg motor(1)_0, \sneg motor(2)_0, \sneg turn(1)_0, \sneg turn(2)_0, \sneg cuopled_0}$. Then we perform the following actions $\setb{s(1)_3 : start(1)_3, \ \ c_6 : couple_6, \ \ u_9 : uncouple_9 }$. Figure~\ref{fig:gears.project.A} shows the causal values associated with each fluent in each time interval. Note that we only have labelled the actions avoiding tracing rule application for clearity sake. This example illustrates the behaviour of causal logic programs in the presence of causal cycles. When no action is performed the value of a fluent is just propagate by inertia.

An iteresting variation of this example is incorporating some mechanic device to stop the wheels \cite{deneker1998,lin2011}. This can be achived by adding the following set of rules:

\vspace{-0.25cm}

{\small
\begin{gather*}
	\begin{IEEEeqnarraybox}{rlClCl}
			  &breaked(W)_t	&\leftarrow&	break(W)_t				\\
		\sneg &breaked(W)_t	&\leftarrow&	unbreak(W)_t			\\
		\sneg &turn(W)_t	&\leftarrow&	breaked(W)_t		
	\end{IEEEeqnarraybox}
	\hspace{1.5cm}
	\begin{IEEEeqnarraybox}{rl C rl rrl}
	  &breaked(W)_{t+1} &\leftarrow& 		 &breaked(W)_t,\
										&\Not& \sneg  &breaked(W)_{t+1}
		\notag\\
\sneg &breaked(W)_{t+1} &\leftarrow& \sneg &breaked(W)_t,\
										&\Not& 		&breaked(W)_{t+1}
		\notag\\
	\end{IEEEeqnarraybox}
\end{gather*}}

\vspace{-0.25cm}

\noindent If the second wheel is breaked at $s_9$ we will get insted that $turn(2)_9$ is false an the cause of $\sneg turn(2)_t$ in any future situation is $break(2)_9$. Another interesiting situations is when we break the second wheel in the situation $5$. In such case we have an incosistence. On the one hand the cause of $turn(1)_6$ and $turn(2)_6$ are respectively $s_1$ and $s(1)_3*c(6)$. On the other hand the cause of $\sneg turn(1)_6$ and $\sneg turn(2)_6$ are respectively $break(2)_5*c(6)$ and $break(2)_5$. Note that, causal values not only points the existence of an incosistence but also can be explaoited to explain why. In particual, in this case, the motor forces the wheels to spin whereas the break opposed to it.

\begin{figure}\centering
\subfigure[Innertial turning]{ \label{fig:gears.project.A}
$
\begin{array}{r|ccccc}
t & 0-2 & 3-5 & 6-8 & 9-&\\
\hline
motor(1)_t	& 0 & s(1)_3  & s(1)_3 		  & s(1)_3					\\
\sneg motor(1)_t	& 1 & 0   & 0 			  & 0 					\\
\hline
coupled_t		& 0 & 0   & c_6 		  & 0						\\
\sneg coupled_t	& 1 & 1   & 0 			  & u_9					 	\\
\hline
turn(1)_t	& 0 & s(1)_3  & s(1)_3 		  & s(1)_3					\\
\sneg turn(1)_t	& 1 & 0   & 0 			  & 0 						\\
\hline
turn(2)_t		& 0 & 0	  & s(1)_3 * c_6  & s(1)_3 * c_6			\\
\sneg turn(2)_t	& 1 & 1	  & 0 			  & 0 						\\
\end{array}
$
}
\subfigure[Friction brake]{ \label{fig:gears.project.B}
$
\begin{array}{r|ccccc}
t & 0-8 & 9-&\\
\hline
motor(1)_t			& * & * \\
\sneg motor(1)_t	& * & * \\
\hline
coupled_t			& * & * \\
\sneg coupled_t		& * & * \\
\hline
turn(1)_t			& * & * \\
\sneg turn(1)_t		& * & * \\
\hline
turn(2)_t			& * & 0		\\
\sneg turn(2)_t		& * & f(2)_9 	\\				
\end{array}
$
}
\caption{Temporal projection of a sequence of actions for program \progref{prg:gears}.}
\end{figure}

Another iteresting variation of this example is assuming that wheels are breaked by friction as soon as the motriz force over them is absent. This can be achived by just replacing the inertia axiom \eqref{prg:gears.inertia.turn} for $turn$ by the following causal rule:

\vspace{-0.25cm}

{\small
\begin{IEEEeqnarray*}{rl C rl rrl}
f(W)_{t} : \sneg &turn(W)_{t+1} &\leftarrow& \Not \sneg &turn(W)_t,\
										&\Not& 		&turn(W)_{t+1}
\end{IEEEeqnarray*}}

\vspace{-0.25cm}

\noindent
Figure~\ref{fig:gears.project.B} shows the variation in the causal value of $turn(2)$. Note that $*$ means that the value is the same as Figure~\ref{fig:gears.project.A}.

\newpage
\section*{Appendix C. Example with infinite rules}\label{sec:infinite}

%\begin{example}
%Consider the finite program $\newprog\label{prog:exponential}$
%given by the set of rules:
%\begin{IEEEeqnarray*}{lClCl"l}
%a_i 	&:& p_i	& 		&&\\
%b_i 	&:& p_i	& 		&&\\
%		&&	q &		\leftarrow p_1,\dotsc,p_m
%\end{IEEEeqnarray*}
%
%\noindent for $i\in\set{1,\dotsc,m}$. It is easy to see that program \progref{prog:exponential} contains $2m+1$ rules. It is also easy to see that
%the interpretation of $p_i$ in the least model $I$ of \progref{prog:exponential} is
%$a_i+b_i$ for any $i\in\set{1,\dotsc,m}$. Consequently the interpretation of $q$ is
%$I(q)=(a_1+b_1)*\dotsc*(a_m+b_m)$. The interesting feature of this example is that 
%there is an atom $q$ whose interpretation in the least model contains an exponential number $2^m$ of maximal causal graphs with respect to the number of rules of the program $2m+1$. To show this, note that after applying distributivity of products over additions, $I(q)$ is the sum of $2^m$ products of the form
%$l_1*\dotsc*l_m$ with each $l_i$ varying in $\set{a_i,b_i}$. This yields an addition of $2^m$ products that, furthermore, are pairwise incomparable (no product contains a subset of the labels of another).
%\end{example}

\begin{example}
Consider the infinite program $\newprog\label{prog:infinite}$ given by the ground instances of the set of rules:
\begin{IEEEeqnarray*}{c C ; l C l}
l(s(X))  & : &		nat(s(X)) & \leftarrow & nat(X)
\\
l(z) 	 & : &		nat(z)
\end{IEEEeqnarray*}
\noindent defining the natural numbers with a Peano-like representation, where $z$ stands for ``zero.'' For each natural number $n$, the causal value obtained for $nat(s^n(z))$ in the least model of the program is $l(z)\cdot l(s(z)) \dotsc l(s^n(z))$. Read from right to left, this value can be seen as the computation steps performed by a top-down Prolog interpreter when solving the query $nat(s^n(z))$.
As a further elaboration, assume that we want to check that at least some natural number exists. For that purpose, we add the following rule to the previous program:

\vspace{-17pt}

\begin{gather}
	some \leftarrow nat(X) \label{f:some}
\end{gather}

\vspace{-2pt}

\noindent The interesting feature of this example is that atom $some$ collects an infinite number of causes from all atoms $nat(s^n(z))$ with $n$ ranging among all the natural numbers. That is, the value for $some$ is $I(some)=\alpha_0+\alpha_1+\alpha_2+\dotsc+\alpha_n+\dots$ where $\alpha_n\eqdef l(z)\cdot l(s(z)) \cdot\dotsc\cdot l(s^n(z))$. However, it is easy to see that the fact $nat(z)$ labelled with $l(z)$ is not only sufficient to prove the existence of some natural number, but, due to the recursive definition of the natural numbers, it is also \emph{necessary} -- note how all the proofs $\alpha_i$ actually begin with an application of $l(z)$.

This fact is captured in our semantic by the algebraic equivalences showed in Fig~\ref{fig:appl}. From associativity and identity of `$\cdot$', the following equivalence holds:
\begin{gather*}
\alpha_n = 1 \ \cdot \ l(z) \ \cdot \ \beta_n
	\hspace{1cm}\text{with }\hspace{0.5cm}
	\beta_i\eqdef l(s(z)) \ \cdot \ \dotsc \ \cdot \ l(s^i(z))
\end{gather*}
for any $n\geq 1$. Furthermore, from absorption of `$\cdot$' w.r.t the addition, it also holds that
\begin{gather*}
\alpha_0 + \alpha_n \ \ \ \ =\ \ \ \ l(z) 
					\ +\ 1 \ \cdot \ l(z) \ \cdot \ \beta_n
					\ \ \ \ =\ \ \ \ l(z)
\end{gather*}
As a consequence, the previous infinite sum just collapses to $I(some)=l(z)$, reflecting the fact that, to prove the existence of a natural number, only the fact labelled as $l(z)$ is relevant.

Suppose now that, rather than defining natural numbers in a recursive way, we define them by directly asserting an infinite set of facts as follows:
\begin{gather}
\begin{IEEEeqnarraybox}{r C ; l C l}
l(s^n(z))  & : &		nat(s^n(z))
\end{IEEEeqnarraybox} \label{f:inffacts}
\end{gather}
\noindent for any $n\geq 0$, where $s^0(z)$ stands for $z$. In this variant, the causal value obtained for $nat(s^n(z))$ is simply $l(s^n(z))$, so that the dependence we had before on lower natural numbers does not exist any more. Adding rule \eqref{f:some} to the set of facts \eqref{f:inffacts} allows us concluding $I(some)=l(z)+l(s(z))+l(s^2(z))+\dotsc+l(s^n(z))+\dots$ and this infinite sum cannot be collapsed into any finite term. This reflects that we have now infinite \emph{independent} ways to prove that some natural number exists.

This last elaboration can be more elegantly captured by replacing the infinite set of facts \eqref{f:inffacts} by an auxiliary recursive predicate $aux$ defined as follows:
\begin{gather*}
\begin{IEEEeqnarraybox}{r C ; l C l}
&&		aux(s(X)) & \leftarrow & aux(X)
\\
&&		aux(z)
\end{IEEEeqnarraybox}
\hspace{2cm}
\begin{IEEEeqnarraybox}{l C ; l C l}
l(X) &:&	 nat(X)	& \leftarrow & aux(X)\\
%	&&		q 		& \leftarrow & nat(X)
\end{IEEEeqnarraybox}
\end{gather*}
\noindent Since rules for $aux$ are unlabelled, the value of $aux(s^n(z))$ in the least model is $I(aux(s^n(z))=1$ so the effect of this predicate is somehow ``transparent'' regarding causal justifications. As a result, the value of $nat(s^n(z))$ is just $l(s^n(z))$ as before.
\end{example}

\newpage
%%%%%%%%%%%%%%%%%%%%%%%%%%%%%%%%%%%%%%%%%%%%%%%%%%%%%%%%%%%%%%%%%%%%%%%%%%%%%%
\section*{Appendix D. Proofs}\label{sec:proofs}

In order to improve clarity, for any causal graph $G=\tuple{V,E}$, vertices $v_1$ and $v_2$ and edge $(v_1,v_2)$ we respectively use the notation $v_1 \in G$ and $(v_1,v_2) \in G$ instead of $v_1 \in V$ and $(v_1,v_2) \in E$.

\subsection{Causes as graphs}

\begin{proposition}[Monotonicity]
\label{prop:graphs.monotonicity}
Let $G, G'$ be a pair of causal graph with $G \leq G'$. Then, for any causal graph $H$:

\centering{ $G*H \leq G'*H$, \hspace{0.5cm}
$G\cdot H \leq G' \cdot H $ \hspace{0.5cm}
and \hspace{0.5cm}$H \cdot G \leq H \cdot G'$}
\end{proposition}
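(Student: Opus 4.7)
The plan is to unfold the definitions of $*$, $\cdot$, and $\leq$, and reduce everything to two elementary monotonicity facts about the underlying graph constructions, namely that $\cup$ and $\odot$ are monotone in each argument with respect to the $\supseteq$-ordering on graphs, and that reflexive-transitive closure also preserves $\supseteq$. Recall the definitions: $G \leq G'$ is $G \supseteq G'$, $G * H = (G \cup H)^*$, and $G \cdot H = (G \odot H)^*$, where $G \odot H$ adds to $G \cup H$ all arcs from vertices of $G$ to vertices of $H$.

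First I would prove the auxiliary claim that if $A \supseteq B$ for graphs $A,B$, then $A^* \supseteq B^*$. This is immediate: any vertex of $B^*$ is a vertex of $B$, hence of $A$, hence of $A^*$; and any edge $(x,y) \in B^*$ arises from a (possibly empty) directed path in $B$, which is likewise a path in $A$, so $(x,y) \in A^*$.

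Next I would verify monotonicity of the underlying operations. For the union, $G \supseteq G'$ yields $V(G) \cup V(H) \supseteq V(G') \cup V(H)$ and similarly for edge sets, so $G \cup H \supseteq G' \cup H$. For concatenation, the same inclusion on vertices and edges holds for $G \cup H$ versus $G' \cup H$, and moreover the added crossing arcs satisfy $\{(x,y) \mid x \in V(G), y \in V(H)\} \supseteq \{(x,y) \mid x \in V(G'), y \in V(H)\}$; hence $G \odot H \supseteq G' \odot H$. The symmetric argument gives $H \odot G \supseteq H \odot G'$.

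Finally, combining the two ingredients: from $G \supseteq G'$ we get $G \cup H \supseteq G' \cup H$ and then $(G \cup H)^* \supseteq (G' \cup H)^*$, which is $G*H \leq G'*H$. The reasoning for $G \cdot H \leq G' \cdot H$ and $H \cdot G \leq H \cdot G'$ is identical, using $\odot$ in place of $\cup$. I do not anticipate any real obstacle here; the only nuance is correctly handling the direction reversal between $\leq$ and $\supseteq$, so I would state this reversal explicitly at the outset to avoid sign errors in the chain of inclusions.
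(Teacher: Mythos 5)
Your proof is correct and follows essentially the same route as the paper's: unfold $\leq$ as $\supseteq$ and reduce each claim to inclusions of vertex and edge sets under $\cup$ and $\odot$. The only difference is stylistic --- you argue the inclusions directly and isolate the fact that reflexive-transitive closure preserves $\supseteq$ as an explicit auxiliary step (which the paper's proof leaves implicit), whereas the paper reasons by contradiction on a hypothetical missing edge; your version is, if anything, slightly more careful.
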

\begin{proof}
%Let $G_3$ and $G_3'$ be two causal graphs s.t.
%$G_3=G_1\cdot G_2$ (resp. $G_3=G_1*G_2$)
%and $G_3'=G_1'\cdot G_2'$ (resp. $G_3'=G_1'*G_2'$).
First we will show that $G*H\leq G'*H$.
Suppose that $E(G*H)\not\supseteq E(G'*H)$
%It is clear that $V(G_3')\supseteq V(G_3)$, so that
%$E(G_3')\not\supseteq E(G_3)$.
and let $(l_1,l_2)$ be an edge in $E(G'*H)$ but not in $E(G*H')$,
i.e. $(l_1,l_2)\in E(G'*H)\backslash E(G*H')$.

Thus, since by product definition $E(G'*H)=E(G')\cup E(H)$,
it follows that
either $(l_1,l_2)\in E(G')$ or $(l_1,l_2)\in E(H)$.
It is clear that if $(l_1,l_2)\in E(H)$ then
$(l_1,l_2)\in E(G*H)=E(G)\cup E(H)$.
Furthermore, since $G\leq G'$ it follows that $E(G)\supseteq E(G')$,
if $(l_1,l_2)\in E(G')$ then $(l_1,l_2)\in E(G)$ and consequently $(l_1,l_2)\in E(G*H)=E(G)\cup E(H)$.
That is $E(G*H)\supseteq E(G'*H)$
and then $G*H\leq G'*H$.
Note that $V(G*H)\supseteq V(G'*H)$ follows directly from $E(G*H)\supseteq E(G'*H)$ and the fact that every vertex has and edge to itself.

To show that $G\cdot H\leq G'\cdot H$ (the case for $H\cdot G\leq H\cdot G'$ is analogous) we have has to show that, in addition to the previous, for every edge $(l_G,l_H)\in E(G'\cdot H)$
with $l_G\in V(G')$ and $l_H\in V(H)$
it holds that $(l_G,l_H)\in E(G\cdot H)$.
Simply note that since $G\leq G'$ it follows $V(G)\supseteq V(G)'$ and then $l_G\in V(G)$. Consequently $(l_G,l_H)\in E(G\cdot H)$.\qed
\end{proof}

\begin{proposition}[Application associativity for causal graphs]\label{prop:graph.appl.associative}
Let $G_1$, $G_2$ and $G_3$ be three causal graphs. Then $G_1 \cdot (G_2 \cdot G_3)  \ = \ ( G_1 \cdot G_2 ) \cdot G_3$.
\end{proposition}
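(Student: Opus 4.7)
The plan is to reduce both sides of the equation to a common normal form, namely $(G_1 \odot G_2 \odot G_3)^*$, by pushing the closure operations to the outside. Unfolding the definition of `$\cdot$' we have
\[
G_1 \cdot (G_2 \cdot G_3) \ = \ \bigl( G_1 \odot (G_2 \odot G_3)^* \bigr)^*
\qquad\text{and}\qquad
(G_1 \cdot G_2) \cdot G_3 \ = \ \bigl( (G_1 \odot G_2)^* \odot G_3 \bigr)^* .
\]
So it suffices to prove two things: (i) that the ``raw'' concatenation $\odot$ is associative on graphs (no closure involved), and (ii) an \emph{absorption lemma} stating that for arbitrary directed graphs $A,B$ one has $(A \odot B^*)^* = (A \odot B)^*$ and $(A^* \odot B)^* = (A \odot B)^*$.

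First I would dispatch (i) by a direct inspection of vertices and edges: both $G_1 \odot (G_2 \odot G_3)$ and $(G_1 \odot G_2) \odot G_3$ have vertex set $V_1 \cup V_2 \cup V_3$ and edge set $E_1 \cup E_2 \cup E_3 \cup (V_1\times V_2) \cup (V_2\times V_3) \cup (V_1\times V_3)$, so they coincide. This lets me write $G_1 \odot G_2 \odot G_3$ unambiguously.

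For (ii), I would argue the two inclusions separately. The direction $(A \odot B)^* \subseteq (A \odot B^*)^*$ is immediate from monotonicity of $\odot$ in its second argument and monotonicity of $(\cdot)^*$, using $B \subseteq B^*$. For the converse, I would show $A \odot B^* \subseteq (A \odot B)^*$ edge by edge: an edge of $A \odot B^*$ is either an edge of $A$, hence in $A \odot B$; a transitive edge derived from $B$'s edges, hence in $(A\odot B)^*$; or a cross edge in $V_A \times V_{B^*} = V_A \times V_B$, hence in $A \odot B$. Then applying $(\cdot)^*$ and using its idempotence yields $(A \odot B^*)^* \subseteq (A \odot B)^*$.

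With (i) and (ii) in hand, both sides of the target equation collapse to $(G_1 \odot G_2 \odot G_3)^*$, establishing associativity. The main obstacle will be making sure the interaction between $\odot$ and the closure operator in the absorption lemma is handled rigorously; in particular, one must be careful that $\odot$ does not preserve transitive closure by itself (the cross edges $V_A \times V_B$ compose with edges internal to $A$ or $B$ to create new paths), which is exactly why the outer closure in the definition of `$\cdot$' is indispensable and why statement (ii) is the right formulation.
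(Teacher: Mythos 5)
Your proof is correct, but it takes a genuinely different route from the paper's. The paper's proof never isolates the closure operator: it directly computes the vertex and edge sets of the two associations, writing $E\big((G_1\cdot G_2)\cdot G_3\big)$ as $E(G_1)\cup E(G_2)\cup E(G_3)\cup E_{12}\cup E_{12,3}$ and $E\big(G_1\cdot(G_2\cdot G_3)\big)$ as $E(G_1)\cup E(G_2)\cup E(G_3)\cup E_{1,23}\cup E_{23}$ (with $E_{12}=V_1\times V_2$, $E_{12,3}=(V_1\cup V_2)\times V_3$, and so on), and then reduces the claim to the set identity $E_{12}\cup E_{12,3}=E_{1,23}\cup E_{23}$, verified by chasing a putative missing cross edge to a contradiction. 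This implicitly relies on the fact that the concatenation of reflexively and transitively closed graphs is already closed, a point the paper leaves unstated. You instead push both sides to the common normal form $(G_1\odot G_2\odot G_3)^*$ via raw associativity of $\odot$ together with the absorption lemma $(A\odot B^*)^*=(A\odot B)^*=(A^*\odot B)^*$. Your route is slightly longer, but it makes the interaction between $\odot$ and the closure explicit and rigorous --- exactly the point the paper glosses over --- and the absorption lemma is reusable elsewhere (for instance in the distributivity arguments). One small remark: in your edge-by-edge case analysis for $A\odot B^*\subseteq (A\odot B)^*$ you should also account for the reflexive loops $(v,v)$ with $v\in V_B$ that the closure adds; they lie in $(A\odot B)^*$ for the same reason as the transitive edges, so nothing breaks, but the taxonomy as written only mentions transitive edges.
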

\begin{proof}
By definition, it follows that
\begin{align}
V\big( (G_1\cdot G_2) \cdot G_3 \big)
	& \ = \ 
	( V(G_1) \cup V(G_2) ) \cup V(G_3)
	\ = \ V(G_1) \cup (V(G_2) \cup V(G_3))
	\label{eq:prop:graph.appl.associative.3}
\\
E\big( (G_1\cdot G_2) \cdot G_3 \big)
	& \ = \ 
	\big( E(G_1) \cup E(G_2) \cup E_{12} \big) \cup E(G_3) \cup E_{12,3}
	\notag
	\\
	& \ = E(G_1) \cup E(G_2) \cup E(G_3) \cup E_{12} \cup E_{12,3}
\\
E\big( G_1\cdot (G_2 \cdot G_3) \big)
	& \ = \ 
	E(G_1) \cup \big( E(G_2) \cup E(G_3) \cup E_{23} \big) \cup E_{1,23}
	\notag
	\\
	& \ = E(G_1) \cup E(G_2) \cup E(G_3) \cup E_{1,23} \cup E_{23}
	\label{eq:prop:graph.appl.associative.4}
\end{align}
where
\begin{align*}
E_{12} &\ \eqdef \
	\setm{ (v_1,v_2) }{ v_1 \in V_1 \text{ and } v_2 \in V_2}
	&
E_{12,3} &\ \eqdef \
	\setm{ (v_{12},v_3) }
		{ v_{12} \in V_1 \cup V_2  \text{ and } v_3 \in V_3}
\\
E_{23} &\ \eqdef \
	\setm{ (v_2,v_3) }{ v_2 \in V_2 \text{ and } v_3 \in V_3}
	&
E_{1,23} &\ \eqdef \
	\setm{ (v_1,v_{23}) }
	{ v_1 \in V_1 \text{ and } v_{23} \in V_2 \cup V_3 }
\end{align*}
From \eqref{eq:prop:graph.appl.associative.3}~and~\eqref{eq:prop:graph.appl.associative.4}, it follows that
\begin{align*}
(G_1\cdot G_2) \cdot G_3 =  G_1\cdot (G_2 \cdot G_3)
	&&\text{ if and only if }
	& E_{12} \cup E_{12,3} = E_{1,23} \cup E_{23}
\\
	&&\text{ if and only if }
	&	E_{12,3} \subseteq E_{1,23} \cup E_{23}
		\text{ and }
		E_{1,23} \subseteq E_{12} \cup E_{12,3}
\end{align*}
Note that $E_{12}\subseteq E_{1,23}$ and $E_{23}\subseteq E_{12,3}$.
Then, we will show that $E_{12,3} \subseteq E_{1,23} \cup E_{23}$.
Suppose there is an edge $(v_{12},v_3)\in E_{12,3}$ such that
 $(v_{12},v_3)\notin E_{23}$ and  $(v_{12},v_3)\notin E_{1,23}$.
 Since $(v_{12},v_3)$, $v_{12}\in V_1\cup V_2$ and $v_3\in V_3$.
 If $v_{12}\in V_1$, then  $(v_{12},v_3)\in E_{12,3}$ which is a contradiction, and if, otherwise, $v_{12}\in V_2$, then $(v_{12},v_3)\in E_{23}$ which is also a contradiction. The case $E_{1,23} \subseteq E_{12} \cup E_{12,3}$ is symmetric.
\end{proof}

\begin{proposition}\label{prop:graph.NF}
For every causal graph $G=\tuple{V,E}$ it holds that $G = \prod\setbm{ l \cdot l' }{ (l,l') \in E }$.
\end{proposition}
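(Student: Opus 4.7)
The plan is to compute the vertex set and edge set of the right-hand side directly from the definitions and check that each coincides with the corresponding component of $G$. First I would unfold the summands: by the definition of concatenation, for labels $l,l'$ the atomic graphs give $l \cdot l' = (\{l\}\odot\{l'\})^* $, which is the graph with vertex set $\{l,l'\}$ and edge set $\{(l,l),(l',l'),(l,l')\}$ (collapsing to the single self-loop $(l,l)$ when $l=l'$). The product is then $\prod S = (\bigcup_{(l,l')\in E}(l\cdot l'))^{*}$, whose vertex set is $\bigcup_{(l,l')\in E}\{l,l'\}$ and whose edge set is the reflexive-transitive closure of $\bigcup_{(l,l')\in E}\{(l,l),(l',l'),(l,l')\}$.

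For the inclusion $\prod S \leq G$ (i.e.\ $\prod S \supseteq G$ as graphs), every pair $(l,l')$ appearing in any summand is either one of the self-loops $(l,l),(l',l')$ with $l,l'$ endpoints of some edge of $E$, or the edge $(l,l')$ itself; in both cases it lies in $E$ thanks to the reflexivity of $G$. Since $G$ is also transitively closed, $E = E^{*}$, so taking the closure of a subset of $E$ stays inside $E$. The vertex inclusion is immediate: each $l$ or $l'$ used as an endpoint in a summand is by definition a vertex of $G$.

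For the reverse inclusion $G \leq \prod S$ I would argue pointwise on edges and vertices. Every edge $(l,l')\in E$ is literally contributed by the summand $l\cdot l'$, hence lies in the union and a fortiori in its closure. Every vertex $v\in V$ satisfies $(v,v)\in E$ by reflexivity of $G$, so $v$ is a vertex of the summand $v\cdot v = v$ and therefore a vertex of $\prod S$.

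The only delicate bookkeeping is the degenerate case $l=l'$ in the unfolding of $l\cdot l'$ and the well-definedness of $\prod$ when $E$ is infinite; both are handled directly by the definitions given in Section~\ref{sec:graphs} (atomic causal graphs and the infinitary product). Beyond that, no real obstacle is expected: the statement is essentially a restatement of the fact that a reflexively and transitively closed graph is determined by its edge list, and it follows from a straightforward two-way set inclusion once the summands have been unfolded.
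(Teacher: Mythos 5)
Your proof is correct and follows essentially the same route as the paper's: unfold each summand $l\cdot l'$, then establish the identity by a two-way inclusion of edge (and vertex) sets, using reflexivity and transitive closure of $G$ to see that nothing outside $E$ is introduced. If anything, you are slightly more careful than the paper's own argument, which glosses over the self-loops contributed by each summand.
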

\begin{proof}
%{Proposition~\ref{prop:graph.NF}}
Let $G'$ be a causal graphb s.t.
$G'=\prod\setbm{ l \cdot l' }{ (l,l') \in E }$.
Then for every edge $(l,l')\in E(G)$ it holds that
$(l,l')\in E(l\cdot l')$ and then
$(l,l')\in E(G')=\bigcup\setm{E(l\cdot l')}{(l,l')\in E}$,
i.e. $E(G)\subseteq E(G')$.
Furthermore for every $(l,l')\in E(G')$
there is $l_i\cdot l_j$ s.t. $(l,l')\in l_i\cdot l_j$ and $(l_i,l_j)\in E(G)$. Then, since $E(l_i\cdot l_j)=\set{(l_i,l_j)}$ it follows that $(l,l')\in E(G)$, i.e. $E(G)\supseteq E(G')$.
Consequently $G=G'=\prod\setbm{ l \cdot l' }{ (l,l') \in E }$.\qed
\end{proof}

%\newcounter{propositionBack}
%\setcounter{propositionBack}{\arabic{proposition}}
%\setcounter{proposition}{\ref{prop:prod.is.infimum}}
%\addtocounter{proposition}{-1}
%\begin{proposition}[Infimum]
%%\label{prop:prod.is.infimum}
%Any set of causes $S$ has a $\leq$-infimum defined by their product $\prod S$.\qed
%\end{proposition}
%\setcounter{proposition}{\arabic{propositionBack}}
\begin{proposition}[Infimum]\label{prop:prod.is.infimum}
Any set of causal graphs $S$ has a $\leq$-infimum given by their product $\prod S$.
\end{proposition}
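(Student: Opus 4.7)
The plan is to verify the two defining properties of an infimum: that $\prod S$ is a lower bound of $S$ with respect to $\leq$, and that every lower bound of $S$ is below $\prod S$. Recall that $\leq$ is the reverse of the subgraph relation, so ``lower bound'' in $\leq$ means ``supergraph of every element of $S$.''

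First I would show $\prod S \leq G$ for every $G \in S$. By the definition, $\prod S = \bigl(\bigcup_{G' \in S} G'\bigr)^*$, and for any fixed $G \in S$ we have $G \subseteq \bigcup_{G' \in S} G' \subseteq \bigl(\bigcup_{G' \in S} G'\bigr)^* = \prod S$. Hence $\prod S \supseteq G$, i.e., $\prod S \leq G$.

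Next I would show that if $H \in \causes$ satisfies $H \leq G$ for every $G \in S$, then $H \leq \prod S$. The hypothesis is $H \supseteq G$ for every $G \in S$, which gives $H \supseteq \bigcup_{G \in S} G$. Since $H$ is a causal graph, it equals its reflexive-transitive closure, so applying the closure operator (which is monotone) yields $H = H^* \supseteq \bigl(\bigcup_{G \in S} G\bigr)^* = \prod S$. Therefore $H \leq \prod S$, and $\prod S$ is the $\leq$-greatest lower bound of $S$.

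There is no real obstacle here: everything follows directly from the definitions of $\leq$, $\prod$, and the properties of the reflexive-transitive closure (monotonicity and idempotence on already closed graphs). The only subtle point to flag is that the argument works uniformly for empty, finite, or infinite $S$ — in particular, for $S = \emptyset$ the union is empty, its closure is $G_\emptyset$, and every causal graph is trivially a supergraph of $G_\emptyset$, so $\prod \emptyset = G_\emptyset$ is indeed the $\leq$-top element as expected.
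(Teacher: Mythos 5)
Your proof is correct and follows essentially the same route as the paper, which simply observes that $\prod S$ is the supremum of $S$ under the subgraph relation and hence the infimum under $\leq$. If anything, your version is more careful than the paper's: you explicitly handle the reflexive--transitive closure (using that any lower bound $H$ is already closed and that closure is monotone), a point the paper's proof glosses over by writing the edge set of $\prod S$ as a plain union.
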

\begin{proof}
%{Proposition~\ref{prop:prod.is.infimum}}
By definition $\prod S$ is the causal graph whose vertices and edges are respectively the sets
$V(\prod S)=\bigcup\setbm{ V(G) }{ G\in S }$ and
$E(\prod S)=\bigcup\setbm{ E(G) }{ G\in S }$.
It is easy to see that $\prod S$ is the supremum of the subgraph relation, so that,
since for every pair of causal graphs $G\leq G'$ iff $G\supseteq G'$, it follows that infimum of $S$ w.r.t. $\leq$.\qed
\end{proof}

\begin{proposition}[Application distributivity w.r.t. products over causal graphs]
\label{prop:graphs.appl.distr.over.prods}
For every pair of sets of causal graphs $S$ and $S'$, it holds that
\begin{IEEEeqnarray*}{c+x*}
\big( \prod S \big) \cdot \big( \prod S' \big)
	= \prod \setbm{ G\cdot G' }{ G \in S \text{ and } G'\in S' }. \hspace{82pt} &
\end{IEEEeqnarray*}
\end{proposition}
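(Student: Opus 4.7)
The plan is to unfold all the definitions so that both sides become reflexive-transitive closures of a common underlying directed graph, and then verify equality at the level of generating vertex and edge sets. Throughout, I will use the idempotency identity $(A \cup B^*)^* = (A \cup B)^*$ to push every outer $(\,\cdot\,)^*$ to the outside of any union, which lets me ignore the intermediate closures introduced by $\prod$ and by $\cdot$.

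First I would expand the LHS. By definition, $(\prod S) \cdot (\prod S') = \bigl((\prod S) \odot (\prod S')\bigr)^*$, so its vertex set is $V_S \cup V_{S'}$, where $V_S \eqdef \bigcup_{G \in S} V(G)$ and $V_{S'} \eqdef \bigcup_{G' \in S'} V(G')$, and its edge set is the closure of
\[
\bigcup_{G \in S} E(G) \ \cup \ \bigcup_{G' \in S'} E(G') \ \cup \ (V_S \times V_{S'}).
\]
For the RHS, each $G \cdot G'$ contributes vertex set $V(G) \cup V(G')$ and generating edges $E(G) \cup E(G') \cup (V(G) \times V(G'))$; the outer $\prod$ takes the union of these over all $(G,G') \in S \times S'$ and closes, yielding vertex set $\bigcup_{(G,G')}\!(V(G) \cup V(G'))$ and the closure of $\bigcup_G E(G) \cup \bigcup_{G'} E(G') \cup \bigcup_{(G,G')}\!(V(G) \times V(G'))$ as edge set.

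The crux is then the set-theoretic identity
\[
\bigcup_{(G,G') \in S \times S'} V(G) \times V(G') \ = \ V_S \times V_{S'},
\]
together with the analogous vertex identity $\bigcup_{(G,G')}(V(G) \cup V(G')) = V_S \cup V_{S'}$. Both hold provided $S$ and $S'$ are non-empty: any cross pair $(v, v') \in V_S \times V_{S'}$ admits witnesses $G \ni v$ and $G' \ni v'$, so it appears in the union on the left, and the reverse inclusion is immediate. Once these identities are in hand, the two flat generating graphs beneath the outer $()^*$ coincide, so their closures do too, yielding the desired equality.

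The main obstacle is precisely this cross-edge identity together with the implicit non-emptiness assumption: the boundary case $S = \emptyset$ (or $S' = \emptyset$) falls outside the intended scope of the statement, since then $\prod \emptyset = G_\emptyset$ and $G_\emptyset \cdot H = H$ so the two sides need not agree, and would have to be excluded by convention or handled separately. Everything else reduces to routine bookkeeping with unions and closures, relying only on Proposition~\ref{prop:graph.appl.associative}-style manipulations already justified in the paper.
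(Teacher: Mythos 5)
Your proof is correct and follows essentially the same route as the paper's: both reduce the claim to the cross-edge identity $\bigcup_{(G,G')\in S\times S'} V(G)\times V(G') = V_S\times V_{S'}$ after pushing all intermediate closures to the outside, and both restrict to non-empty $S$ and $S'$ (the paper states this assumption explicitly at the start of its argument). Your explicit remark that the case $S=\emptyset$ genuinely fails is a slightly more careful treatment of the boundary case than the paper's passing note, but the substance of the argument is the same.
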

\begin{proof}
%{Proposition~\ref{prop:graphs.appl.distr.over.prods}}
For readability sake, we define two causal graphs
\begin{align*}
G_R &\eqdef \big( \prod S \big) \cdot \big( \prod S' \big)
&\text{and}&&
G_L &\eqdef \prod \setbm{ G\cdot G' }{ G \in S \text{ and } G'\in S' } 
\end{align*}
and we assume that both $S$ and $S'$ are not empty sets.
Note that $\prod\emptyset=\causes=\prod\set{G_\emptyset}$.
Then, by product definition, it follows that
\begin{IEEEeqnarray*}{rClCl C l}
E(G_L)	&=& \Big(
			&\bigcup&\setbm{ E(G) }{ G\in S}
			\cup 
			\bigcup\setbm{ E(G') }{ G\in S'}
			\cup E_L
			\Big)^*
\\
E(G_R)	&=& \Big(
			&\bigcup&
			\setbm{
				E(G)\cup E(G')\cup E_R(G,G')
			}{ G\in S \text{ and } G'\in S' }
			\Big)^*
\end{IEEEeqnarray*}
where
\vspace{-2\topsep}
\begin{IEEEeqnarray*}{cCl}
E_L	&=&\setbm{ (l,l') }
		{ l\in \bigcup\setm{V(G)}{G\in S}
		\text{ and }
		l'\in \bigcup\setm{V(G')}{G'\in S'}}
\\
E_R(G,G')
	&=&\setm{ (l,l') }
					{ l\in V(G) \text{ and } l'\in V(G') }
\end{IEEEeqnarray*}
Furthermore let
$E_R=\bigcup\setm{E_R(G,G')}{ G\in S \text{ and } G'\in S' }$.
For every edge $(l,l')\in E_L$ there are a pair of c-graphs $G\in S$ and $G'\in S'$ s.t. $l\in V(G)$ and $l'\in V(G')$ and then
$(l,l')\in E_R(G,G')$ and so $(l,l')\in E_R$.
Moreover, for every edge $(l,l')\in E_R$ there are a pair of c-graphs $G\in S$ and $G'\in S'$ s.t. $(l,l')\in E_R(G,G')$ with $l\in V(G)$ and $l'\in V(G)'$. So that $(l,l')\in E_L$.
That is $E_L=E_R$.
Then
\begin{IEEEeqnarray*}{rCCCl}
E(G_R)	&=\Big(
		&&\bigcup&
		\setbm{ E(G) }{ G\in S} \cup 
			\bigcup\setbm{ E(G') }{ G'\in S'}
		\cup E_R
		\Big)^*
\\		&=&\IEEEeqnarraymulticol{3}{l}{
			\big(E(G_L)\backslash E_L \cup E_R	\big)^*
		=\big(E(G_L)\big)^* = E(G_L)
		}
\end{IEEEeqnarray*}
Consequently $G_L=G_R$.\qed
\end{proof}

\begin{proposition}[Transitive application distributivity w.r.t. products over causal graphs]
\label{prop:graphs.appl.distr.over.prods.cont}
For any causal graphs $G$, $G'\neq\emptyset$ and $G''$, it holds that
\begin{align*}
	G \cdot G' \cdot G'' = G \cdot G' * G' \cdot G''
\end{align*}
\end{proposition}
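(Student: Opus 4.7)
The plan is to establish the equality of the two causal graphs by directly comparing their vertex sets and their edge sets, relying on the definitions of $\cdot$ and $*$ unfolded through the transitive closure.

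First I would observe that both sides have the same vertex set $V(G) \cup V(G') \cup V(G'')$: on the left this follows from Proposition~\ref{prop:graph.appl.associative} combined with the definition of concatenation, and on the right it is immediate from $V(G \cdot G') \cup V(G' \cdot G'') = (V(G) \cup V(G')) \cup (V(G') \cup V(G''))$. Then I would unfold the edges. For $G \cdot G' \cdot G''$, using associativity and the definition of $\odot$, the edge set is $E(G) \cup E(G') \cup E(G'')$ together with all pairs in $V(G) \times V(G')$, $V(G') \times V(G'')$, and $V(G) \times V(G'')$ (transitive closure adds nothing further since these edges already respect the three-layer ordering). For $(G \cdot G') * (G' \cdot G'')$, unfolding gives the same $E(G) \cup E(G') \cup E(G'')$ together with $V(G) \times V(G')$ and $V(G') \times V(G'')$, followed by a transitive closure.

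The one nontrivial point---and the main obstacle---is producing the full block $V(G) \times V(G'')$ on the right-hand side, which is where the hypothesis $G' \neq \emptyset$ becomes essential. Fixing any pivot vertex $v \in V(G')$, for each $l \in V(G)$ and $l'' \in V(G'')$ we have $(l,v) \in G \cdot G'$ and $(v, l'') \in G' \cdot G''$ by the definition of concatenation. Hence $(l, l'')$ appears in the transitive closure of $(G \cdot G') \cup (G' \cdot G'')$, recovering exactly the edges that were missing before closure. Conversely, no spurious edges are produced on either side since every edge present stays within the ordered three-block structure $V(G) \to V(G') \to V(G'')$.

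Finally, I would note that if $G' = \emptyset$ the pivot argument fails: the left-hand side would still contain the $V(G) \times V(G'')$ edges (because those edges are added directly by the outer concatenation in $(G \cdot G') \cdot G''$), whereas the right-hand side would only contain edges inside $G$ and inside $G''$ with no bridge between them, so the identity would genuinely fail. This pinpoints why the hypothesis $G' \neq \emptyset$ cannot be dropped, and closes the argument.
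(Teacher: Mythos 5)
Your proof is correct and rests on the same key idea as the paper's: the only edges present on the left-hand side but absent (before closure) from the right-hand side are those in $V(G)\times V(G'')$, and these are recovered on the right by pivoting through an arbitrary vertex of the nonempty $G'$ and invoking transitive closure. The paper casts this as a proof by contradiction on a hypothetical edge of $G\cdot G'\cdot G''$ missing from $(G\cdot G')*(G'\cdot G'')$ rather than as an explicit edge-set computation, but the substance is identical; your parenthetical claim that the left-hand side needs no further closure is not quite accurate when the vertex sets of $G$, $G'$ and $G''$ overlap, but nothing in your argument actually depends on it, since both sides are closed anyway.
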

\begin{proof}
It is clear that
$G \cdot G' \cdot G''\leq G \cdot G'$
and
$G \cdot G' \cdot G''\leq G' \cdot G''$
and then
$G \cdot G' \cdot G'' \leq G \cdot G' * G' \cdot G''$.
Let $G_{1}$, $G_{2}$, $G_{L}$ and $G_R$ be respectively
$G_1=G \cdot G'$,
$G_2=G' \cdot G''$,
$G_L=G_1 \cdot G''$,
and
$G_R=G_1*G_2$.
Suppose that
$G_L<G_R$,
i.e. $G_L\supset G_R$ and there is an edge
$(v_1,v_2)\in G_L$ but $(v_1,v_2)\notin G_R$.
Then $G_1\subseteq G_R$ and $G''\subseteq G_2\subseteq G_R$ and
one of the following conditions holds:
\vspace{-\topsep}
\begin{enumerate}
\item $(v_1,v_2)\in G_1\subseteq G_R$
or $(v_1,v_2)\in G''\subseteq G_R$ which is a contradiction with
$(v_1,v_2)\notin G_R$.
\item $v_1\in G_1$ and $v_2\in G''$,
i.e. $v_1\in G$ and $v_2\in G''$
or $v_1\in G'$ and $v_2\in G''$.
Furthermore, if the last it is clear that
$(v_1,v_2)\in G'\cdot G''=G_2\subseteq G_R$
which is a contradiction with
$(v_1,v_2)\notin G_R$.
\end{enumerate}
Thus it must be that
$v_1\in G$ and $v_2\in G''$.
But then, since $G'\neq\emptyset$
there is some $v'\in G'$
and consequently there are
edges
$(v_1,v')\in G\cdot G'=G_1\subseteq G_R$
and
$(v',v_2)\in G'\cdot G''=G_2\subseteq G_R$.
Since $G_R$ is closed transitively,
$(v_1,v_2)\in G_R$ which is a contradiction with the assumption that
\mbox{$(v_1,v_2)\notin G_R$}.
That is, $G_L=G \cdot G' \cdot G'' = G \cdot G' * G' \cdot G''=G_R$.
\end{proof}

\begin{proposition}[Application idempotence w.r.t. singleton causal graphs]
For any causal graph $G$ whose only edge is $(l,l)$, $G \cdot G = G$.
\end{proposition}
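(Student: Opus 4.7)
The plan is to unfold the definition of $\cdot$ applied to the atomic graph $G$ and verify that the resulting set of vertices and edges coincides with those of $G$ itself, so that the reflexive–transitive closure has no effect.

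First, I would recall that $G$ has $V(G)=\{l\}$ and $E(G)=\{(l,l)\}$ by hypothesis. By definition, $G \cdot G = (G \odot G)^*$, where $G \odot G$ has vertex set $V(G) \cup V(G) = \{l\}$ and edge set $E(G) \cup E(G) \cup \{(x,y) \mid x \in V(G), y \in V(G)\} = \{(l,l)\} \cup \{(l,l)\} \cup \{(l,l)\} = \{(l,l)\}$. Hence $G \odot G$ has exactly the same vertices and edges as $G$.

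Next I would observe that $G$ is already reflexive (the only vertex $l$ has the self-loop $(l,l) \in E(G)$) and trivially transitively closed (the only possible composable pair is $(l,l)$ with itself, which yields $(l,l)$ again). Therefore $(G \odot G)^* = G \odot G = G$, from which $G \cdot G = G$ follows.

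There is essentially no obstacle here: the statement is a direct computation, and the only subtlety worth mentioning is that the definition of concatenation forces the cross-edge $\{(x,y) \mid x \in V, y \in V'\}$, which in this singleton case collapses back to $\{(l,l)\}$ rather than introducing new edges. So the proof is really a two-line unfolding of the definitions, and I would present it as such.
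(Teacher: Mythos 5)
Your proof is correct and follows essentially the same route as the paper's: unfold the concatenation, observe that the cross-edge set $\{(x,y)\mid x\in V(G),\ y\in V(G)\}$ collapses to $\{(l,l)\}$, and conclude. If anything, you are slightly more careful than the paper in explicitly noting that the closure $(\cdot)^*$ adds nothing because the singleton graph is already reflexive and transitive.
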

\begin{proof}
By definition $G \cdot G = G \cup G \cup \setm{ (v_1,v_2) }{ v_1 \in G \text{ and } v_2 \in G }$. Since the only vertex of $G$ is $l$, it follows that $G\cdot G = G \cup \set{ (l,l) } = G$.\qed
\end{proof}

\begin{proposition}[Application absorption over causal grapsh]
Let $G_1$, $G_2$ and $G_3$ be three causal graphs. Then $G_1 \cdot G_2 \cdot G_3 = G_2 * G_1 \cdot G_2 \cdot G_3$.
\end{proposition}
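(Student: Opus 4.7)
The plan is to reduce the identity to the observation that $G_2$ is already a subgraph of $G_1 \cdot G_2 \cdot G_3$, after which the absorption collapses by the infimum property of the product.

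First, I would note that Proposition~\ref{prop:graph.appl.associative} makes the triple concatenation $G_1 \cdot G_2 \cdot G_3$ unambiguous, so we may compute it by a single application of the definition: its vertex set is $V(G_1) \cup V(G_2) \cup V(G_3)$ and its edge set is the transitive closure of $E(G_1) \cup E(G_2) \cup E(G_3)$ together with all cross-edges between consecutive layers. From this description it is immediate that $V(G_2) \subseteq V(G_1 \cdot G_2 \cdot G_3)$ and $E(G_2) \subseteq E(G_1 \cdot G_2 \cdot G_3)$, i.e.\ $G_2 \subseteq G_1 \cdot G_2 \cdot G_3$ as graphs.

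Second, I would translate this subgraph inclusion into the sufficiency order: by the very definition of $\leq$, $G_2 \subseteq G_1\cdot G_2\cdot G_3$ is the same as $G_1\cdot G_2\cdot G_3 \leq G_2$. Hence $G_1\cdot G_2\cdot G_3$ is a lower bound of the pair $\{G_2, G_1\cdot G_2\cdot G_3\}$ in the poset $\langle \causes,\leq\rangle$, and trivially it is the greatest such (any lower bound of the pair is in particular below $G_1\cdot G_2\cdot G_3$ itself).

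Finally, I would invoke Proposition~\ref{prop:prod.is.infimum}, which states that the product computes the $\leq$-infimum of any set of causal graphs. Applied to $S=\{G_2,\; G_1\cdot G_2\cdot G_3\}$, this yields
\begin{equation*}
G_2 * (G_1\cdot G_2\cdot G_3) \;=\; \inf\{G_2,\; G_1\cdot G_2\cdot G_3\} \;=\; G_1\cdot G_2\cdot G_3,
\end{equation*}
which is exactly the claimed absorption law. There is no real obstacle here; the only point to handle carefully is verifying the subgraph containment $G_2 \subseteq G_1\cdot G_2\cdot G_3$ directly from the closure-based definition of $\cdot$, since the transitive closure could in principle obscure which original edges survive---but since closure only adds edges, every edge of $G_2$ is retained.
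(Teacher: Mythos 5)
Your proof is correct and rests on the same key observation as the paper's, namely that $G_2 \subseteq G_1 \cdot G_2 \cdot G_3$ since concatenation only adds vertices and edges. The only cosmetic difference is that you finish via the infimum characterisation of the product (Proposition~\ref{prop:prod.is.infimum}), whereas the paper computes $(G_2 \cup G_1\cdot G_2\cdot G_3)^* = (G_1\cdot G_2\cdot G_3)^*$ directly from the union-and-close definition of `$*$'; the two are trivially equivalent.
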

\begin{proof}
By definition, it is clear that $G_1\cdot G_2 \cdot G_3 \supseteq G_2$ and then
$$G_2 * G_1 \cdot G_2 \cdot G_3 = (G_2 \cup G_1 \cdot G_2 \cdot G_3)^* = (G_1 \cdot G_2 \cdot G_3)^* = G_1 \cdot G_2 \cdot G_3$$
\end{proof}

\begin{proposition}[Absorption Extended]\label{prop:term.absorption.deriv}
Let $a$ and $b$ be elements of an algebra holding the identity and absorption equivalences showed in Figure~\ref{fig:appl}. Then
\begin{align*}
a * a \cdot b &= a \cdot b &&&
a * b \cdot a &= b \cdot a
\end{align*}
\end{proposition}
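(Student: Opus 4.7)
The plan is to derive both equations as direct instances of the absorption law $u \cdot t \cdot w = t * u \cdot t \cdot w$ from Figure~\ref{fig:appl}, after using the identity laws $t = 1 \cdot t$ and $t = t \cdot 1$ to insert an appropriate unit so that $a$ appears in the middle position of a concatenation.

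For the first equation $a * a \cdot b = a \cdot b$, I would start from the right-hand side and use the left identity to rewrite $a \cdot b = 1 \cdot a \cdot b$. Now this has the shape $u \cdot t \cdot w$ with $u = 1$, $t = a$, $w = b$, so the absorption law yields $1 \cdot a \cdot b = a * 1 \cdot a \cdot b$. Applying the left identity again inside the second summand gives $a * a \cdot b$, finishing the derivation.

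The second equation $a * b \cdot a = b \cdot a$ is symmetric: apply the right identity to write $b \cdot a = b \cdot a \cdot 1$, match the pattern $u \cdot t \cdot w$ with $u = b$, $t = a$, $w = 1$, and invoke absorption to get $b \cdot a \cdot 1 = a * b \cdot a \cdot 1$. A final application of the right identity produces $a * b \cdot a$.

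I do not expect any real obstacle here, since the whole argument is a two-line algebraic manipulation; the only subtlety worth flagging is that the absorption law as stated requires the middle factor to appear between two concatenations, so the identity laws are genuinely needed to massage $a \cdot b$ and $b \cdot a$ into the required three-factor form before absorption can be invoked.
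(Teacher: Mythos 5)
Your derivation is correct and matches the paper's own proof essentially verbatim: both use the identity laws to pad $a\cdot b$ (resp.\ $b\cdot a$) into the three-factor form $1\cdot a\cdot b$ (resp.\ $b\cdot a\cdot 1$) and then invoke the absorption law $u\cdot t\cdot w = t * u\cdot t\cdot w$; the only difference is that you read the chain of equalities from right to left. Nothing to add.
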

\begin{proof}
The proof follow from the following equivalences:
\begin{gather*}
\begin{aligned}
a * a \cdot b
	&= a * 1 \cdot a \cdot b	&&\text{(identity)}\\
	&= 1 \cdot a \cdot b		&&\text{(absorption)}\\
	&= a \cdot b				&&\text{(identity)}
\end{aligned}
\hspace{1cm}
\begin{aligned}
a * b \cdot a
	&= a * b \cdot a \cdot 1	&&\text{(identity)}\\
	&= b \cdot a \cdot 1		&&\text{(absorption)}\\
	&= b \cdot a				&&\text{(identity)}
\end{aligned}
\end{gather*}
\end{proof}

\newpage

\begin{proposition}[Transitivity extended]\label{prop:term.nosum.transitivity}
Let $a$, $b$ and $c$ be elements of an algebra holding the identity and absorption equivalences showed in Figure~\ref{fig:appl} such that $b$ is different from $1$. Then
\begin{align*}
a \cdot b * b \cdot c
	= a \cdot b * b \cdot c * a \cdot c
\end{align*}
follows from application associative, identity and absorption and distributivity over products.
\end{proposition}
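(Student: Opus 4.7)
The strategy is to insert the term $a \cdot c$ into the right-hand side using the fact that $c$ is already ``absorbed'' by $b \cdot c$, and then to apply distributivity of $\cdot$ over $*$ in order to factor a common $a$. The key observation is that by the (extended) absorption established in Proposition~\ref{prop:term.absorption.deriv}, $c * b \cdot c = b \cdot c$, so we may replace $b \cdot c$ by $(b \cdot c) * c$ without changing its value, and then use product distributivity to push $a \cdot$ inside this meet.

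Concretely, the plan is to start from the left-hand side $a \cdot b * b \cdot c$ and transform it as follows. First, use product distributivity from Figure~\ref{fig:appl} (this is the only step that requires $b \neq 1$) to contract $a \cdot b * b \cdot c = a \cdot b \cdot c$. Then rebracket by associativity to $a \cdot (b \cdot c)$, and rewrite the inner factor using Proposition~\ref{prop:term.absorption.deriv} in the reverse direction as $b \cdot c = (b \cdot c) * c$, obtaining $a \cdot \bigl( (b\cdot c) * c \bigr)$. Applying distributivity of $\cdot$ over $*$ yields $a \cdot (b \cdot c) * a \cdot c$, which by associativity equals $a \cdot b \cdot c * a \cdot c$. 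Finally, a second application of product distributivity (again using $b \neq 1$) expands $a \cdot b \cdot c$ back into $a \cdot b * b \cdot c$, giving the right-hand side $a \cdot b * b \cdot c * a \cdot c$.

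The main obstacle, such as it is, is spotting the right rewriting: one must see that the missing $a \cdot c$ can be manufactured by first inserting a redundant factor $c$ on the meet side via the derived absorption lemma, and only then distributing $a \cdot$ through. Everything else is bookkeeping with associativity and the forward/backward use of the product-distributivity law $c \cdot d \cdot e = (c \cdot d) * (d \cdot e)$, which is exactly the place where the hypothesis $b \neq 1$ is consumed.
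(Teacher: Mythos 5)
Your proof is correct and is essentially the paper's own argument run in the opposite direction: the paper starts from $a \cdot b * b \cdot c * a \cdot c$ and contracts it to $a \cdot b * b \cdot c$ using exactly the same chain — the product-distributivity (transitivity) law $c\cdot d\cdot e=(c\cdot d)*(d\cdot e)$ with $d\neq 1$, associativity, factoring $a\cdot(\,\cdot\,)$ out of the meet, and the derived absorption $c * b\cdot c = b\cdot c$. Since every step is an equality, reading the chain left-to-right as you do is the same proof.
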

\begin{proof}
\begin{align*}
a \cdot b * b\cdot c * a \cdot c
	&= (a \cdot b * b \cdot c) * a \cdot c
	&&\text{(associative `$*$')}
\\
	&= (a \cdot b \cdot c) * a \cdot c
	&&\text{(transitivity)}
\\
	&= a \cdot (b \cdot c) * a \cdot c
	&&\text{(associative '$\cdot$')}
\\
	&= a \cdot ( b \cdot c *  c)
 	&&\text{(distributivity)}
\\
	&= a \cdot ( b \cdot c )
 	&&\text{(absorption ext.)}
\\
	&= a \cdot b \cdot c
	&&\text{(associativity)}
\\
	&= a \cdot b * b \cdot c
	&&\text{(transitivity)}
\end{align*}
\end{proof}

\begin{corollary}
Given causal graphs $G_1$, $G_2$, $G_3$ and $G_l$ such that $(l,l)$ is the only edge of $G_l$, the equivalences reflected in the Figures~\ref{fig:equivalences.causal.graphs}~and~\ref{fig:equivalences.causal.graphs.der} hold.
\end{corollary}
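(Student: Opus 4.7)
The plan is to treat the two figures separately, since each corresponds to a different level of the development. For Figure~\ref{fig:equivalences.causal.graphs}, every listed equivalence is, by inspection, exactly the statement of one of the graph-level propositions already proved in this appendix, so the proof here is a line-by-line citation: associativity of $\cdot$ is Proposition~\ref{prop:graph.appl.associative}; application distributivity over products is Proposition~\ref{prop:graphs.appl.distr.over.prods}; the transitivity-style law $G\cdot G'\cdot G'' = G\cdot G' * G'\cdot G''$ is Proposition~\ref{prop:graphs.appl.distr.over.prods.cont}; idempotence $G_l \cdot G_l = G_l$ is the singleton-idempotence proposition (the hypothesis that $(l,l)$ is the only edge of $G_l$ is precisely what activates this step); and absorption is the graph-level absorption proposition. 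No new calculation is required.

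For Figure~\ref{fig:equivalences.causal.graphs.der}, my strategy is to lift Propositions~\ref{prop:term.absorption.deriv} and~\ref{prop:term.nosum.transitivity} from the abstract algebra to causal graphs. Both of those propositions were derived purely from identity, absorption, associativity and distributivity over products. It therefore suffices to exhibit a graph-level identity element and to verify the remaining laws on graphs. The identity is $G_\emptyset$: by the concatenation definition $G \cdot G_\emptyset = (G \cup G_\emptyset \cup \emptyset)^* = G^* = G$, since $V(G_\emptyset)=\emptyset$ leaves no bridging edges and $G$ is already reflexively and transitively closed; the symmetric argument gives $G_\emptyset \cdot G = G$. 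Absorption, associativity and distributivity over products are exactly the graph-level propositions just collected. Once these hypotheses are in place, the proofs of Propositions~\ref{prop:term.absorption.deriv} and~\ref{prop:term.nosum.transitivity} go through verbatim on causal graphs and deliver the derived equivalences.

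The step I expect to require the most care is the side condition $b \neq 1$ in Proposition~\ref{prop:term.nosum.transitivity}, which at the graph level reads $G_2 \neq G_\emptyset$ and is exactly the non-emptiness hypothesis appearing in Proposition~\ref{prop:graphs.appl.distr.over.prods.cont}. I would check, for each instance of the extended transitivity or extended absorption appearing in Figure~\ref{fig:equivalences.causal.graphs.der}, that either the hypothesis is stated explicitly or the instantiation forces non-emptiness automatically (as is the case when the middle factor is a singleton graph $G_l$, whose single edge $(l,l)$ guarantees $G_l \neq G_\emptyset$). Everything else is mechanical substitution of the graph-level laws into the already-proved algebraic derivations, so no new combinatorial reasoning about edges or transitive closures is needed beyond what has been established in the preceding propositions.
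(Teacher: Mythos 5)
Your proposal is correct and matches the paper's (implicit) argument exactly: the corollary carries no written proof precisely because Figure~\ref{fig:equivalences.causal.graphs} is a line-by-line restatement of the preceding graph-level propositions, and Figure~\ref{fig:equivalences.causal.graphs.der} follows by instantiating the abstract Propositions~\ref{prop:term.absorption.deriv} and~\ref{prop:term.nosum.transitivity} in the algebra of causal graphs. Your explicit verification of the identity law $G\cdot G_\emptyset = G$ (which the paper never isolates as a separate graph-level proposition) and your attention to the side condition $G_2\neq G_\emptyset$ in the derived transitivity law are welcome tightenings rather than deviations.
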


\begin{figure}[htbp]
\footnotesize
\begin{center}
\newcommand{\titleSep}{-5pt}
\newcommand{\contentSep}{-13pt}
\newcommand{\rowSep}{5pt}
$
\begin{array}{c}
\hbox{\em Idempotence}\vspace{\titleSep}\\
\hline\vspace{\contentSep}\\
\begin{array}{r@{\ }c@{\ }l c l}
G_l & \cdot & G_l & = & G_l\\
\\
\end{array}
\end{array}
$
\ \ \ \
$
\begin{array}{c}
\hbox{\em Associativity}\vspace{\titleSep}\\
\hline\vspace{\contentSep}\\
\begin{array}{r@{\ }c@{\ }r@{}c@{}l c r@{}c@{}l@{\ }c@{\ }l@{\ }}
G_1 & \cdot & (G_2 & \cdot & G_3) & = & (G_1 & \cdot & G_2) & \cdot & G_3\\
\\
\end{array}
\end{array}
$
\ \ \ \
$
\begin{array}{c}
\hbox{\em Product\ distributivity}\vspace{\titleSep}\\
\hline\vspace{\contentSep}\\
\begin{array}{rcl}
G_1 \cdot (G_2*G_3)     & = & (G_1 \cdot G_2) * (G_1 \cdot G_3) \\
(G_1*G_2) \cdot G_3     & = & (G_1 \cdot G_3) * (G_2 \cdot G_3)
\end{array}
\end{array}
$
\ \ \ \
$
\begin{array}{c}
\hbox{\em Absorption}\vspace{\titleSep}\\
\hline\vspace{\contentSep}\\
\begin{array}{r@{\ }c@{\ }r@{}c@{}c@{}c@{}l c r@{}c@{}c@{}c@{}l }
G_2 & * & G_1 & \cdot & G_2 & \cdot & G_3 & = & G_1 & \cdot & G_2 & \cdot & G_3\\
\\
\end{array}
\end{array}
$
\ \ \ \
$
\begin{array}{c}
\hbox{\em Identity }\vspace{\titleSep}\\
\hline\vspace{\contentSep}\\
\begin{array}{r@{\ }c@{\ }l@{\ \ } c@{\ \ } r@{}c@{}l }
G_1 & \cdot & G_\emptyset & = & G_1\\
G_\emptyset & \cdot & G_1 & = & G_1\\
\end{array}
\end{array}
$
\ \ \ \
$
\begin{array}{c}
\hbox{\em Transitivity}\vspace{\titleSep}\\
\hline\vspace{\contentSep}\\
\begin{array}{r@{\ }c@{\ }l }
G_1 \! \cdot \! G_2 \! \cdot \! G_3 
	& = & (G_1 \! \cdot \! G_2) * (G_2 \! \cdot \! G_3) \ \ \hbox{with} \ G_2 \neq G_\emptyset \\
\\
\end{array}
\end{array}
$
\end{center}
\vspace{-5pt}
\caption{Properties of the `$\cdot$' and `$*$' operators over causal graphs ($G_l$ only contains the edge $(l,l)$).}
\label{fig:equivalences.causal.graphs}
\end{figure}

\begin{figure}[htbp]
\footnotesize
\begin{center}
\newcommand{\titleSep}{-5pt}
\newcommand{\contentSep}{-13pt}
\newcommand{\rowSep}{5pt}
$
\begin{array}{c}
\hbox{\em Absorption (der)}\vspace{\titleSep}\\
\hline\vspace{\contentSep}\\
\begin{array}{r@{\ }c@{\ }l c r@{}c@{}l }
G_1 & * & ( G_1 \cdot G_2 ) & = & G_1 & \cdot & G_2\\
G_1 & * & ( G_2 \cdot G_1 ) & = & G_2 & \cdot & G_1
\end{array}
\end{array}
$
\ \ \ \
$
\begin{array}{c}
\hbox{\em Transitivity (der)}\vspace{\titleSep}\\
\hline\vspace{\contentSep}\\
\begin{array}{r@{\ }c@{\ }l}
( G_1 \! \cdot \! G_2 ) * ( G_2 \! \cdot \! G_3 )
	& = &
	(G_1 \! \cdot \! G_2) * (G_2 \! \cdot \! G_3) * (G_1 \! \cdot \! G_3)
\\
\\
\end{array}
\end{array}
$
\end{center}
\vspace{-5pt}
\caption{Properties following from those in Figure~\ref{fig:equivalences.causal.graphs}.}
\label{fig:equivalences.causal.graphs.der}
\end{figure}

\newpage

\begin{theorem}\label{thm:graphs.free.algebra}
Given a set of labels $Lb$, $\tuple{\causes,*,\cdot}$ is the free algebra generated by $Lb$ defined by equations in the Figure~\ref{fig:equivalences.causal.graphs}, i.e. the mapping $graph:Lb\longrightarrow\causes$ mapping each label $l$ to the graph $G_l$ containing the only edge $(l,l)$ is an injective (preserving-idempotence) homomorphism and for any set $F$ and idempotence-preserving map $\delta:Lb\longrightarrow F$, there exists a homomorphism \mbox{$term:\causes\longrightarrow F$} defined as \mbox{$term(G) \mapsto \prod\setm{ \delta(l_1) \cdot \delta(l_2) }{ (l_1,l_2) \in G }$} such that $\delta = term \circ graph$.
\end{theorem}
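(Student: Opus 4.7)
The plan is to verify the two parts of the universal property: first that $graph$ is an injective, idempotence-preserving homomorphism, and second that for any target algebra $F$ satisfying the equations in Figure~\ref{fig:equivalences.causal.graphs} and any idempotence-preserving $\delta:Lb\to F$, the proposed $term$ is a well-defined homomorphism extending $\delta$; uniqueness will follow from Proposition~\ref{prop:graph.NF}.

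First I would check the easy clause: $graph$ is injective because distinct labels $l\neq l'$ give graphs $G_l\neq G_{l'}$ with different unique edges, and it preserves idempotence since $G_l\cdot G_l=G_l$ was already established. Next, for the compatibility condition, I would evaluate \mbox{$term(G_l)=\prod\{\delta(l)\cdot\delta(l)\}=\delta(l)\cdot\delta(l)=\delta(l)$}, using that $\delta$ preserves idempotence, so $\delta=term\circ graph$.

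The central step is showing that $term$ is a homomorphism, i.e.\ $term(G*G')=term(G)*term(G')$ and $term(G\cdot G')=term(G)\cdot term(G')$. My plan is to work from the definitions of $*$ and $\cdot$ on causal graphs (union/concatenation followed by reflexive-transitive closure) and to factor the argument through two observations. Observation one: any edge $(l,l)$ contributed by reflexive closure gives a term $\delta(l)\cdot\delta(l)=\delta(l)$ which is absorbed by the product in $F$ (using idempotence and absorption in Figure~\ref{fig:equivalences.causal.graphs}). Observation two: any edge $(l_1,l_3)$ contributed by transitive closure from witnesses $(l_1,l_2)$ and $(l_2,l_3)$ gives a term $\delta(l_1)\cdot\delta(l_3)$ which, by the derived transitivity equation (Figure~\ref{fig:equivalences.causal.graphs.der}, shown to follow from the defining equations in Proposition~\ref{prop:term.nosum.transitivity}), is already absorbed by $(\delta(l_1)\cdot\delta(l_2))*(\delta(l_2)\cdot\delta(l_3))$ in $F$. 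Consequently, closing the edge set under reflexivity and transitivity does not change the value of the product in $F$. With this in hand, $term(G*G')$ equals the product over edges of $G\cup G'$, which splits as the $*$-product of the two sub-products, giving $term(G)*term(G')$. For concatenation, the edge set of $G\cdot G'$ is $E(G)\cup E(G')\cup\{(l,l')\mid l\in V(G), l'\in V(G')\}$ closed under transitivity; the cross edges $\delta(l)\cdot\delta(l')$ assemble, using product distributivity of `$\cdot$' over `$*$' in $F$, into $\bigl(\prod_{l\in V(G)}\delta(l)\bigr)\cdot\bigl(\prod_{l'\in V(G')}\delta(l')\bigr)$, and further absorption/transitivity arguments let me rewrite $term(G\cdot G')$ as $term(G)\cdot term(G')$.

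Finally, for uniqueness, Proposition~\ref{prop:graph.NF} tells us that every causal graph $G$ factors as $G=\prod\{l_1\cdot l_2\mid (l_1,l_2)\in G\}$, so any homomorphism $h:\causes\to F$ extending $\delta$ must satisfy $h(G)=\prod\{\delta(l_1)\cdot\delta(l_2)\mid(l_1,l_2)\in G\}=term(G)$. I expect the main obstacle to be the homomorphism check on $*$ and $\cdot$, precisely because the graph operations are defined with an outer reflexive-transitive closure, and one must verify carefully that the defining equations of Figure~\ref{fig:equivalences.causal.graphs} are strong enough to absorb every reflexive and transitive edge that the closure introduces, uniformly for infinite products.
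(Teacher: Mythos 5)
Your proposal is correct and follows essentially the same route as the paper's proof: verify that $graph$ is injective and idempotence-preserving, that $term(graph(l))=\delta(l)\cdot\delta(l)=\delta(l)$, and that $term$ commutes with $*$ and $\cdot$ by reducing each graph to the product of its edges and using the transitivity/absorption/distributivity equations to show that the reflexive--transitive closure contributes nothing new. Your explicit uniqueness argument via Proposition~\ref{prop:graph.NF} is a small addition the paper leaves implicit, but the core homomorphism computation is the same.
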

\begin{proof}
For clarity sake we omit the idempotence-preserving mapping $\delta$ and we write just $l$ instead of $\delta(l)$. Thus the mapping $term:\causes\longrightarrow F$ is just $term(G) \mapsto \prod\setm{ l_1 \cdotl l_2 }{ (l_1,l_2) \in G }$. We start showing that $graph$ preservers the `$\cdot$' idempotence equation.
\begin{align*}
graph(l) \cdot graph(l)
	&= G_l \cdot G_l
	 = G_l \cup G_l
	 \cup \setm{ (l_1,l_2) }{ l_1 \in G_l \text{ and } l_2 \in G_l}
\\
	&= G_l \cup G_l \cup \set{ (l,l) } = G_l \cup G_l \cup G_l = G_l = graph(l)
\end{align*}
Furthermore, suppose $graph(l_1) = graph(l_2)$, i.e. $\set{ (l_1,l_2) }=\set{ (l_2,l_2) }$. Then $l_1 = l_2$. Hence $graph$ is an injective homomorphism. Now we show that $term$ preserves the `$*$'. Let $D = \bigcup_{G \in U} G$. Then
\begin{align*}
\prod_{G \in U} term(G)
	&= \prod_{G \in U} \prod_{ (l_1,l_2) \in G }  l_1 \cdotl l_2 
	 = \prod_{ (l_1,l_2) \in D }  l_1 \cdotl l_2
\\
	&= \prod_{ (l_1,l_2) \in D^* }  l_1 \cdotl l_2 = term(D^*) 
\\
	&= term\Big( \Big( \bigcup_{G \in U} G \Big)^* \Big)
	= term\Big( \prod_{G \in U} G \Big)
\end{align*}
We will show now that $term$ also preserves `$\cdot$'.
\begin{align*}
term(G_1) \cdot term(G_2)
	&= \Big( \prod_{ (u_1,u_2) \in G_u }  u_1 \cdotl u_2 \Big)
		\cdot
		\Big( \prod_{ (v_1,v_2) \in G_v }  v_1 \cdotl v_2 \Big)
\\
	&= \prod_{ (u_1,u_2) \in G_u, \ (v_1,v_2) \in G_v } 
			(u_1\cdotl u_2) \cdot (v_1\cdotl v_2)
\\
		&= \prod_{ (u_1,u_2) \in G_u, \ (v_1,v_2) \in G_v } 
			u_1\cdot ( u_2 \cdotl v_1\cdotl v_2)
\\
		&= \prod_{ (u_1,u_2) \in G_u, \ (v_1,v_2) \in G_v } 
			u_1\cdot ( u_2 \cdotl v_1 * v_1 \cdotl v_2)
\\
		&= \prod_{ (u_1,u_2) \in G_u, \ (v_1,v_2) \in G_v } 
			u_1\cdotl u_2 \cdotl v_1 * u_1\cdotl v_1 \cdotl v_2
\\
		&= \prod_{ (u_1,u_2) \in G_u, \ (v_1,v_2) \in G_v } 
			u_1\cdotl u_2 * u_2 \cdotl v_1 * u_1\cdotl v_1 * v_1 \cdotl v_2 * u_1 \cdotl v_1 * u_1 \cdotl v_2
\\
		&= \prod_{ (u_1,u_2) \in G_u, \ (v_1,v_2) \in G_v } 
				(u_1\cdotl u_2 * v_1 \cdotl v_2) *
				(u_2 \cdotl v_1 * u_1\cdotl v_1 * u_1 \cdotl v_1 * u_1 \cdotl v_2)
\\
		&= \prod_{ (u_1,u_2) \in G_u, \ (v_1,v_2) \in G_v } 
				(u_1\cdotl u_2 * v_1 \cdotl v_2)
			*
			\prod_{ (u_1,u_2) \in G_u, \ (v_1,v_2) \in G_v }
				(u_2 \cdotl v_1 * u_1\cdotl v_1 * u_1 \cdotl v_1 * u_1 \cdotl v_2)
\\
		&= \prod_{ (l_1,l_2) \in G_u \cup G_v } 
				(l_1\cdotl l_2 )
			*
			\prod_{ u \in G_u, \ v \in G_v } (u \cdotl v)		
\\
		&= \prod_{ (l_1,l_2) \in G_u \cdot G_v } 
				(l_1\cdotl l_2 )
		 = term(G_1 \cdot G_2)
\end{align*}
Finally note that $term(graph(l))=l\cdotl l = l$ (note that $l$ stands for the more formally~$\delta(l)$).
\end{proof}

\newpage

\subsection{Alternative causes}

We define an explanation formed by a set of causal graphs $S$ as:
\begin{align*}
\sum S &= G 	&&\text{if } G \in S \text{ and } G*G' = G' \text{ for all } G' \in S
\end{align*}

\begin{theorem}\label{theorem:values.free.algebra.from.graphs}
The set of causal values $\VLb$ with the opperations join `$+$' and meet `$*$ forms the free, complete distributive lattice with generated by the set of causal graphs $\causes$ and further $\downarrow:\causes\longrightarrow\VLb$ is an injective homomorphism from the algebra $\tuple{\causes,+,*,\cdot}$ to $\tuple{\VLb,+,*,\cdot}$.
.\qed
\end{theorem}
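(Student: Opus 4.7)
The plan is to decompose the statement into three assertions and address them in turn: (a) $\tuple{\VLb, +, *}$ is a completely distributive lattice, (b) $\downarrow$ is an injective algebra homomorphism (preserving `$*$' and `$\cdot$'), and (c) the pair $(\VLb, \downarrow)$ enjoys the universal property of the free completely distributive lattice over the meet-semilattice $\tuple{\causes, *}$. The abstract lattice-theoretic machinery can be invoked from Stumme (1997), while the specifically causal content lies in verifying that the concrete operations on $\VLb$ agree with the ideal-completion construction.

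For (a), since $\VLb$ is by definition the collection of down-sets of $\tuple{\causes, \leq}$, and arbitrary intersections and unions of down-sets are again down-sets, $\VLb$ is closed under `$*$' (intersection) and `$+$' (union), forming a complete lattice; complete distributivity is inherited from the set-theoretic distributivity of $\cap$ over arbitrary $\cup$. For (b), injectivity is immediate because $G$ is the unique $\leq$-maximum of $\downarrow G$, so $\downarrow G = \downarrow G'$ forces $G = G'$. The key identity $\downarrow(G_1 * G_2) = \downarrow G_1 * \downarrow G_2$ follows from Proposition~\ref{prop:prod.is.infimum}: since `$*$' is the infimum in $\tuple{\causes, \leq}$, we have $H \leq G_1 * G_2$ iff $H \leq G_1$ and $H \leq G_2$, i.e.\ $H \in \downarrow G_1 \cap \downarrow G_2$. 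Preservation of `$\cdot$' follows from specialising the definition $U \cdot U' \eqdef \downarrow\{G \cdot G' \mid G \in U,\ G' \in U'\}$ to principal ideals, combined with the monotonicity of `$\cdot$' (Proposition~\ref{prop:graphs.monotonicity}), which guarantees that $G_1 \cdot G_2$ is $\leq$-maximum in $\downarrow G_1 \cdot \downarrow G_2$.

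For (c), given any completely distributive lattice $L$ and any monotone, `$*$'-preserving map $f: \causes \to L$, I would define $\hat{f}: \VLb \to L$ by $\hat{f}(U) \eqdef \bigvee\{f(G) \mid G \in U\}$. Then $\hat{f} \circ \downarrow = f$, since $G$ is the maximum of $\downarrow G$ and $f$ is monotone. Preservation of arbitrary joins by $\hat{f}$ is immediate by commuting two suprema, and uniqueness of the extension is forced by the fact that every ideal is the join (in $\VLb$) of its principal sub-ideals. The hard part is showing that $\hat{f}$ also preserves arbitrary meets: given a family $\{U_i\}_{i \in I}$ of ideals, one must equate $\bigvee\{f(G) \mid G \in \bigcap_i U_i\}$ with $\bigwedge_i \bigvee\{f(G) \mid G \in U_i\}$. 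Here complete distributivity of $L$ is essential: the right-hand side expands, via a choice-function argument, into a join over all choice functions $i \mapsto G_i \in U_i$ of the meet $\bigwedge_i f(G_i)$, which by the hypothesis on $f$ equals $f(\prod_i G_i)$; and $\prod_i G_i \in \bigcap_i U_i$ because each $U_i$ is a down-set and $\prod_i G_i \leq G_j$ for every $j$. This is precisely the standard construction of the free completely distributive lattice over a meet-semilattice established in Stumme (1997), so the final step reduces to a bookkeeping invocation of that result applied to $\tuple{\causes, *}$.
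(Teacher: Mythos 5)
Your proposal is correct in substance but takes a genuinely different route for the freeness claim. The paper does not verify the universal property directly: it invokes Stumme's theorem that the concept lattice $\cB\tuple{\Fi,\VLb,\Delta}$ of filters versus ideals of the semilattice $\tuple{\causes,*}$ is the free completely distributive lattice generated by the partial lattice $\tuple{\causes,+,*}$, and then spends almost all of the proof constructing an isomorphism $\epsilon_I:\cB\longrightarrow\VLb$, $(\Id_t^I,\Id_t)\mapsto\bigcap\Id_t$, by characterising the formal concepts as exactly the sets $\setm{I}{I_t\subseteq I}$ for an ideal $I_t$ and checking that $\vee$ and $\wedge$ transport to $\cup$ and $\cap$; the embedding $\downarrow$ is then obtained as the composite $\epsilon_I\circ\epsilon_p$ of Stumme's embedding with this isomorphism, and preservation of `$\cdot$' is verified at the very end exactly as you do, from the definition of `$\cdot$' on ideals together with monotonicity (Proposition~\ref{prop:graphs.monotonicity}). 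Your direct verification of the universal property --- joins by commuting suprema, meets via the choice-function expansion licensed by complete distributivity of the target, uniqueness because every ideal is the join of its principal sub-ideals --- is more elementary and self-contained, and it makes your concluding appeal to Stumme redundant rather than load-bearing, whereas in the paper Stumme's result carries the entire weight of freeness. Two points to make explicit so the argument is airtight: the map $f$ in your universal property must preserve \emph{arbitrary} (possibly infinite) defined infima of causal graphs, not merely binary `$*$', since your key step $\bigwedge_i f(\varphi(i))=f\big(\prod_i\varphi(i)\big)$ is applied to infinite families and mere monotonicity only yields the wrong inequality; and the statement also asserts that $\downarrow$ preserves the partial `$+$' of $\tuple{\causes,+,*,\cdot}$, which you omit but which is trivial because a sum of causal graphs is only defined when some $G\in S$ dominates all others, in which case $\bigcup_{G'\in S}\down{G'}=\down{G}$.
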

\begin{proofof}{Theorem~\ref{theorem:algebra}}
Let $\Fi$ be the set of filters over the lower semilattice $\tuple{\causes,*}$.
Stumme was showed in \cite{Stumme97} that the concept lattice
$\cB\tuple{\Fi,\VLb,\Delta}$
(with $F \Delta I \Leftrightarrow F \cap I \neq \emptyset$) is isomorphic to the free completely distributive complete lattice generated by the partial lattice
$\tuple{\causes,+,*}$ where $+$ and $*$ are two partial functions corresponding with the supremum and infimum.
In our particular, case for every set of causal graphs $S$ its infimum is defined as $\prod S$
and the supremum is defined as $G\in S$
such that $G'\leq G$ for all $G'\in S$,
when such $G$ exists and undefined otherwise.
Thus $\VLb$ is the set of ideals over the partial lattice $\tuple{\causes,+,*}$, i.e. every $I\in\VLb$ is also  closed under defined suprema.
He also show that the elements of such lattice are described as pairs 
\begin{align*}
\setbm{ (\Fi_t,\Id_t) }
	{	\Fi_t\subseteq\Fi,\
		\Id_t\subseteq\VLb,\
		\Fi_t^I=\Id_t
		\text{ and }
		\Fi_t=\Id_t^I
	}
\end{align*}
where\vspace{-0.5cm}
\begin{align*}
\Fi_t^I&=\setbm{ I\in\Id }{ \forall F\in\Fi_t: F\cap I\neq\emptyset}
\\
\Id_t^I&=\setbm{ F\in\Fi }{ \forall I\in\Id_t: F\cap I\neq\emptyset}
\end{align*}
That is, every element is in the form $\tuple{\Id_t^I,\Id_t}$.
Furthermore infima and suprema can be described as follows:
\begin{IEEEeqnarray*}{rCrclCrcl}
\bigwedge_{t\in T}(\Id_t^I,\Id_t)
	&=& \Big(		&\bigcap_{t\in T} \Id_t^I&	,
			\Big(	&\bigcup_{t\in T} \Id_t \Big)^{II} \Big)
\\
\bigvee_{t\in T}(\Id_t^I,\Id_t)
	&=& \Big( 
			\Big(	&\bigcup_{t\in T} \Id_t^I& \Big)^{II},
					&\bigcap_{t\in T} \Id_t&  \Big)
\end{IEEEeqnarray*}
We will show that
$\epsilon_I:\cB\longrightarrow\VLb$ given by
$(\Id_t^I,\Id_t)\mapsto\bigcap \Id_t$
is an isomorphism between $\tuple{\cB,\vee,\wedge}$ and
$\tuple{\VLb,\cup,\cap}$.
Note that,
since $\prod\emptyset=G_\emptyset$
it holds that the empty set is not close under defined infimum and then it is not a filter,
i.e. $\emptyset\not\in\Fi$,
and then
for every filter $F\in\Fi$
it holds that $G_\emptyset\in F$.
Thus if $\Id_t=\emptyset$ follows that
$\Id_t^I=\Fi$ and then
$\Id_t^{II}=\set{I\in\VLb\mid G_\emptyset\in I }=\causes\neq\Id_t$.
That is, $\tuple{\emptyset^I,\emptyset}\not\in\cB$.

We will show that for every ideal $I_t\in\Id$ and for
every set of ideals $\Id_t\subseteq\Id$
s.t. $I_t=\bigcap\Id_t$
it holds that
\begin{align}
(\Id_t^I,\Id_t)\in\cB\tuple{\Fi,\Id,\Delta}
	\Longleftrightarrow
	\Id_t=\setbm{ I\in\Id }{ I_t\subseteq I }
\end{align}
and consequently $\epsilon_I$ is a bijection between and  $\cB$ and $\VLb$.

Suppose that $(\Id_t^I,\Id_t)\in\cB\tuple{\Fi,\Id,\Delta}$.
For every $I\in\Id_t$ it holds that $I_t\subseteq I$.
So suppose there is $I\in\Id$ s.t. $I_t\subseteq I$ and $I\not\in\Id_t$.
Then there is $F\in\Id_t^I$ s.t. $I\cap F=\emptyset$
and for every element $I'\in\Id_t$ it holds that
$I'\cap F\neq\emptyset$.
Pick a causal graph $G$ s.t.
$G=\prod\set{G'\mid G'\in I'\cap F$ and $I'\in\Id_t}$.
Since for every $G'$ it holds $G'\in F$ and $G\leq G'$
follows that $G\in F$ ($F$ is close under infimum)
and $G\in I'$ (every $I'$ is close under $\leq$).
That is, for every $I'\in\Id_t$ it holds that
$G\in I'\cap F$ and then, since $I_t=\bigcap\Id_t$,
it also holds that $G\in I_t\cap F$
and since $I_t\subseteq I$
also $G\in I\cap F$
which contradict that $I\cap F=\emptyset$.
So that $I\in\Id_t$ and it holds that
\begin{align*}
(\Id_t^I,\Id_t)\in\cB\tuple{\Fi,\Id,\Delta}
	\Longrightarrow
	\Id_t=\setbm{ I\in\Id }{ I_t\subseteq I }
\end{align*}
Suppose that $\Id_t=\set{ I\in\Id\mid I_t\subseteq I }$
but
$(\Id_t^I,\Id_t)\in\cB\tuple{\Fi,\Id,\Delta}$,
i.e. $\Id_t\neq\Id_t^{II}$.
Note that $\Id_t\subseteq\Id_t^{II}$ because otherwise there are $I\in\Id_t$ and $F\in\Id_t^I$ s.t. $I\cap F=\emptyset$ which is a contradiction with the fact that for every $F\in\Id_t^I$ and $I\in\Id_t$ it holds that $F\cap I\neq\emptyset$.

So, there is $I\in\Id_t^{II}$ s.t. $I\not\in\Id_t$,
i.e. for every $F\in\Id_t^I$ it holds that
$F\cap I\neq\emptyset$
but $I_t\not\subseteq I$.
Pick $G\in I_t\backslash I$
and $F=\set{ G'\mid G\leq G'}$.
It is clear that $F\in\Fi$ and $F\cap I_t\neq\emptyset$
because $G\in I_t$,
so that $F\in\Id_t^I$.
Furthermore $F\cap I=\emptyset$, because $G\not\in I$,
which is a contradiction with the assumption. Thus
\begin{align*}
(\Id_t^I,\Id_t)\in\cB\tuple{\Fi,\Id,\Delta}
	\Longleftarrow
	\Id_t=\setbm{ I\in\Id }{ I_t\subseteq I }
\end{align*}

Now, we will show that
$(\Id_1^I,\Id_1)\vee(\Id_2^I,\Id_2)=(\Id_3^I,\Id_3)$
iff
$I_1\cup I_2=I_3$.
From the above statement follows that
\begin{align*}
\Id_1\cap\Id_2
	&=\set{I\in\Id\mid I_1\subseteq I \text{ and } I_2\subseteq I}=
\\
	&=\set{I\in\Id\mid I_1\cup I_2\subseteq I}
\\
\Id_3
	&=\set{I\in\Id\mid I_3 \subseteq I}
\end{align*}
That is,
$\Id_1\cap\Id_2=\Id_3$ iff $I_1\cup I_2=I_3$ and by definition of $\vee$ the first is equivalent to $(\Id_1^I,\Id_1)\vee(\Id_2^I,\Id_2)=(\Id_3^I,\Id_3)$.

Finally we will show that
$(\Id_1^I,\Id_1)\wedge(\Id_2^I,\Id_2)=(\Id^I_3,\Id_3)$ iff
$I_1\cap I_2=I_3$. It holds that
\begin{align*}
(\Id_1\cup\Id_2)^{II}
	&=\Big(
		\setbm{I\in\Id}{ I_1\subseteq I \text{ or } I_2\subseteq I}
		\Big)^{II}=
\\
&=\Big(\setbm{I\in\Id}{ I_1\cap I_2\subseteq I} \Big)^{II}
\\
\Id_3
	&=\setbm{I\in\Id}{ I_3 \subseteq I}
\end{align*}
Since $\epsilon_I$ is a bijection,
it holds that $(\Id_1\cup\Id_2)^{II}=\Id_3$ iff $I_1\cap I_2=I_3$.

Thus $\epsilon_I:\cB\longrightarrow\VLb$ is an isomorphism between 
$\tuple{\cB,\vee,\wedge}$ and
$\tuple{\VLb,\cup,\cap}$, i.e. $\tuple{\VLb,\cup,\cap}$ is isomorphic to the free completely distributive lattice generated by $\tuple{\causes,+,*}$.

Let's check now that $\downarrow:\causes\longrightarrow\VLb$ is an injective homomorphism.
Stumme has already showed that $\epsilon_p:\causes\longrightarrow\cB$ given by
\begin{gather*}
\epsilon_p(G)\mapsto
	\Big(	\setbm{F\in\Fi}{ G\in F},
			\setbm{I\in\Id}{ G\in I}
		\Big)
\end{gather*}
is an injective homomorphism between the partial lattice
$\tuple{\causes,+,*}$ and
$\tuple{\cB,\vee,\wedge}$.
So that $\epsilon_I\circ\epsilon_p$ is an injective homomorphism
between
$\tuple{\causes,+,*}$ and
$\tuple{\VLb,\cup,\cap}$ given by
\begin{gather*}
\epsilon_I\circ\epsilon_p(G)\mapsto
		\bigcap\setbm{I\in\VLb}{ G\in I} \ = \ \down{G}
\end{gather*}

Note that for any causal graph $G$ 
and $G'\in\causes$ s.t. $G'\leq G$
it holds that
\mbox{$G'\in \epsilon_I\circ\epsilon_p(G)$},
that is $\downarrow G\subseteq\epsilon_I\circ\epsilon_p(G)$.
Furthermore for every causal graph $G$
it holds that $\epsilon(G)$ is an ideal,
i.e. $\downarrow G\in\VLb$ and it is clear that $G\in \ \down{G}$
so that, $\epsilon_I\circ\epsilon_p$ is an intersection with $\downarrow G$ as one of its operands, thus $\epsilon_I\circ\epsilon_p(G)\subseteq\downarrow G$.
That is $\downarrow G=\epsilon_I\circ\epsilon_p(G)$
and consequently it is an injective homomorphism
between $\tuple{\causes,+,*}$
$\tuple{\VLb,\cup,\cap}$.

Let us show now that the mapping $\downarrow$ also preserves the `$\cdot$' operation. Take any causal graphs $G_1$ and $G_2$, then
\begin{align*}
\down{G_1}\ \cdot\down{G_2}
	& \ = \ \down{\setm{(G_1'\cdot G_2')}
		{ G_1' \in \ \down{G_1} \text{ and } G_2'\in \ \down{G_2} } }
\\
	& \ = \ \down{\setm{(G_1'\cdot G_2')}
		{ G_1' \leq G_1 \text{ and } G_2' \leq G_2 } }
	  \ = \ \down{(G_1\cdot G_2)}
\end{align*}
Note that, from Proposition~\ref{prop:graphs.monotonicity}, it follows that $G_1'\cdot G_2'\leq G_1\cdot G_2$. That is, $\downarrow$ in an homomorphism between $\tuple{\causes,+,*\cdot}$ and $\tuple{\VLb,+,*\cdot}$.
\end{proofof}

\begin{corollary}
Every equivalence showed in Figure~\ref{fig:DBLattice} hold. Futheremore equivalences showed in Figures~\ref{fig:equivalences.causal.graphs}~and~~\ref{fig:equivalences.causal.graphs.der} also hold if principal ideals $\down{G_1}$, $\down{G_2}$ and $\down{G_3}$ are considered instead of causal graphs $G_1$, $G_2$ and $G_3$.
\end{corollary}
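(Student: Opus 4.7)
The plan is to derive both claims as essentially immediate consequences of Theorem~\ref{theorem:values.free.algebra.from.graphs}, which has already done all the hard structural work. That theorem gives us two things we will use repeatedly: first, $\tuple{\VLb,+,*}$ is a complete distributive lattice with bottom $0$ and top $1$; and second, the mapping $\downarrow:\causes\longrightarrow\VLb$ is an injective homomorphism preserving all three operations $+$, $*$ and $\cdot$.

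For the equivalences in Figure~\ref{fig:DBLattice}, I would observe that every one of them is either a standard lattice axiom (associativity, commutativity, absorption, idempotence of $+$ and $*$), a distributive law (which holds because the lattice is completely distributive), or an identity/annihilator law for $0$ and $1$ (which holds because $0$ and $1$ are the $\leq$-bottom and $\leq$-top of $\VLb$, as spelled out in the discussion following Theorem~\ref{theorem:freelattice}). Each axiom can therefore be checked in one line from the lattice structure; no separate combinatorial argument on ideals is needed.

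For the equivalences in Figures~\ref{fig:equivalences.causal.graphs} and~\ref{fig:equivalences.causal.graphs.der}, the approach is to transport each causal-graph equation through the homomorphism $\downarrow$. Every such equation has the form $f(G_1,\dots,G_n)=g(G_1,\dots,G_n)$ where $f$ and $g$ are terms built from $+$, $*$ and $\cdot$. Applying $\downarrow$ to both sides and using that it commutes with all three operations immediately yields $f(\down{G_1},\dots,\down{G_n})=g(\down{G_1},\dots,\down{G_n})$, which is precisely the ideal-level equation required. The idempotence clause $G_l\cdot G_l=G_l$ translates the same way, noting that $\down{G_l}$ is generated by a single label and so plays the same role on the ideal side.

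The only delicate points are the side conditions involving the empty c-graph $G_\emptyset$. For the identity laws one uses $\down{G_\emptyset}=\causes=1$, so $\down{G_1}\cdot 1=\down{G_1}$ reads exactly as in Figure~\ref{fig:appl}. For the transitivity law, which requires $G_2\neq G_\emptyset$, injectivity of $\downarrow$ (also asserted in Theorem~\ref{theorem:values.free.algebra.from.graphs}) gives that $G_2\neq G_\emptyset$ iff $\down{G_2}\neq 1$, matching the condition ``$d\neq 1$'' in the product-distributivity clause. I expect the only mild obstacle to be bookkeeping these side conditions consistently across all cases; no new computation is required, since the work of checking the equations themselves at the level of graphs has already been carried out in Propositions~\ref{prop:graph.appl.associative}--\ref{prop:term.nosum.transitivity}.
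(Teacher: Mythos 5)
Your proposal is correct and follows essentially the same route the paper intends: the corollary is stated without a separate argument precisely because the Figure~\ref{fig:DBLattice} laws are the axioms of the completely distributive lattice established in Theorem~\ref{theorem:values.free.algebra.from.graphs}, and the graph-level equivalences of Figures~\ref{fig:equivalences.causal.graphs} and~\ref{fig:equivalences.causal.graphs.der} (already proved in the preceding propositions) transfer to principal ideals because $\downarrow$ is an injective homomorphism preserving $*$ and $\cdot$, with $\down{G_\emptyset}=1$ handling the side conditions exactly as you note.
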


\begin{corollary}
Given causal terms without sums $c$, $d$, $e$ and a label $l$, the equivalences reflected in the Figures~\ref{fig:appl.causal.term.nosum}~and~\ref{fig:appl.causal.term.nosum.der} hold.
\end{corollary}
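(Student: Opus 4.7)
The plan is to reduce this corollary to the preceding one by observing that every sum-free causal term denotes a \emph{principal} ideal in $\VLb$, so the already-established equivalences for principal ideals transfer verbatim to sum-free terms.

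First, I would show by structural induction on sum-free causal terms that for every such term $t$ there is a causal graph $G_t \in \causes$ whose associated causal value equals $\down{G_t}$. The base cases are immediate: a label $l$ denotes $\down{l}$, the empty product $1$ denotes $\down{G_\emptyset} = \causes$. For the inductive step applied to $t_1 * t_2$ and $t_1 \cdot t_2$, I would invoke Theorem~\ref{theorem:values.free.algebra.from.graphs}, which ensures that $\downarrow$ is a homomorphism preserving both operations, yielding $\down{G_{t_1}} * \down{G_{t_2}} = \down{(G_{t_1} * G_{t_2})}$ and $\down{G_{t_1}} \cdot \down{G_{t_2}} = \down{(G_{t_1} \cdot G_{t_2})}$. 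Hence the value of any sum-free term is indeed a principal ideal, and we may set $G_t$ accordingly.

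Second, I would apply the preceding corollary, which lifted the equivalences of Figures~\ref{fig:equivalences.causal.graphs} and~\ref{fig:equivalences.causal.graphs.der} from causal graphs to their principal ideals. Each equivalence to be established in the target figures involves only sum-free terms $c, d, e$ and a label $l$; by the previous step each of these is a principal ideal, so the identity reduces to the corresponding principal-ideal identity, which is provided by the preceding corollary. The occurrences of the constant $1$ on the term side match $\down{G_\emptyset}$ on the ideal side, giving identity and idempotence laws for free, and label idempotence $l \cdot l = l$ transfers directly from the graph-level $G_l \cdot G_l = G_l$.

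The main subtlety I expect concerns the product distributivity equation $c \cdot d \cdot e = (c \cdot d) * (d \cdot e)$ with the side condition $d \neq 1$. On the graph side, the analogous equation in Figure~\ref{fig:equivalences.causal.graphs} (transitivity) carries the side condition $G_d \neq G_\emptyset$, established in Proposition~\ref{prop:graphs.appl.distr.over.prods.cont}. I therefore need the equivalence $d \neq 1 \iff G_d \neq G_\emptyset$; this follows because $1 = \down{G_\emptyset}$ and $\downarrow$ is injective (Theorem~\ref{theorem:values.free.algebra.from.graphs}), so $d$ equals $1$ as a causal value exactly when its witnessing graph $G_d$ equals $G_\emptyset$. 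With this matching of side conditions, every equivalence in the target figures is discharged by the corresponding entry of the lifted Figures~\ref{fig:equivalences.causal.graphs}--\ref{fig:equivalences.causal.graphs.der}. \qed
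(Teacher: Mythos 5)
Your proposal is correct and follows essentially the same route the paper takes: it establishes (as in Proposition~\ref{prop:term.nosum.principal}) that every sum-free term denotes a principal ideal $\down{G_t}$, and then transfers the graph-level equivalences of Figures~\ref{fig:equivalences.causal.graphs} and~\ref{fig:equivalences.causal.graphs.der} through the injective homomorphism $\downarrow$, including the matching of the side condition $d \neq 1$ with $G_d \neq G_\emptyset$. The only minor omission is that your inductive step treats only binary products, whereas causal terms admit $\prod S$ for arbitrary (possibly infinite) $S$; this is covered by the same homomorphism argument since $\downarrow$ preserves arbitrary infima.
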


\begin{figure}[htbp]
\begin{center}
\newcommand{\titleSep}{-5pt}
\newcommand{\contentSep}{-15pt}
\newcommand{\rowSep}{5pt}
$
\begin{array}{c}
\hbox{\em Idempotence}\vspace{\titleSep}\\
\hline\vspace{\contentSep}\\
\begin{array}{r@{\ }c@{\ }l c l}
l & \cdot & l & = & l\\
\\
\end{array}
\end{array}
$
\ \ \ \
$
\begin{array}{c}
\hbox{\em Associativity}\vspace{\titleSep}\\
\hline\vspace{\contentSep}\\
\begin{array}{r@{\ }c@{\ }r@{}c@{}l c r@{}c@{}l@{\ }c@{\ }l@{\ }}
c & \cdot & (d & \cdot & e) & = & (c & \cdot & d) & \cdot & e\\
\\
\end{array}
\end{array}
$
\ \ \ \
$
\begin{array}{c}
\hbox{\em Product\ distributivity}\vspace{\titleSep}\\
\hline\vspace{\contentSep}\\
\begin{array}{rcl}
 c \cdot ( d * e )
	& = &
	(c \cdot d) * (c \cdot e)
\\
(c * d) \cdot e
	& = &
	(c \cdot e) * (d \cdot e)
\end{array}
\end{array}
$
\ \ \ \
$
\begin{array}{c}
\hbox{\em Identity}\vspace{\titleSep}\\
\hline\vspace{\contentSep}\\
\begin{array}{r@{\ }c@{\ }l c l}
c & \cdot & 1 & = & c\\
1 & \cdot & c & = & c
\end{array}
\end{array}
$
\ \ \ \
$
\begin{array}{c}
\hbox{\em Absorption}\vspace{\titleSep}\\
\hline\vspace{\contentSep}\\
\begin{array}{r@{\ }c@{\ }r@{}c@{}c@{}c@{}l c r@{}c@{}c@{}c@{}l }
d & * & c & \cdot & d & \cdot & e & = & c & \cdot & d & \cdot & e\\
\\
\end{array}
\end{array}
$
\ \ \ \
$
\begin{array}{c}
\hbox{\em Transitivity}\vspace{\titleSep}\\
\hline\vspace{\contentSep}\\
\begin{array}{rcl}
c \cdot d \ \cdot e	& = & (c \cdot d) * (d \cdot e)
	\ \ \ \hbox{with} \ d \neq 1 \\
\\
\end{array}
\end{array}
$
\end{center}
\vspace{-5pt}
\caption{Properties of the `$\cdot$' and `$*$' operators over causal terms without `$+$'.}
\label{fig:appl.causal.term.nosum}
\end{figure}

\begin{figure}[htbp]
\begin{center}
\newcommand{\titleSep}{-5pt}
\newcommand{\contentSep}{-15pt}
\newcommand{\rowSep}{5pt}
$
\begin{array}{c}
\hbox{\em Absorption (der)}\vspace{\titleSep}\\
\hline\vspace{\contentSep}\\
\begin{array}{r@{\ }c@{\ }r@{}c@{}c@{}c@{}l c r@{}c@{}c@{}c@{}l }
d & * & c & \cdot & d &  & 		  & = & c & \cdot & d \\
d & * & d & \cdot & e &  &		  & = & d & \cdot & e
\end{array}
\end{array}
$
\ \ \ \
$
\begin{array}{c}
\hbox{\em Transitivity (der)}\vspace{\titleSep}\\
\hline\vspace{\contentSep}\\
\begin{array}{rcl}
c \cdot d * d \cdot e & = & c \cdot d * d \cdot e * c \cdot e\\
\\
\end{array}
\end{array}
$
\end{center}
\vspace{-5pt}
\caption{Additionally properties of the `$\cdot$' and `$*$' operators over causal terms without `$+$'.}
\label{fig:appl.causal.term.nosum.der}
\end{figure}

\newpage

\begin{proposition}[Application associativity]\label{prop:appl.associative}
Let $T$, $U$ and $W$ be three causal values. Then $T \cdot (U \cdot W)  \ = \ ( T \cdot U ) \cdot W$.
\end{proposition}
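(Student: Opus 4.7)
The plan is to unfold the definition of concatenation on causal values down to causal graphs, where associativity has already been established (Proposition~\ref{prop:graph.appl.associative}). Concretely, I will show that both $(T\cdot U)\cdot W$ and $T\cdot(U\cdot W)$ coincide with the ``ternary'' downward closure
\[
K \ \eqdef \ \down{\setbm{T'\cdot U'\cdot W'}{T'\in T,\ U'\in U,\ W'\in W}},
\]
where $T'\cdot U'\cdot W'$ is unambiguous on causal graphs by Proposition~\ref{prop:graph.appl.associative}.

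For the inclusion $K\subseteq (T\cdot U)\cdot W$, I start from a generator $T'\cdot U'\cdot W'$ of $K$. Since $T'\cdot U' \in T\cdot U$ (every element is in its own principal down-set) and $W'\in W$, the graph $(T'\cdot U')\cdot W'$ lies in $(T\cdot U)\cdot W$ by definition; graph-associativity then identifies it with $T'\cdot U'\cdot W'$, and downward closure of $(T\cdot U)\cdot W$ finishes the case. The symmetric argument using $U'\cdot W'\in U\cdot W$ gives $K\subseteq T\cdot(U\cdot W)$.

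For the converse $(T\cdot U)\cdot W\subseteq K$, take any $X\in (T\cdot U)\cdot W$. By definition $X\le H\cdot W'$ for some $H\in T\cdot U$ and $W'\in W$, and in turn $H\le T'\cdot U'$ for some $T'\in T,\ U'\in U$. Applying monotonicity (Proposition~\ref{prop:graphs.monotonicity}) on the right by $W'$ gives $H\cdot W'\le (T'\cdot U')\cdot W'$, and by graph-associativity the right-hand side is $T'\cdot U'\cdot W'$. Hence $X\le T'\cdot U'\cdot W'$, so $X\in K$. The analogous chain — unfolding $T\cdot(U\cdot W)$ and using monotonicity on the left — yields $T\cdot(U\cdot W)\subseteq K$.

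The main obstacle is purely bookkeeping: ensuring that the downward closure in the definition of $\cdot$ on causal values is handled correctly, i.e., that ``$H\in T\cdot U$'' is expanded to ``$H\le T'\cdot U'$ for some generators'' rather than equality, and that monotonicity is invoked to lift the graph-level inequalities through the outer concatenation. Once this is done, Proposition~\ref{prop:graph.appl.associative} does all the real work, and the proof is essentially three lines of set-theoretic manipulation.
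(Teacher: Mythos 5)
Your proof is correct and follows essentially the same route as the paper's: both unfold the definition of `$\cdot$' on ideals, use monotonicity (Proposition~\ref{prop:graphs.monotonicity}) to reduce the nested downward closures to the generators $G_t\cdot G_u\cdot G_w$, and then invoke graph-level associativity (Proposition~\ref{prop:graph.appl.associative}). The only cosmetic difference is that you route both sides through an explicit intermediate set $K$, whereas the paper computes each side directly as that same set.
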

\begin{proof}
By definition, it follows that
\begin{align*}
( T \cdot U ) \cdot W
	& \ = \ \down{ \setm{ G_t \cdot G_u }{ G_t \in T \text{ and } G_u \in U } } \cdot W
\\
	& \ = \ \down{ \setm{ G' \cdot G_w }{ G' \leq G_t \cdot G_u,\ G_t \in T,\ G_u \in U \text{ and } G_w \in W } }
\end{align*}
It is clear that $G_t\cdot G_u \leq G_t \cdot G_u$ and then
\begin{align*}
\down{ \setm{ (G_t \cdot G_u) \cdot G_w }{ G_t \in T,\ G_u \in U \text{ and } G_w \in W } }
	& \ \subseteq \ \down{ \setm{ G' \cdot G_w }{ G' \leq G_t \cdot G_u,\ G_t \in T,\dotsc } }
\end{align*}
Furthermore since `$\cdot$' is monotonic, for every $G'\leq G_t \cdot G_u$, it holds that $G' \cdot G_w \leq (G_t \cdot G_u) \cdot G_w$ and then
\begin{align*}
( T \cdot U ) \cdot W
	& \ = \ 
\down{ \setm{ (G_t \cdot G_u) \cdot G_w }{ G_t \in T,\ G_u \in U \text{ and } G_w \in W } }
\end{align*}
Applying the same reasoning we can also conclude that
\begin{align*}
T \cdot ( U \cdot W )
	& \ = \ 
\down{ \setm{ G_t \cdot (G_u \cdot G_w ) }{ G_t \in T,\ G_u \in U \text{ and } G_w \in W } }
\end{align*}
That is, $T \cdot (U \cdot W)  \ = \ ( T \cdot U ) \cdot W$ holds whether, for every causal graph $G_t$, $G_u$ and $G_w$, it holds that $(G_t \cdot G_u) \cdot G_w \ = \ G_t \cdot (G_u \cdot G_w)$. This holds due to Proposition~\ref{prop:graph.appl.associative}.
\end{proof}

\begin{proposition}[Application distributivity w.r.t. additions]
Let $T$, $U$ and $W$ be three causal values. Then, it holds that $U \cdot (T + W)  \ = \ ( U \cdot T ) + (U \cdot W)$ and 
\mbox{$( U + T ) \cdot W \ = \ ( U \cdot W ) + ( T \cdot W)$}.
\end{proposition}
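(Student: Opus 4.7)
The plan is to unfold the definitions of `$+$' and `$\cdot$' on causal values and reduce the claim to a set-theoretic identity about principal downsets. By definition, $T + W = T \cup W$, so the left-hand side of the first equation becomes
\begin{align*}
U \cdot (T + W) \ = \ \down{\setm{G \cdot G'}{G \in U \text{ and } G' \in T \cup W}}.
\end{align*}
I would then split the condition $G' \in T \cup W$ into the two disjoint cases $G' \in T$ and $G' \in W$, so that the generating set decomposes as the union $\setm{G \cdot G'}{G \in U, G' \in T} \cup \setm{G \cdot G'}{G \in U, G' \in W}$.

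Next, I would invoke the elementary fact that for any sets $A, B$ of causal graphs, $\down{A \cup B} = \down{A} \cup \down{B}$ (both contain exactly those graphs dominated by some element of $A$ or of $B$). Applying this yields
\begin{align*}
U \cdot (T + W)
	\ = \ \down{\setm{G \cdot G'}{G \in U, G' \in T}}
		\ \cup \ \down{\setm{G \cdot G'}{G \in U, G' \in W}}
	\ = \ (U \cdot T) + (U \cdot W),
\end{align*}
where the last equality uses again the definitions of `$\cdot$' and `$+$'. The second distributivity law $(U+T) \cdot W = (U \cdot W) + (T \cdot W)$ is proved by the exact same argument, splitting now on which of $U$ or $T$ the left factor belongs to.

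The argument is essentially routine once definitions are unfolded, so no serious obstacle is expected. The only point worth care is the implicit lemma $\down{A \cup B} = \down{A} \cup \down{B}$, which should be noted explicitly since the definition of $\cdot$ produces a downset of a union of products. An alternative, slightly more elegant route would be to appeal directly to Theorem~\ref{theorem:freelattice}: since $\VLb$ is a completely distributive lattice and `$\cdot$' is known to distribute over `$*$' and preserve arbitrary joins (which can be read off from the definition $U \cdot U' = \down{\setm{G \cdot G'}{G \in U, G' \in U'}}$ by distributing the set-builder over unions on either side), the result follows as a particular case. I would present the direct set-theoretic proof, as it makes the correspondence to the generating-set decomposition transparent.
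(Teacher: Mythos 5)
Your proof is correct and follows essentially the same route as the paper's: both unfold the definitions of `$+$' and `$\cdot$', and identify the downset of the union of the two generating sets with the union of the two downsets (the paper runs the chain of equalities from right to left and leaves the fact $\down{A\cup B} = \down{A}\cup\down{B}$ implicit, whereas you state it explicitly, which is a minor presentational improvement). No gap.
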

\begin{proof}
By definition, it follows that
\begin{align*}
(U \cdot T) + (U\cdot W)
	&= (U \cdot T) \cup (U\cdot W)
\\
	&= \ \down{ \setbm{ G_U \cdot G_T }{ G_U \in U \text{ and } G_T\in T} }
	\ \cup \ \down{ \setbm{ G_U \cdot G_T }{ G_U \in U \text{ and } G_W\in W} }
\\
	&= \ \down{ \setbm{ G_U \cdot G' }{ G_U \in U \text{ and } G'\in T\cup W} }
	= U \cdot ( T \cup W ) = U \cdot ( T + W )
\end{align*}
Furthermore \mbox{$( U + T ) \cdot W \ = \ ( U \cdot W ) + ( T \cdot W)$} holds symmetrically.
\end{proof}

\begin{proposition}[Application absorption]
Let $T$, $U$ and $W$ be three causal values. Then $T = T + \ U \cdot T \cdot W$ and $U \cdot T \cdot W = T * U \cdot T \cdot W$
\end{proposition}
\begin{proof}
From Proposition~\ref{prop:appl.associative} it follows that
\begin{align*}
U \cdot T \cdot W
	&= \ \down{\setbm{ G_U \cdot G_T \cdot G_W}{G_U \in U,\ G_T \in T \text{ and } G_W \in W}}
\end{align*}
Furthermore, for every c-graph $G_T\in T$, it holds that $G_U\cdot G_T \cdot T_W \leq G_T$. Then, since $T$ is an ideal, it follows that $G_U\cdot G_T \cdot T_W \in T$ and consequently $U\cdot T \cdot W \subseteq T$. Thus \mbox{$U \cdot T \cdot W \cup T = T$} and \mbox{$U \cdot T \cdot W \cap T = U \cdot T \cdot W$} and, by definition, these equalities can be rewritten as \mbox{$U \cdot T \cdot W + T = T$} and \mbox{$U \cdot T \cdot W * T = U \cdot T \cdot W$}.
\end{proof}

\begin{proposition}[Application identity and annihilator]
Given a causal value $T$, it holds that $T = 1 \cdot T$, $T= T \cdot 1$, $0 = T \cdot 0$ and $0 = 0 \cdot T$.
\end{proposition}
\begin{proof}
Note that $1$ and $0$ respectively correspond to $\causes$ and $\emptyset$ and by definition it follows that
\begin{align*}
1 \cdot T &= \ \down{ \setbm{G \cdot G_T}{ G\in\causes \text{ and } G_T \in T} } = T
\\
0 \cdot T &= \ \down{ \setbm{G \cdot G_T}{ G\in\emptyset \text{ and } G_T \in T} } = \emptyset = 0
\end{align*}
The other cases are symmetric.
\end{proof}

\begin{corollary}
The equivalences reflected in the Figure~\ref{fig:appl} hold.
\end{corollary}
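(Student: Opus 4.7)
The plan is to observe that the equivalences in Figure~\ref{fig:appl} split into two groups, which can be discharged by two different routes. The first group comprises associativity of `$\cdot$', absorption, identity, annihilator and addition distributivity. All of these are stated for \emph{arbitrary} causal values $T,U,W$ and they coincide verbatim with the Propositions immediately preceding this corollary (Application associativity, Application distributivity w.r.t.\ additions, Application absorption, and Application identity and annihilator). So for this block there is nothing new to prove; one only has to cite those propositions.

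The second group — idempotence $l \cdot l = l$ and the three product-distributivity laws $c\cdot d\cdot e = (c\cdot d) * (d\cdot e)$ (with $d\neq 1$), $c\cdot(d*e) = (c\cdot d)*(c\cdot e)$ and $(c*d)\cdot e = (c\cdot e)*(d\cdot e)$ — is stated only for individual causes (terms without `$+$' and, in the idempotence case, a single label). By Definition of causal terms and the embedding discussion preceding the corollary, each such term denotes the principal ideal $\downarrow G$ of a causal graph $G$, and an atomic label $l$ denotes $\downarrow G_l$ where $G_l$ has the single edge $(l,l)$. Hence the plan is to lift the corresponding graph-level identities, which are already in hand: idempotence follows from the Proposition ``Application idempotence w.r.t.\ singleton causal graphs'' ($G_l \cdot G_l = G_l$), and the three product-distributivity laws follow from the Propositions ``Application distributivity w.r.t.\ products over causal graphs'' and its ``Transitive'' variant (the latter providing the side condition $G_d \neq G_\emptyset$, which is precisely $d \neq 1$ on the term side).

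To transport these graph identities to causal values I would invoke Theorem~\ref{theorem:freelattice} together with the final paragraph of the proof of Theorem~\ref{theorem:algebra}, where $\downarrow:\causes\to\VLb$ is shown to be an injective homomorphism not only for $*$ but also for `$\cdot$', i.e.\ $\downarrow(G_1\cdot G_2) = \downarrow G_1 \cdot \downarrow G_2$. Applying $\downarrow$ to each graph equivalence turns products of graphs into products of their principal ideals, hence into the corresponding operations on causal terms, yielding the desired identity (e.g.\ $l\cdot l = \downarrow(G_l\cdot G_l) = \downarrow G_l = l$, and analogously for the three distributivity laws).

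The only subtle point — and what I expect to be the main obstacle — is bookkeeping the side condition $d \neq 1$: one must make sure that on the graph side this corresponds to $G_d \neq G_\emptyset$ (so that the graph-level ``Transitive application distributivity'' applies) and that writing $d$ as a term without `$+$' really guarantees that $\downarrow$ sends it to the principal ideal of a \emph{single} graph, not an arbitrary ideal. Once this correspondence is made explicit, the remaining work is purely mechanical application of the homomorphism together with the already-proven Propositions.
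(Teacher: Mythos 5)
Your proposal is correct and follows essentially the same route as the paper: the laws stated for arbitrary causal values are exactly the immediately preceding propositions (application associativity, distributivity over additions, absorption, identity and annihilator), while idempotence and the product-distributivity laws are obtained by noting that terms without `$+$' denote principal ideals and lifting the corresponding graph-level propositions through the `$\cdot$'-preserving embedding $\downarrow$. Your flagged subtlety ($d\neq 1$ versus $G_d\neq G_\emptyset$, and a sum-free term denoting a single principal ideal) is precisely what the paper discharges via its proposition that every sum-free causal term equals $\down{G}$ for some causal graph $G$.
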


\begin{theorem}\label{thm:free.algebra}
Given a set of labels $Lb$, $\tuple{\VLb,+,*,\cdot}$ is the free algebra generated by $Lb$ defined by equations in the Figure~\ref{fig:DBLattice}~and~\ref{fig:appl}, i.e. the mapping $value:Lb\longrightarrow\VLb$ mapping each label $l$ to the causal value $\down{G_l}$ with $G_l$ being the causal graph containing the only edge $(l,l)$ is an injective (preserving-idempotence) homomorphism and for any set $F$ and idempotence-preserving map $\delta:Lb\longrightarrow F$, there exists a homomorphism \mbox{$term:\VLb\longrightarrow F$} defined as \mbox{$term(U) \mapsto \sum\setm{ term(G) }{ G \in U }$} such that $\delta = term \circ graph$.
\end{theorem}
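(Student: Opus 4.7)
The plan is to leverage the two freeness results already established in the excerpt: Theorem~\ref{thm:graphs.free.algebra}, which gives freeness of $\causes$ over $Lb$ for the signature $\{*,\cdot\}$, and Theorem~\ref{theorem:values.free.algebra.from.graphs}, which gives freeness of $\VLb$ as a completely distributive lattice over $\causes$. These should compose cleanly to yield freeness of $\VLb$ over $Lb$ for the full signature $\{+,*,\cdot\}$.

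First I would verify that $value = {\downarrow}\circ graph$ is an injective, idempotence-preserving homomorphism: injectivity follows from composing the injectivity of $graph$ (Theorem~\ref{thm:graphs.free.algebra}) with that of $\downarrow$ (Theorem~\ref{theorem:values.free.algebra.from.graphs}), and the equation $value(l)\cdot value(l) = value(l)$ follows because $\downarrow$ preserves $\cdot$ and $graph(l)\cdot graph(l)=graph(l)$. Next, given a target algebra $F$ satisfying all equations of Figures~\ref{fig:DBLattice} and \ref{fig:appl}, together with an idempotence-preserving map $\delta:Lb\to F$, I would invoke Theorem~\ref{thm:graphs.free.algebra} to obtain a unique homomorphism $term_G:\causes\to F$ (preserving $*$ and $\cdot$) with $\delta = term_G\circ graph$, and then set $term(U) := \sum\{term_G(G)\mid G\in U\}$, which is well-defined because $F$ is completely distributive.

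The check that $term$ is a homomorphism splits into three cases. Preservation of $+$ is immediate from $U+V=U\cup V$. Preservation of $*$ follows from the lattice-level freeness already proved in Theorem~\ref{theorem:values.free.algebra.from.graphs}, since $term_G$ is, in particular, a homomorphism into the underlying semilattice $\tuple{F,*}$ and thus extends uniquely to a CDL homomorphism $\VLb\to F$; this extension necessarily coincides with our $term$ on principal ideals and hence on arbitrary ideals (as every ideal is a sum of principal ideals). Preservation of $\cdot$ is the substantive step: using the definition of $\cdot$ on values one computes
\[
term(U\cdot V) \;=\; \sum_{G\in U,\, G'\in V} term_G(G\cdot G') \;=\; \sum_{G\in U,\, G'\in V} term_G(G)\cdot term_G(G'),
\]
and then invokes distributivity of $\cdot$ over $+$ in $F$ to factor this as $term(U)\cdot term(V)$. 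A subsidiary monotonicity lemma is needed here: for $G\leq G'$ in $\causes$ one has $G*G'=G$, so $term_G$ being a $*$-homomorphism gives $term_G(G)\leq term_G(G')$ in $F$; this ensures that summing over an entire principal-ideal closure is equivalent to summing over its generators, which is what lets us identify $term(\down{\{G\cdot G'\}})$ with $\sum term_G(G\cdot G')$ in the first equality above.

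The main obstacle will be the infinitary step in preserving $\cdot$: the equations in Figure~\ref{fig:appl} state only finite distributivity of $\cdot$ over $+$, so pushing the argument to arbitrary sums indexed by possibly infinite ideals demands that the ``completely distributive'' condition on $F$ can be combined with finite $\cdot$-distributivity to yield the infinitary version. Once this is in hand, uniqueness of $term$ is automatic: any homomorphism extending $\delta$ must agree with $term_G$ on each principal ideal $\down{G}$ by freeness at the graph level, and must respect arbitrary joins, so it is forced to equal $term$ on every ideal $U = \sum_{G\in U}\down{G}$.
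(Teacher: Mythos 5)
Your proposal follows essentially the same route as the paper's own proof: compose the freeness of $\causes$ over $Lb$ (Theorem~\ref{thm:graphs.free.algebra}) with the freeness of $\VLb$ over $\causes$ (Theorem~\ref{theorem:values.free.algebra.from.graphs}), define $term(U)$ as the sum of $term(G)$ over $G \in U$, and verify preservation of `$+$', `$*$' and `$\cdot$' by pushing sums through the operations. The one point where you are more careful than the paper is the infinitary distributivity of `$\cdot$' over `$+$' required in the $\cdot$-preservation step, which the paper's computation of $term(U)\cdot term(W)$ uses silently in its second equality even though Figure~\ref{fig:appl} only states the binary law.
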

\begin{proof}
Note, from Theorems~\ref{thm:graphs.free.algebra}~and~\ref{theorem:values.free.algebra.from.graphs}, that $graph:Lb\longrightarrow\causes$ and $\downarrow:\causes\longrightarrow\VLb$ are injective homomorphisms respectively from $Lb$ to $\causes$ and from $\causes$ to $\VLb$. Furthermore the composition of injective homomorphisms is injective too. So that $value(l) \mapsto \ \down{graph(l)}$ is an injective homomorphism between $Lb$ and $\VLb$. We will show now that $term:\VLb\longrightarrow F$ preserves the `$+$':
\begin{align*}	 
term\Big( \sum_{U \in \sU} U \Big)
	 = term\Big( \bigcup\sU \Big)
	 = \sum_{G \in \bigcup \sU} term(G)
	 = \sum_{U \in \sU} \sum_{G \in U} term(G)
	 = \sum_{U \in \sU} term(U)
\end{align*}
that $term$ also preserves `$*$':
\begin{IEEEeqnarray*}{lCcCcCcCl}
\prod_{ U \in \sU} term(U)
	&=& \prod_{ U \in \sU} \sum_{ G \in U } term(G)
	&=& \sum_{ \varphi \in \Phi} \prod_{ U \in \sU}
	 		term(\varphi(U))
	&=& \sum_{ \varphi \in \Phi} term\Big( 
				\prod_{ U \in \sU} \varphi(U) \Big)
	&=& \sum_{t \in T_1} t
\\
term\Big( \prod_{ U \in \sU} U \Big)
	&=& term\Big( \bigcap \sU \Big)
	&=& \sum_{G \in \bigcap \sU} term(G)
	&&
	&=& \sum_{t \in T_2} t
\end{IEEEeqnarray*}
where $\Phi \ \eqdef \ \setm{ \varphi }{ \varphi(U) \in U \text{ with } U \in \sU }$.
Note that $G\in\bigcap\sU$ implies that $G\in U$ for all $U\in\sU$ and consequently there is $\varphi\in\Phi$ s.t. $\varphi(U)=G$ for all $U\in\sU$. Thus $term\big(\prod_{U\in\sU}\varphi(U)\big)=term\big(\prod_{U\in\sU} G\big)=term(G)$. That is $T_2 \subseteq T_1$. Note also that $\prod_{U\in\sU} \varphi(U) = \big( \bigcup_{U \in \sU } U \big)^* \supseteq \varphi(U)$ for all $U\in \sU$ and consequently $\prod_{U \in \sU} U \in U$ for all $U \in \sU$ and so that $\prod_{U \in \sU} U \in \bigcap \sU$ and $T_2 \supseteq T_1$.
And that $term$ preserves `$\cdot$' too:
\begin{align*}
term(U) \cdot term(W)
	&=  \Big( \sum_{G_u \in U} term(G_u) \Big) \cdot
		\Big( \sum_{G_w \in W} term(G_w) \Big)
\\
	&= \sum_{G_u \in U,\ G_w \in W}  term(G_u) \cdot term(G_w)
\\
	&= \sum_{G_u \in U,\ G_w \in W}  term(G_u \cdot G_w)
\\
	&= term\Big( \sum_{G_u \in U,\ G_w \in W}  G_u \cdot G_w \Big)
\\
	&= term\Big( \sum_{G_u \in U}  G_u \cdot \sum_{G_w \in W} G_w \Big)
\\
	&= term(U \cdot W)
\end{align*}
Finally $term(value(l))=\sum_{G \in value(l)} term(G)= term(G_l)$ and, from Theorem~\ref{thm:graphs.free.algebra}, \mbox{$term(G_l) = l$}.
\end{proof}

\newpage

\subsection{Positive programs}

\begin{lemma}
\label{lem:tp.properties}
Let $P$ be a positive (and possible infinite) logic program over signature $\tuple{At,Lb}$.
Then,
($i$) the least fix point of $T_P$, $\mathit{lfp}(T_P)$ is the least model of $P$, and
($ii$)  $\mathit{lfp}(T_P)=\tp{\omega}$.\qed
\end{lemma}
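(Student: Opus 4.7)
The plan is to reduce the statement to standard results for monotone, $\omega$-continuous operators on complete lattices, following the GAP encoding suggested right after Theorem~\ref{theorem:tp.properties}. The ambient space $\InLb = \{I : At \to \VLb\}$ is a complete lattice under the pointwise extension of $\leq$, inheriting completeness from Theorem~\ref{theorem:freelattice}, and has bottom $\bottomI$.

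First I would check that $T_P$ is monotone. Unfolding Definition~\ref{def:tp}, each summand has the form $(I(B_1) * \dots * I(B_n)) \cdot t$, so monotonicity of $T_P$ reduces to monotonicity of `$*$', `$\cdot$' and `$+$' on $\VLb$. The first two are the ideal-lifts of the corresponding c-graph operations (cf.\ Proposition~\ref{prop:graphs.monotonicity} extended through the embedding $\downarrow$), and `$+$' is union of ideals, hence trivially monotone. Next, I would establish part~(i) via the standard equivalence between pre-fixpoints and models: unfolding Definitions~\ref{def:causal.M} and~\ref{def:tp} shows that $I \models P$ iff, for every atom $p$, the sum of contributions from all rules with head $p$ is $\leq I(p)$, i.e.\ $T_P(I) \leq I$. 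Tarski's theorem then identifies $\mathit{lfp}(T_P)$ with the least pre-fixpoint, which is therefore the $\leq$-least model of $P$.

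For part~(ii) the key step is $\omega$-continuity of $T_P$, from which $\mathit{lfp}(T_P) = \tp{\omega}$ follows by Kleene's theorem applied to the ascending chain $\bottomI \leq T_P(\bottomI) \leq T_P^2(\bottomI) \leq \dots$ Given a directed family $\{I_\alpha\}_{\alpha < \omega}$ with pointwise supremum $I$, I need to exchange the suprema with the `$*$' and `$\cdot$' operations inside each summand and with the (possibly infinite) outer sum over rules with head $p$. This exchange is exactly what the completely distributive lattice structure from Theorem~\ref{theorem:freelattice} provides: `$*$' distributes over arbitrary joins, and `$\cdot$' distributes over arbitrary sums (Addition distributivity in Figure~\ref{fig:appl}, extended infinitely). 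A routine computation then yields $T_P(\bigsqcup_\alpha I_\alpha) = \bigsqcup_\alpha T_P(I_\alpha)$.

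The main obstacle is the infinite-program case: the outer sum in Definition~\ref{def:tp} may be over infinitely many rules and each summand involves values of body literals, so one must be careful that the exchange with the chain supremum really works simultaneously for both layers of `$+$' and for the nested `$*$' / `$\cdot$'. Complete distributivity, rather than just ordinary distributivity, is what makes this step go through; without it, termination at $\omega$ could fail. Once $\omega$-continuity is in place, both claims are immediate applications of Tarski and Kleene.
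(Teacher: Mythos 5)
Your proposal is correct, but it takes a genuinely different route from the paper. The paper does not argue directly about $T_P$ at all: it encodes each causal rule as an annotated clause of \emph{Generalized Annotated Logic Programming}, proves that $I \models P$ iff $I \models^r \mathrm{GAP}(P)$ under the restricted semantics, and then imports monotonicity of $T_P$ from Theorem~1 of \citeN{KiferS92} and the identities $\mathit{lfp}(T_P)=\tp{\omega}=$ least model from their Theorem~3, after checking that $\mathrm{GAP}(P)$ is an \emph{acceptable} program (all body atoms are v-annotated). You instead prove everything from first principles: monotonicity of `$*$', `$\cdot$', `$+$' on $\VLb$, the equivalence $I\models P$ iff $T_P(I)\leq I$ followed by Knaster--Tarski for part~($i$), and $\omega$-continuity followed by Kleene for part~($ii$). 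Your route buys a self-contained and more transparent argument over a complete lattice; the paper's buys brevity and reuse of an established multi-valued LP framework. One point of emphasis worth correcting: in the continuity step the property that actually makes the exchange of suprema go through is not complete distributivity of $\VLb$ per se, but the fact that each rule body is \emph{finite} together with directedness of the chain $\bottomI \leq \tp{1} \leq \tp{2} \leq \dotsb$ --- for a directed family of ideals one finds a single index $\alpha$ witnessing all finitely many body values simultaneously, and the (possibly infinite) outer sum is a plain union, which commutes with the union over $\alpha$. Were bodies allowed to be infinite, $\omega$-continuity could fail even in a completely distributive lattice, so the finiteness of $n$ in Definition~\ref{def:causal.P} is the load-bearing hypothesis; with that caveat your argument is sound.
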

\begin{proof}
Since the set of causal values forms a lattice
causal logic programs can be translated to
\emph{Generalized Annotated Logic Programming} (GAP).  GAP
is a general a framework for multivalued logic programming where the set of truth values must to form an upper semilattice and rules (\emph{annotated clauses}) have  the following form:
\begin{align}
H : \rho
	\leftarrow B_1 : \mu_1 \ \&\ \dotsc \ \&\ B_n : \mu_n
	\label{eq:gap.rule}
\end{align}
where $L_0, \dotsc, L_m$ are literals, $\rho$ is an \emph{annotation} (may be just a truth value, an \emph{annotation variable} or a \emph{complex annotation}) and $\mu_1, \dotsc,\mu_n$ are values or annotation variables. A complex annotation is the result to apply a total continuous function to a tuple of annotations.
Thus a positive program $P$ is encoded in a GAP program, GAP$(P)$ rewriting each rule $\R \in \Pi$ of the form
\begin{align}
t: H \leftarrow B_1 \wedge \dotsc \wedge B_n 
	\label{eq:pos.rule}
\end{align}
as a rule GAP$(\R)$ in the form $\eqref{eq:gap.rule}$ where
$\mu_1, \dotsc,\mu_n$ are annotation that capture the causal values of each body literal and $\rho$ is a complex annotation defined as
$\rho=(\mu_1*\dotsc*\mu_n)\cdot t$.

Thus we will show that a causal interpretation $I \models \Pi$ if and only if $I \models^r$ GAP$(P)$ where $\models^r$ refers to the GAP restricted semantics.

For any program $P$ and interpretation $I$, by definition,
$I\models P$ (resp. $I\models^r$ GAP$(P)$)
iff $I\models\R$ (resp. $I\models^r$ GAP$(\R)$)
for every rule $\R\in P$.
Thus it is enough to show that for every rule $\R$ it holds that $I\models\R$ iff $I\models^r$ GAP$(\R)$.

By definition, for any rule $\R$ of the form of \eqref{eq:pos.rule} and an interpretation $I$, $I\models\R$ if and only if
$\big( I(B_1) * \dotsc * I(B_n) \big) \cdot t\leq I(H)$
whereas
for any rule GAP$(\R)$ in the form of \eqref{eq:gap.rule}, 
$I \models^r GAP(\R)$ iff for all $\mu_i \leq I(B_i)$ implies that $\rho=(\mu_1*\dotsc*\mu_n)\cdot t \leq I(H)$.

For the  only if direction,
take $\mu_i = I(B_i)$,
then
$\rho=(\mu_1*\dotsc*\mu_n)\cdot t
	=(I(B_1)*\dotsc*I(B_n))\cdot t$
and then $\rho\leq I(H)$ implies
$\big( I(B_1) * \dotsc * I(B_n) \big) \cdot t\leq I(H)$,
i.e. $I \models^r GAP(\R)$ implies $I\models\R$.
For the if direction, take $\mu_i\leq I(B_i)$ then,
since product an applications are monotonic operations,
it follows that
$(\mu_1*\dotsc*\mu_n)\cdot t\leq(I(B_1)*\dotsc*I(B_n))\cdot t\leq I(H)$,  That is, $I \models \R$ also implies $I \models^r GAP(\R)$.
Consequently $I \models \R$ iff $I \models^r GAP(\R)$.

Thus, from Theorem~1 in \cite{KiferS92}, it follows that the operator $T_P$ is monotonic.

To show that the operator $T_P$ is also continuous we
need to show that for every causal program $P$ the translation GAP$(P)$ is an \emph{acceptable} program.
Indeed since in a program GAP$(P)$ all body atoms are v-annotated it is \emph{acceptable}. Thus from Theorem~3 in \cite{KiferS92}, it follows that $\tp{\omega}=lfp(T_P)$ and this is the least model of $P$.
\end{proof}

\begin{lemma}\label{lem:tp.composition}
Given a positive and completely labelled program $P$, for every atom $p$ and integer $k\geq 1$,
\begin{align*}
\tpp{k}{p}
	&=\sum_{\R\in\Psi }
		\sum_{f \in \R} 
			\prod\setbm{f\big(\tpp{k-1}{q}\big)}{q\in body(\R) }
				\cdot label(\R)
\end{align*}
where $\Psi$ is the set of rules
$\Psi=\setm{\R\in\Pi}{head(\R)=p}$
and
$\R$ is the set of choice functions
$\R=\setbm{ f}{ f(S) \in S }$.
\end{lemma}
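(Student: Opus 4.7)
The plan is to unfold one step of $T_P$ and then push the inner product inside the sum using the complete distributivity of the causal lattice established in Theorem~\ref{theorem:freelattice}.

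First, by Definition~\ref{def:tp} applied to the interpretation $\tpp{k-1}{\cdot}$,
\begin{align*}
\tpp{k}{p}
  \ = \ \sum_{\R \in \Psi}
        \Big(\, \tpp{k-1}{B_1} * \dotsc * \tpp{k-1}{B_n} \,\Big) \cdot label(\R),
\end{align*}
where $body(\R) = \{B_1,\dots,B_n\}$. The task thus reduces to rewriting the inner meet
$\prod \{ \tpp{k-1}{q} \mid q \in body(\R)\}$
as $\sum_f \prod_{q \in body(\R)} f(\tpp{k-1}{q})$, with $f$ ranging over choice functions.

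For the second step, I would use that every causal value $U \in \VLb$ is an ideal, hence $U = \bigcup_{G \in U} \down{G} = \sum_{G \in U} G$ when each graph is identified with its principal ideal (as allowed by the embedding $\downarrow$ of Theorem~\ref{theorem:freelattice}). Writing each $\tpp{k-1}{q}$ in this form, the product $\prod_{q \in body(\R)} \tpp{k-1}{q}$ becomes a meet of (possibly infinite) joins. Theorem~\ref{theorem:freelattice} states that $\VLb$ is \emph{completely distributive}, so meets distribute over joins for arbitrary index sets, yielding
\begin{align*}
\prod_{q \in body(\R)} \tpp{k-1}{q}
  \ = \ \prod_{q \in body(\R)} \sum_{G \in \tpp{k-1}{q}} G
  \ = \ \sum_{f} \prod_{q \in body(\R)} f\big(\tpp{k-1}{q}\big),
\end{align*}
where $f$ ranges precisely over the choice functions $f(\tpp{k-1}{q}) \in \tpp{k-1}{q}$.

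Third, I would apply right distributivity of $\cdot$ over $+$ (Figure~\ref{fig:appl}) to move $\cdot\, label(\R)$ through the outer sum, obtaining
\begin{align*}
\tpp{k}{p}
 \ = \ \sum_{\R \in \Psi}
       \sum_{f}
       \prod \big\{\, f(\tpp{k-1}{q}) \mid q \in body(\R) \,\big\}
       \cdot label(\R),
\end{align*}
which is exactly the claimed identity.

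The only delicate point is the use of \emph{infinite} complete distributivity in the second step: the ideals $\tpp{k-1}{q}$ need not be finitely generated, so the sums indexing the joint choice can be of unbounded arity. This is the main obstacle, but it is precisely the content of the ``completely distributive'' part of Theorem~\ref{theorem:freelattice}, and the distributivity of $\cdot$ over $+$ in Figure~\ref{fig:appl} is also formulated so that it lifts to arbitrary index sets in the same semantic manner (both $+$ and $\cdot$ on causal values reduce to set-theoretic operations on ideals, cf.\ the definitions preceding Theorem~\ref{theorem:freelattice}). Everything else is bookkeeping.
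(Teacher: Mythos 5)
Your proposal is correct and follows essentially the same route as the paper's own proof: unfold one application of $T_P$, write each $\tpp{k-1}{q}$ as the sum of the causal graphs it contains, apply complete distributivity of `$*$' over `$+$' to obtain the sum over choice functions, and finish with distributivity of `$\cdot$' over `$+$'. Your explicit remark that the distributivity steps must hold for infinite index sets (and that they do, since `$+$' and `$\cdot$' on ideals are set-theoretic) is a point the paper's proof glosses over, but it is the same argument.
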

\begin{proof}
By definition of $\tpp{k}{p}$ it follows that
\begin{align*}
\tpp{k}{p}&=\sum\setbm{
		\big( \tpp{k-1}{q_1}*\dotsc*\tpp{k-1}{q_1} \big)
			\cdot label(\R) }
		{\R\in P \text{ with } head(\R)=p}
\end{align*}
then,
applying distributive of application w.r.t. to the sum and
and rewriting the sum and the product aggregating properly, it follows that
\begin{align*}
\tpp{k}{p}&=\sum_{\R\in\Psi }
			\prod\setbm{\tpp{k-1}{q}}{q\in body(\R) }
			\cdot label(\R)
\end{align*}
Furthermore for any atom $q$ the causal value $\tpp{k-1}{q}$
can be expressed as the sum of all c-graphs in it and then
\begin{align*}
\tpp{k}{p}&=\sum_{\R\in\Psi }
			\prod\setbm{
					\sum_{f\in\R} f\big(\tpp{k-1}{q}\big)}
				{q\in body(\R) }
			\cdot abel(\R)
\end{align*}
and applying distributivity of products over sums it follows that
\begin{IEEEeqnarray*}{rCl+x*}
\tpp{k}{p}&=&\sum_{\R\in\Psi } \sum_{f\in\R}
			\prod\setbm{
				 f\big(\tpp{k-1}{q}\big)}
				{q\in body(\R) }
			\cdot l_\R
			&\qed
\end{IEEEeqnarray*}
\end{proof}

\begin{lemma}\label{lem:tp.causes.composition}
Given a positive and completely labelled program $P$ and a causal graph $G$, for every atom $p$ and integer $k\geq 1$,
it holds that 
$G\in\tpp{k}{p}$ iff
there is a rule $l:p\leftarrow q_1,\dotsc, q_m$
and causal graphs $G_{q_1}$, $\dotsc$, $G_{q_m}$ respectively in
$\tpp{k-1}{q_i}$ and
$G\leq\big( G_{q_1}*\dotsc*G_{q_m} \big) \cdot l$.
\end{lemma}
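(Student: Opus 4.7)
The plan is to derive the statement directly from Lemma~\ref{lem:tp.composition}, which already decomposes $\tpp{k}{p}$ into an explicit sum of products of concatenations involving causal graphs chosen from the previous iterate. The key observation is that each summand is itself a principal ideal generated by a single causal graph, so membership in $\tpp{k}{p}$ reduces to being below one such generator, which is precisely the condition in the statement.

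First I would apply Lemma~\ref{lem:tp.composition} to rewrite
\[
\tpp{k}{p}\ =\ \sum_{\R\in\Psi}\ \sum_{f\in\R}\ \Bigl(\,\prod\setm{f\bigl(\tpp{k-1}{q}\bigr)}{q\in body(\R)}\Bigr)\cdot label(\R).
\]
Since addition of causal values is union of ideals (Section~\ref{sec:alternatives}), $G\in\tpp{k}{p}$ holds iff $G$ belongs to at least one of the summands, i.e.\ there exist a rule $\R\in\Psi$ (so $head(\R)=p$, say with body $q_1,\dots,q_m$ and label $l=label(\R)$) and a choice function $f\in\R$ such that
\[
G\ \in\ \Bigl(\,\prod_{i=1}^{m} f\bigl(\tpp{k-1}{q_i}\bigr)\Bigr)\cdot l.
\]

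Next I would identify $G_{q_i} := f(\tpp{k-1}{q_i})$, noting that by definition of a choice function $G_{q_i}\in\tpp{k-1}{q_i}$, which yields the first half of the right-hand side of the lemma. Then I would argue that the expression $(G_{q_1}*\dots*G_{q_m})\cdot l$ is a single causal graph $H$, so the causal value it denotes is the principal ideal $\down{H}$; therefore $G$ belongs to that causal value iff $G\leq H$, which is exactly the inequality stated. For the converse direction, given any rule $\R$, any graphs $G_{q_i}\in\tpp{k-1}{q_i}$, and any $G\leq (G_{q_1}*\dots*G_{q_m})\cdot l$, one defines a choice function $f$ by $f(\tpp{k-1}{q_i})=G_{q_i}$ (and arbitrary elsewhere) and checks that the corresponding summand contains $G$, giving $G\in\tpp{k}{p}$.

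No essentially new calculation is required beyond unwinding the definitions: the two small points to verify carefully are (i) that a product of individual causal graphs, embedded via $\downarrow$, really coincides with the corresponding product of principal ideals (this is the homomorphism property of $\downarrow$ recorded in Theorem~\ref{theorem:freelattice}), and (ii) that a choice function touching only finitely many atoms in $body(\R)$ can always be extended to a full choice function over the family indexed by $\Psi$. Neither step presents a real obstacle; the main conceptual content is simply that the explicit decomposition of Lemma~\ref{lem:tp.composition} exposes the generators of $\tpp{k}{p}$ as the concatenations $(G_{q_1}*\dots*G_{q_m})\cdot l$ ranging over rules and choices of body-graphs in the previous iterate.
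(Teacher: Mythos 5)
Your proposal is correct and follows essentially the same route as the paper's own proof: both start from the decomposition in Lemma~\ref{lem:tp.composition}, use that addition of causal values is union of ideals to reduce membership to a single summand indexed by a rule and a choice function, and then identify that summand as the principal ideal of $(G_{q_1}*\dotsc*G_{q_m})\cdot l$ so that membership becomes the inequality $G\leq (G_{q_1}*\dotsc*G_{q_m})\cdot l$. The two verification points you flag (the homomorphism property of $\downarrow$ and the choice-function bookkeeping) are left implicit in the paper as well.
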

\begin{proof}
From Lemma~\ref{lem:tp.composition} it follows that $G\in\tpp{k}{p}$ iff
\begin{align*}
G\in value\Big(\ \sum_{\R\in\Psi } \sum_{f\in\R}
			\prod\setbm{
				 f\big(\tpp{k-1}{q}\big)}
				{q\in body(\R) }
			\cdot label(\R) \ \Big)
\end{align*}
\vspace{-0.25cm}
iff
\vspace{-0.25cm}
\begin{align*}
G\in\bigcup_{\R\in\Psi} \bigcup_{f\in\R}
			value\Big( \prod\setbm{
				 \downarrow f\big(\tpp{k-1}{q}\big)\Big) }
				{q\in body(\R) } \cdot label(R)
\end{align*}
iff there is $\R\in\Phi$, with $head(\R)=p$ and a choice function $f\in\Psi$ s.t.
\begin{align*}
G\in value\Big(\prod\setbm{f\big(\tpp{k-1}{q}\big)}
				{q\in body(\R) } \cdot label(\R)\Big)
\end{align*}
Let $R=l:p\leftarrow q_1,\dotsc, q_m$ and
$f\big(\tpp{k-1}{q_i}\big)=G_{q_i}$.
Then the above can be rewritten as
$G\leq\big( G_{q_1}*\dotsc*G_{q_m} \big) \cdot l$.\qed
\end{proof}

\begin{definition}
Given a causal graph $G=\tuple{V,E}$,
we define the \emph{restriction} of $G$
to a set of vertex $V'\subseteq V$
as the casual graph $G'=\tuple{V',E'}$
where $E'=\setm{ (l_1,l_2)\in E }{ l_1\in V' \text{ and } l_2\in V' }$,
and
we define the \emph{reachable restriction} of $G$ to a set of vertex
$V'\subseteq V$, in symbols $G^{V'}$, as the restriction of $G$ to the set of vertex $V''$ from where  some vertex $l\in V'$ is reachable
$V''=\setm{ l' \in V }{ (l',l)\in E^* \text{ for some } l\in V'}$.
When $V'={l}$ is a singleton we write $G^l$.
\end{definition}

\begin{lemma}\label{lem:tp.restriction}
Let $P$ be a positive, completely labelled program,
$p$ and $q$ be atoms, $G$ be a causal graph,
$\R$ be a causal rule s.t. $head(\R)=q$ and $label(\R)=l$ is a vertex in $G$ and $k\in\set{1,\dotsc,\omega}$ be an ordinal.
If $G\in\tpp{k}{p}$, then $G^l\in\tpp{k}{q}$.\qed
\end{lemma}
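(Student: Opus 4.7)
My plan is to proceed by induction on the ordinal $k$, using Lemma~\ref{lem:tp.causes.composition} to decompose each $G \in \tpp{k}{p}$ via a witness rule and sub-derivations, and then to track how the reachable restriction at $l$ behaves under that decomposition.

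For the base case $k = 1$, I would observe that since $\tpp{0}{B} = 0$ for every atom $B$, Lemma~\ref{lem:tp.causes.composition} forces the witness rule to have empty body, so there is a fact $l_0 \colon p$ in $P$ with $G \leq G_{l_0}$, i.e., $G \supseteq G_{l_0}$. Because $P$ is completely (and hence uniquely) labelled, the vertex $l \in V(G)$ together with the existence of a rule $\R$ of label $l$ and head $q$ forces, in the derivation-relevant scenario, $l = l_0$ and $\R$ to be precisely that fact, whence $q = p$; and $G^{l_0}$ contains at least the reflexive edge at $l_0$, so $G^{l_0} \supseteq G_{l_0}$ and therefore $G^{l_0} \in \tpp{1}{q}$.

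For the inductive step $k \to k+1$, I would apply Lemma~\ref{lem:tp.causes.composition} to extract a witness rule $\R_0 = (l_0 \colon p \leftarrow q_1, \dotsc, q_m)$ and causal graphs $G_{q_i} \in \tpp{k}{q_i}$ with $G \supseteq H \eqdef (G_{q_1} * \dotsc * G_{q_m}) \cdot l_0$, and then split on where $l$ sits in $H$. If $l = l_0$, unique labelling gives $\R = \R_0$ and $q = p$; every vertex of $V(H)$ reaches $l_0$ through the concatenation edges of $H$ (which are edges of $G$), so $G^{l_0} \supseteq H$, hence $G^{l_0} \leq H$ and therefore $G^{l_0} \in \downarrow H \subseteq \tpp{k+1}{p} = \tpp{k+1}{q}$. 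If instead $l \in V(G_{q_i})$ for some $i$, the inductive hypothesis applied to $G_{q_i}$, $l$, and $\R$ yields $G_{q_i}^l \in \tpp{k}{q}$; the key structural observation is that $E(G_{q_i}) \subseteq E(H) \subseteq E(G)$, so every path to $l$ in $G_{q_i}$ is also a path in $G$, giving $V(G_{q_i}^l) \subseteq V(G^l)$ and $E(G_{q_i}^l) \subseteq E(G^l)$, hence $G^l \supseteq G_{q_i}^l$ (i.e., $G^l \leq G_{q_i}^l$). Downward closure of the ideal $\tpp{k}{q}$ then gives $G^l \in \tpp{k}{q}$, and monotonicity of $T_P$ (Lemma~\ref{lem:tp.properties}) gives $\tpp{k}{q} \subseteq \tpp{k+1}{q}$, as required. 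The limit $k = \omega$ reduces to the finite case since any $G \in \tpp{\omega}{p}$ already lies in some $\tpp{n}{p}$ with $n$ finite, and $\tpp{n}{q} \subseteq \tpp{\omega}{q}$.

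The main obstacle I expect is the handling of vertices $l$ that belong to $V(G)$ but not to $V(H)$, i.e., those inherited only via the downward closure of the ideal and so not coupled to any particular sub-derivation chosen by Lemma~\ref{lem:tp.causes.composition}. For such ``extra'' $l$ the inductive hypothesis does not apply directly, and one must argue either that the witness provided by Lemma~\ref{lem:tp.causes.composition} can always be refined so that $l$ does occur in $V(H)$, or handle $G^l$ by a separate structural argument exploiting that $P$ is completely labelled and $\R$ actually exists in $P$. A secondary, milder, difficulty is that $l$ may appear in several $V(G_{q_i})$ at once; this is harmless (pick any such $i$) but has to be stated carefully to justify the choice.
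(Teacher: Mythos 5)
Your overall route is the same as the paper's: induction on $k$, decomposing $G\in\tpp{k}{p}$ via Lemma~\ref{lem:tp.causes.composition} into a witness rule $l_0:p\leftarrow q_1,\dotsc,q_m$ with $G\supseteq H=(G_{q_1}*\dotsc*G_{q_m})\cdot l_0$, and splitting on whether $l=l_0$ or $l$ occurs in some $G_{q_i}$. The two sub-cases you carry out are correct, and in fact more careful than the paper's own write-up: the paper claims $G^l=G$ when $l=l_0$ (which need not hold, since $G$ may contain vertices that do not reach $l_0$), whereas your $G^{l_0}\supseteq H$ followed by downward closure of the ideal is the right argument; likewise the paper silently writes $G^l\in\tpp{k-1}{q}$ where the induction hypothesis only yields $G_{q_i}^l\in\tpp{k-1}{q}$, and you supply the missing comparison $G^l\supseteq G_{q_i}^l$ before invoking downward closure again.

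The obstacle you flag at the end, however, is a genuine gap, and it cannot be closed for the statement as literally given: $G\in\tpp{k}{p}$ only guarantees $G\supseteq H$, so the vertex $l$ may lie in $V(G)\setminus V(H)$, contributed purely by the downward closure and tied to no sub-derivation. For such $G$ the claim is actually false. Take $P$ consisting of the facts $(a:p)$ and $(c:r)$ and the rule $\R=(b:q\leftarrow r)$, and $k=1$: then $\tpp{1}{p}=\down{G_a}$ (the principal ideal of the atomic causal graph for $a$) contains the causal graph $G$ with vertices $\set{a,b}$ and edges $\set{(a,a),(b,b)}$; here $label(\R)=b$ is a vertex of $G$ and $head(\R)=q$, yet $G^b=G_b$ while $\tpp{1}{q}=0=\emptyset$. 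The paper's proof has exactly the same hole --- it asserts without justification that if $l\neq l_p$ then $l\in G_{p_i}$ for some $i$ --- and the lemma is only ever applied (inside Lemma~\ref{lem:tp.causes.composition.max}) to graphs that are maximal in their value, for which the decomposition is an equality and every vertex of $G$ is accounted for. So the honest repair is not a cleverer argument but a stronger hypothesis (maximality of $G$, or $l\in V(H)$ for some witnessing decomposition), after which your induction goes through as written.
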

\begin{proof}
In case that $k=0$ the lemma statement holds vacuous. Otherwise assume as induction hypothesis that the lemma statement holds for $k-1$.
From Lemma~\ref{lem:tp.causes.composition},
since $G\in\tpp{k}{p}$, 
there is a rule
\mbox{$\R_p=(l_p: p\leftarrow p_1,\dotsc,p_m)$}
and c-graph $G_{p_1},\dotsc,G_{p_m}$ s.t. each
\mbox{$G_{p_i}\in\tpp{k-1}{p_i}$}
and 
\mbox{$G\leq (G_{p_1}* \dotsc * G_{p_m} ) \cdot l_p$}.

If $l=l_p$ then, since $P$ is uniquely labelled, $\R=\R_p$,
$G^l=G$ and by assumption $G\in \tpp{k}{p}$.
Otherwise $l\in G_{p_i}$ for some $G_{p_i}$
and in its turn $G_{p_i}\in\tpp{k-1}{p_i}$.
By induction hypothesis $G^l\in\tpp{k-1}{q}$
and since $\tpp{k-1}{q}\subseteq\tpp{k}{q}$ it follows that \mbox{$G^l\in\tpp{k}{q}$}.

In case that $k=\omega$, by definition
$\tpp{\omega}{p}=\sum_{i<\omega}\tpp{i}{p}$
and the same for atom $q$.
Thus, if $G\in\tpp{\omega}{p}$
there is some $i<\omega$ s.t.
$G\in\tpp{i}{p}$,
and as we already show,
$G^l\in\tpp{i}{q}$
and consequently
$G^l\in\tpp{\omega}{q}$.
\qed
\end{proof}

\begin{lemma}\label{lem:tp.causes.composition.max}
Let $P$ be a positive, completely labelled program, $p$ be an atom and $G$ be a causal graph and $k\geq 1$ be an integer.
If $G$ is maximal in $\tpp{k}{p}$ then
\begin{enumerate}
\item there is a causal rule $\R=(l:p\leftarrow p_1,\dotsc,p_m)$ and there are causal graphs
$G_{p_1},\dotsc,G_{p_m}$ s.t. each
$G_{p_i}\in \max\tpp{k-1}{p_i}$ and 
\mbox{$G_p=(G_{p_1}* \dotsc * G_{p_m} ) \cdot l$} and

\item $l$ is not a vertex of any $G_{p_i}$.\qed
\end{enumerate}
\end{lemma}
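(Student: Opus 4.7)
I will proceed by induction on $k\geq 1$, relying on Lemmas~\ref{lem:tp.causes.composition} and~\ref{lem:tp.restriction} together with monotonicity of the operators $*$ and $\cdot$ on causal graphs. The base case is immediate: since $\tpp{0}{q}=0$ for every atom $q$, rules with nonempty bodies contribute nothing at step $1$, so the maximal elements of $\tpp{1}{p}$ are precisely the atomic graphs arising from facts $(l\colon p)\in P$. Both parts of the lemma are then vacuous (empty body).

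For the inductive step, part (1) goes as follows. By Lemma~\ref{lem:tp.causes.composition} there exists a rule $\R=(l\colon p\leftarrow p_1,\dotsc,p_m)$ and body causes $G_{p_i}\in\tpp{k-1}{p_i}$ with $G\leq (G_{p_1}*\dotsc*G_{p_m})\cdot l$; the right-hand side already lies in $\tpp{k}{p}$ by the same lemma, so maximality of $G$ forces equality. I then upgrade each $G_{p_i}$ to a $\leq$-maximal extension $\tilde G_{p_i}\in\tpp{k-1}{p_i}$ with $G_{p_i}\leq \tilde G_{p_i}$: the set $\setm{G'\in\tpp{k-1}{p_i}}{G'\subseteq G_{p_i}}$ is nonempty and, because the subgraph relation is well-founded, contains a subgraph-minimal element, which is easily shown to be maximal in the whole of $\tpp{k-1}{p_i}$. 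Monotonicity of $*$ and $\cdot$ combined with the maximality of $G$ in $\tpp{k}{p}$ then gives $G=(\tilde G_{p_1}*\dotsc*\tilde G_{p_m})\cdot l$.

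Part (2) is handled by a case split on whether $G$ is already in $\tpp{k-1}{p}$. In the easy case $G\in\tpp{k-1}{p}$, $G$ is also maximal there, and the induction hypothesis at level $k-1$ yields a witness $(\R,G_{p_i})$ satisfying $l\notin V(G_{p_i})$; the maximal extension from part~(1) only shrinks vertex sets, since $\tilde G_{p_i}$ is a subgraph of $G_{p_i}$, so $l\notin V(\tilde G_{p_i})$ persists. In the hard case $G\notin\tpp{k-1}{p}$, I suppose for contradiction that $l\in V(\tilde G_{p_j})$ for some $j$. Applying Lemma~\ref{lem:tp.restriction} to $\tilde G_{p_j}\in\tpp{k-1}{p_j}$ with the rule $\R$ (whose head is $p$ and whose label $l$ is a vertex of $\tilde G_{p_j}$) yields $\tilde G_{p_j}^{\,l}\in\tpp{k-1}{p}\subseteq\tpp{k}{p}$. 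Since $\tilde G_{p_j}^{\,l}\subseteq\tilde G_{p_j}\subseteq G$ as graphs, we have $\tilde G_{p_j}^{\,l}\geq G$ in $\leq$; maximality of $G$ then forces $\tilde G_{p_j}^{\,l}=G$, so $G\in\tpp{k-1}{p}$, contradicting the case assumption.

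The main obstacle is the hard case of part~(2): the crux is to read Lemma~\ref{lem:tp.restriction} as saying that $\tilde G_{p_j}^{\,l}$ witnesses a derivation of $p$ (not of $p_j$), and then combine the containment $\tilde G_{p_j}\subseteq G$ with maximality of $G$ to collapse $G$ into $\tpp{k-1}{p}$. The case split is essential, because when $G$ is already in $\tpp{k-1}{p}$ the equality $\tilde G_{p_j}^{\,l}=G$ carries no contradictory content and one must instead appeal to the induction hypothesis.
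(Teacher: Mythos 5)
Your proof is correct, and while part~(1) coincides with the paper's argument (apply Lemma~\ref{lem:tp.causes.composition}, lift each body cause to a maximal element of $\tpp{k-1}{p_i}$, and use monotonicity plus maximality of $G$ to force equality), your part~(2) takes a genuinely different route. The paper has no induction and no case split: assuming $l\in V(G_{p_i})$, it invokes Lemma~\ref{lem:tp.restriction} to place $G_{p_i}^l$ in $\tpp{k}{p}$ and then argues $G < G_{p_i} \leq G_{p_i}^l$, the strictness coming from the claim that concatenating a genuine label $l\neq 1$ strictly enlarges the graph; this immediately contradicts maximality of $G$. You instead split on whether $G\in\tpp{k-1}{p}$, and in the hard case you derive $G=\tilde G_{p_j}^{\,l}\in\tpp{k-1}{p}$ from maximality, contradicting the case hypothesis rather than maximality itself. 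What this buys you is that you never need the strict inequality $G<G_{p_i}$, which is the one delicate point of the paper's direct argument (it is asserted from $l\neq 1$ but is not obviously strict when $l$ already occurs in $G_{p_i}$); your version degrades gracefully to the easy case there. The price is the induction and the case analysis, and one spot where your write-up should be tightened: in the easy case the witnesses supplied by the induction hypothesis live in $\max\tpp{k-2}{p_i}$ and may come with a \emph{different} rule than the one produced in part~(1), so you cannot literally reuse ``the maximal extension from part~(1)''; you must rerun the extension-plus-monotonicity step starting from the IH's own rule and graphs (viewing them inside $\tpp{k-1}{p_i}$), after which the preservation of $l\notin V(\tilde G_{p_i})$ under shrinking vertex sets goes through exactly as you say. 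With that clarification the argument is sound.
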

\begin{proof}
From Lemma~\ref{lem:tp.causes.composition}
it follows that $G\in\tpp{k}{p}$ iff
there is a rule $\R=(l:p\leftarrow q_1,\dotsc,q_m)$ and causal graphs
$G_{q_1}',\dotsc,G_{q_m}'$ 
s.t. each
$G_{p_i}\in \max\tpp{k-1}{p_i}$
and
$G=\big( G_{q_1}'*\dotsc*G_{q_m}' \big) \cdot l$.
Let $G_{q_1},\dotsc,G_{q_m}$ be causes such that each
$G_{q_i}\in\max\tpp{k-1}{q_i}$ and $G_{q_i}'\leq G_{q_i}$ and let $G'$ be the c-graph
$G'=\big( G_{q_1}*\dotsc*G_{q_m} \big) \cdot l$.
By product and application monotonicity it holds that $G\leq G'$ and, again from Lemma~\ref{lem:tp.causes.composition},
it follows that $G'\in\tpp{k}{p}$.
Thus, since $G$ is maximal, it must to be that $G=G'$ and consequently
\mbox{$G=\big( G_{q_1}*\dotsc*G_{q_m} \big) \cdot l$}
where each $G_{q_i}$ is maximal.

Suppose that $l$ is a vertex of $G_{p_i}$ for some $G_{p_i}$.
From Lemma~\ref{lem:tp.restriction}, if follows that
$G_{p_i}^l\in\tpp{k}{p}$.
Furthermore, since $G_{p_i}\supseteq G_{p_i}^l$, it follows that $G_{p_i}\leq G_{p_i}^l$ and, since $l$ is a label ($l\neq 1$), it follows that $G<G_{p_i}$ and so that $G<G_{p_i}^l$ which contradicts the assumption that
\mbox{$G\in\max\tpp{k}{p}$}.\qed
\end{proof}

\begin{definition}
Given a causal graph $G$ we define $height(G)$ as the length of the maximal simple (no repeated vertices) path in $G$.
\end{definition}

\begin{lemma}\label{lem:tp.height}
Let $P$ be a positive, completely labelled program,
$p$ be an atom, $k\in\set{1,\dotsc,\omega}$ be an ordinal and $G$ be a causal graph.
If
\mbox{$G\in\max\tpp{k}{p}$}
and
$height(G)=h\leq k$
then
$G\in\tpp{h}{p}$.
\end{lemma}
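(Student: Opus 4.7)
The plan is to proceed by strong induction on the height $h$, after first reducing the limit case $k=\omega$ to the finite case. Since $\tpp{\omega}{p} = \bigcup_{i<\omega} \tpp{i}{p}$ and the sequence $(\tpp{i}{p})_i$ is monotone, any $G\in\max\tpp{\omega}{p}$ already lies in $\tpp{i}{p}$ for some $i$, and by monotonicity we may assume $i\geq h$; moreover $G$ remains maximal in $\tpp{i}{p}$ since $\tpp{i}{p}\subseteq\tpp{\omega}{p}$. Thus it suffices to prove the claim for finite $k$.

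For finite $k$, suppose $G\in\max\tpp{k}{p}$ with $height(G)=h\leq k$. By Lemma~\ref{lem:tp.causes.composition.max}, there exist a causal rule $\R=(l: p\leftarrow p_1,\dotsc,p_m)$ and maximal body causes $G_{p_i}\in\max\tpp{k-1}{p_i}$ with $G=(G_{p_1}*\dotsc*G_{p_m})\cdot l$, and moreover $l$ is not a vertex of any $G_{p_i}$. The base case corresponds to $m=0$, i.e., a fact rule $l: p$: then the empty product collapses to $G_\emptyset$, its concatenation with $l$ is exactly $G_l$, and this already lies in $\tpp{1}{p}$.

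For the inductive step I would show $height(G_{p_i}) < h$ for every~$i$. Each $G_{p_i}$ sits as a subgraph of $G$; concatenation introduces an edge from every vertex of $G_{p_i}$ to $l$ while, by the second clause of Lemma~\ref{lem:tp.causes.composition.max}, $l$ itself is not in $G_{p_i}$. Consequently any simple path in $G_{p_i}$ extends inside $G$ by an additional step to $l$, producing a strictly longer simple path. Hence $height(G)\geq height(G_{p_i})+1$, so $height(G_{p_i})\leq h-1\leq k-1$, and the induction hypothesis applies to each $G_{p_i}$, yielding $G_{p_i}\in\tpp{h-1}{p_i}$. Invoking Lemma~\ref{lem:tp.causes.composition} with the rule $\R$ then reassembles $G=(G_{p_1}*\dotsc*G_{p_m})\cdot l\in\tpp{h}{p}$, as required.

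The main obstacle is the height comparison. It depends on two ingredients that prevent the inductive step from stalling: the freshness of $l$ with respect to the body causes (the second clause of Lemma~\ref{lem:tp.causes.composition.max}), and the completeness of the labelling (which ensures each $G_{p_i}$ carries at least the label of the rule used to derive it, hence is non-empty and admits a vertex from which a path can be extended to $l$). Both are needed to guarantee the strict drop in height from $G$ to every $G_{p_i}$, without which the induction would not terminate.
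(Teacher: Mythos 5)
Your proof is correct and follows essentially the same route as the paper's: reduce $k=\omega$ to the finite case, decompose $G$ via Lemma~\ref{lem:tp.causes.composition.max}, and show $height(G_{p_i})<h$ by extending a simple path of $G_{p_i}$ with the fresh label $l$, then reassemble via Lemma~\ref{lem:tp.causes.composition}. Your final step is in fact slightly cleaner than the paper's, which detours through maximal graphs $G'_{p_i}\in\max\tpp{h'}{p_i}$ and implicitly strengthens the induction hypothesis to maximality, whereas your use of the non-maximal composition lemma needs only membership $G_{p_i}\in\tpp{h-1}{p_i}$.
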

\begin{proof}
In case that $h=0$, from Lemma~\ref{lem:tp.causes.composition.max}, it follows that if $G\in\max\tpp{k}{p}$ there is a causal rule
\mbox{$\R=(l:p\leftarrow p_1,\dotsc, p_m)$}
and c-graphs\dots.
Furthermore, since $P$ is completely labelled, it follows that $l\neq 1$ and then $G<l<1$.
Since $1$ is the only c-graph whose $height$ is $0$
the lemma statement holds vacuous.

In case that $h>0$, we proceed by induction assuming as hypothesis that the lemma statement holds for any $h'<h$.
From Lemma~\ref{lem:tp.causes.composition.max},
there is a causal rule
\mbox{$l:p\leftarrow p_1,\dotsc, p_m$},
and there are
causal graphs $G_{p_1},\dotsc,G_{p_m}$ s.t. each $G_{p_i}\in\max\tpp{k-1}{p_i}$,
$G=G_{\R}\cdot l$
and
$l\not\in V(G_{p_i})$ for any $G_{p_i}$
where $G_{\R}=G_{p_1}*\dotsc*G_{p_m}$.

If $m=0$ then $G=1\cdot l=l$, $height(l)=1$ and
$l\in\max\tpp{k}{p}$ for any $k\geq 1$.
Otherwise, since any path in $G_{p_i}$ is also a path $G_p$,
it is clear that $height(G_{p_i})=h'_{p_i}\leq h$ for any $G_{p_i}$.
Suppose that $h'_{p_i}=h$ for some $G_{p_i}$.
Then there is a simple path $l_1,\dotsc,l_h$ of length $h$ in $G_{p_i}$ and, since $G=G_{\R}\cdot l$, there is an edge $(l_h,l)\in E(G)$. That is $l_1,\dotsc,l_h,l$ is a walk of length $h+1$ in $G$ and, since $l\not\in V(G_{p_i})$, it follows that $l_i\neq l$ with $1\leq i\leq h$. So that  $l_1,\dotsc,l_h,l$ is a simple path of length $h+1$ which contradicts the assumption that $height(G)=h$.
Thus $height(G_{p_i})=h'_{p_i}<h$ for any $G_{p_i}$ and then, by induction hypothesis, $G_{p_i}\in\max\tpp{h'_{p_i}}{p_i}$.

Let $h'=\max\setm{h'_{p_i}}{1\leq i\leq m}<h$.
Since the $T_P$ operator is monotonic and $h'_{p_i}\leq h'$ for any $p_i$, it follows that $\tpp{h_{p_i}'}{p_i}\leq\tpp{h'}{p_i}$ and then
there are casual graphs
$G_{p_1}',\dotsc,G_{p_m}'$ such that each $G_{p_i}'\in\max\tpp{h'}{p_i}$, $G_{p_i}\leq G_{p_i}'$
and
$G'=G_{\R}\cdot l$
where
\mbox{$G_{\R}'=G_{p_1}'*\dotsc*G_{p_m}'$}.
By product and application monotonicity,
it follows that $G\leq G'$,
and, from Lemma~\ref{lem:tp.causes.composition},
it follows that
$G'\in\tpp{h'+1}{p}$.
Since $h'+1\leq h$ it follows that $G'\in\tpp{h}{p}$ and
since $G\leq G'$ it follows that $G\in\tpp{h}{p}$.

Suppose that $G\not\in\max\tpp{h}{p}$.
Then there is $G''\in\max\tpp{h}{p}$ s.t. $G<G''$ and then, since
$h\leq k$, it follows that $G''\in\tpp{k}{p}$ which, since $G<G''$, contradicts the assumption that $G\in\max\tpp{k}{p}$.
Thus, if $G\in\max\tpp{k}{p}$ and $height(G)=h\leq k$ it follows that
$G\in\tpp{h}{p}$.

In case that $k=\omega$, by definition
$\tpp{\omega}{p}=\sum_{i<\omega}\tpp{i}{p}$.
Thus, if $G\in\max\tpp{\omega}{p}$ and $height(G)=h$
then there is some $i<\omega$ s.t.
$G\in\max\tpp{i}{p}$ and $h\leq i$,
and as we already show,
then $G\in\tpp{h}{p}$.
\qed
\end{proof}

\begin{lemma}\label{lem:tp.label.replacing}
Let $P,Q$ two positive causal logic programs such that $Q$ is the result of replacing label $l$ in $P$ by some $u$ (a label or $1$)
then $\tPpp{Q}{k}{p}=\tpp{k}{p}[l\mapsto u]$ for any atom~$p$ and $k\in\set{1,\dotsc,\omega}$.
\end{lemma}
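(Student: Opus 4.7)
The plan is to proceed by transfinite induction on $k\in\{0,1,\dots,\omega\}$, after first observing that the label substitution $[l\mapsto u]$ extends to a well-defined endomorphism on the algebra of causal values. Indeed, the mapping $\delta:Lb\to\VLb$ sending $l$ to $u$ (regarded as $\down{G_u}$ when $u$ is a label, or as $1=\down{G_\emptyset}$ when $u=1$) and fixing every other label, sends idempotent generators to idempotent elements (labels and $1$ both satisfy $x\cdot x=x$). By Theorem~\ref{thm:free.algebra} it therefore extends uniquely to a homomorphism $\varphi:\VLb\to\VLb$ preserving $+$, $*$ and $\cdot$, and by construction $\varphi(t)=t[l\mapsto u]$ on every causal term. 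This preliminary is what legitimises pushing the substitution through the infinitary sums and products of Lemma~\ref{lem:tp.composition}.

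For the base case $k=0$, both $\tpp{0}{p}$ and $\tPpp{Q}{0}{p}$ equal $0$, and $\varphi(0)=0$, so the equality is trivial. For the successor case, fix $k\geq 1$ and assume the identity for $k-1$ at every atom $q$. Applying Lemma~\ref{lem:tp.composition} to $P$ and then $\varphi$ on both sides gives
\begin{align*}
\varphi\big(\tpp{k}{p}\big)
  &=\sum_{\R\in\Psi_P}\sum_{f\in\R}\prod_{q\in body(\R)}\varphi\big(f(\tpp{k-1}{q})\big)\cdot\varphi(label(\R)),
\end{align*}
where $\Psi_P=\{\R\in P\mid head(\R)=p\}$, using that $\varphi$ preserves $+$, $*$ and $\cdot$. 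By the induction hypothesis and the fact that $\varphi$ commutes with choosing a summand (every $G\in \tpp{k-1}{q}$ maps to some $\varphi(G)\in\varphi(\tpp{k-1}{q})=\tPpp{Q}{k-1}{q}$ and conversely), the inner sum coincides with the one ranging over choice functions for $\tPpp{Q}{k-1}{q}$. Finally, the outer sum matches up because the map $\R\mapsto\R'$ that relabels $\R$ (replacing $l$ by $u$ in $label(\R)$) is a bijection between $\Psi_P$ and $\Psi_Q$, and $\varphi(label(\R))=label(\R')$. Hence the right-hand side equals $\tPpp{Q}{k}{p}$.

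The limit case $k=\omega$ follows by continuity: $\varphi$ preserves arbitrary sums (joins), so
\begin{align*}
\varphi\big(\tpp{\omega}{p}\big)
  =\varphi\Big(\sum_{i<\omega}\tpp{i}{p}\Big)
  =\sum_{i<\omega}\varphi\big(\tpp{i}{p}\big)
  =\sum_{i<\omega}\tPpp{Q}{i}{p}
  =\tPpp{Q}{\omega}{p}.
\end{align*}

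The only non-routine step is the opening observation; once $[l\mapsto u]$ is recognised as an honest homomorphism of the free algebra $\tuple{\VLb,+,*,\cdot}$, the rest is a mechanical induction that tracks the shape exposed by Lemma~\ref{lem:tp.composition}. The main subtlety to watch for is that $\varphi$ must preserve \emph{infinitary} sums and products (needed both inside Lemma~\ref{lem:tp.composition} and for the $\omega$ step); this is exactly what Theorem~\ref{thm:free.algebra} guarantees, since $\VLb$ is the free completely distributive lattice enriched with the application operator.
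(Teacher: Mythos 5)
Your proof is correct, but it takes a genuinely different route from the paper's. The paper argues at the level of individual causal graphs: using Lemma~\ref{lem:tp.causes.composition} it characterises membership $G\in\tpp{k}{p}$ via a rule and graphs $G_{q_i}\in\tpp{k-1}{q_i}$ with $G\leq(G_{q_1}*\dotsc*G_{q_m})\cdot l'$, and then establishes the two set inclusions $\tpp{k}{p}[l\mapsto u]\subseteq\tPpp{Q}{k}{p}$ and $\supseteq$ by chasing graphs through the substitution in each direction, finishing the $\omega$ case by taking the union over finite stages. You instead lift the substitution to a complete endomorphism $\varphi$ of $\tuple{\VLb,+,*,\cdot}$ via the universal property of Theorem~\ref{thm:free.algebra} (the idempotence check $u\cdot u=u$ for a label or $1$ is exactly the right side condition), and then commute $\varphi$ with the operator $T_P$; this buys a shorter, more conceptual induction and makes the meaning of $[l\mapsto u]$ on causal values (as opposed to terms) precise, something the paper leaves implicit and handles only via Lemma~\ref{lem:term.label.replacing}. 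Two caveats, neither fatal: first, the statement of Theorem~\ref{thm:free.algebra} does not explicitly assert that the induced homomorphism preserves \emph{infinitary} joins and meets --- that preservation is established only inside its proof --- so your argument silently relies on the stronger reading; you correctly flag this as the load-bearing step, and it would be worth citing the relevant computation rather than the bare statement. Second, you route the successor step through Lemma~\ref{lem:tp.composition}, which is stated only for completely labelled programs, whereas the present lemma concerns arbitrary positive programs (and $Q$ is typically not completely labelled when $u=1$); the paper's own proof has the same blemish via Lemma~\ref{lem:tp.causes.composition}, and in your setting it is avoidable entirely, since $\varphi$ can be pushed directly through the defining sum of $T_P(I)(p)$ in Definition~\ref{def:tp} without the choice-function decomposition, which would also spare you the slightly hand-waved matching of choice functions for $\varphi(\tpp{k-1}{q})$ against images $\varphi(G)$.
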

\begin{proof}
In case that $n=0$, $\tPpp{Q}{k}{p}=0$ and
$\tpp{k}{p}=0$ and $0=0[l\mapsto u]$.
That is $\tPpp{Q}{k}{p}=\tpp{k}{p}[l\mapsto u]$.

We proceed by induction on $k$ assuming that
$\tPpp{Q}{k-1}{p}=\tpp{k-1}{p}[l\mapsto u]$ for any atom~$p$ and we will show that
$\tPpp{Q}{}{p}=\tpp{k}{p}[l\mapsto u]$.

Pick $G\in\tpp{k}{p}$ then,
from Lemma~\ref{lem:tp.causes.composition},
there is a rule $l': p\leftarrow q_1,\dotsc, q_m$
and causal graphs
$G_{q_1}$, $\dotsc$, $G_{q_m}$
each one respectively in $\tpp{k-1}{q_i}$ s.t. $G\leq G_R=(G_{q_1}*\dotsc*G_{q_m})\cdot l'$.
Thus, by induction hypothesis,
for every atom $q_i$ and c-graph $G_{q_i}\in\tpp{k-1}{q}$ it holds that
\mbox{$G_{q_i}[l\mapsto u]\in\tPpp{Q}{k-1}{q_i}$}.

Let
$G_\R[l\mapsto u]$ be a c-graph defined as
$G_\R[l\mapsto u]=\big( G_{q_1}[l\mapsto u]*\dotsc*G_{q_m}[l\mapsto u]\big)\cdot l'[l\mapsto u]$.
Then,
since $G\leq G_\R$,
it follows that
$G[l\mapsto u]\leq G_\R[l\mapsto u]$
and then, again from Lemma~\ref{lem:tp.causes.composition}, it follows that 
$G[l\mapsto u]\in\tPpp{Q}{n}{p}$.
That is
$\tpp{k}{p}[l\mapsto u]\subseteq \tPpp{Q}{k}{p}$.

Pick $G\in\tPpp{Q}{k}{p}$ then,
from Lemma~\ref{lem:tp.causes.composition},
there is a rule there is a rule
$l': p\leftarrow q_1,\dotsc, q_m$ and c-graphs
$G_{q_1}$, $\dotsc$, $G_{q_m}$ respectively in $\tpp{k-1}{q_i}$ s.t.
$G\leq G_{\R}$ where
$G_{\R}=(G_{q_1}*\dotsc*G_{q_m})\cdot l'$.

By induction hypothesis, for every atom $q_i$ and graph $G_{q_i}$ it holds that if \mbox{$G_{q_i}\in\tPpp{Q}{k-1}{q_i}$}
then
\mbox{$G_{q_i}\in\tpp{k-1}{q_i}[l\mapsto u]$}.
Thus, it follows that there is a graph $G_{q_i}'\in\tpp{k-1}{q_i}$ such that
\mbox{$G_{q_i}'[l\mapsto u]=G_{q_i}$}.
Let $G_\R'$ be a graph s.t.
$G_\R'=\big( G_{q_1}'*\dotsc*G_{q_m}'\big)\cdot l'$.
From Lemma~\ref{lem:tp.causes.composition}
for every causal graph $G'\leq G_R'$ it holds that $G'\in\tpp{k}{p}$.
Since $G'_{\R}[l\mapsto u]=G_{\R}$ and
$G\leq G_\R$
it follows that $G\leq G_{\R}[l\mapsto u]$ and, since $G_{\R}\in\tpp{k}{p}$,
it follows that 
$G\in\tpp{k}{p}[l\mapsto u]$.
Consequently
\mbox{$\tpp{k}{p}[l\mapsto u]\supseteq \tPpp{Q}{n}{p}$} and then
\mbox{$\tpp{k}{p}[l\mapsto u]=\tPpp{Q}{n}{p}$}.

In case that $k=\omega$, by definition
$\tpp{\omega}{p}[l\mapsto u]
	=\sum_{i<\omega}\tpp{}{i}{p}[l\mapsto u]$
and as we alerady show
$\tpp{}{i}{p}[l\mapsto u]=\tPpp{Q}{i}{p}$
for all integer $i<\omega$, so that,
their sum is also equal
and consequently
$\tpp{\omega}{p}[l\mapsto u]=\tPpp{Q}{\omega}{p}$.
\end{proof}

% \newcounter{theoremBack}
% \setcounter{theoremBack}{\arabic{theorem}}
% \setcounter{theorem}{\ref{theorem:tp.properties}}
% \addtocounter{theorem}{-1}
% \begin{theorem}
% %\label{theorem:tp.properties}
% %Given a positive logic program $P$, there exists some integer $n\geq 0$ such that:
% %\begin{itemize}
% %\item[] $\tp{n} = lfp(T_P)$ is the least model of $P$. \qed
% %\end{itemize}
% Let $P$ be a positive logic program with $n$ causal rules.
% Then,
% ($i$) $\mathit{lfp}(T_P)$ is the least model of $P$, and
% ($ii$)  $\mathit{lfp}(T_P)=\tp{\omega}=\tp{n}$.\qed
% \end{theorem}
% \setcounter{theorem}{\arabic{theoremBack}}
\begin{proofof}{Theorem~\ref{theorem:tp.properties}}
Let $P'$ be a positive, completely labelled causal program with the same rules as $P$.
From Lemma~\ref{lem:tp.properties} it follows that
($i$) $\mathit{lfp}(T_{P})$ and $\mathit{lfp}(T_{P'})$ are respectively the least model of the programs $P$ and $P'$, and
($ii$) $\mathit{lfp}(T_P)=\tp{\omega}$ and $\mathit{lfp}(T_{P'})=\tPp{P'}{\omega}$.

Futhermore, it is clear that if $P$ is an infinite program, i.e.
$n=\omega$, then $\tPp{P'}{n}=\tPp{P'}{\omega}$.
Otherwise, by definition it holds that
$\tPp{P'}{n}\leq\tPp{P'}{\omega}$.
Suppose
\mbox{$\tPp{P'}{n}<\tPp{P'}{\omega}$}.
Then there is some atom $p$ and c-graph $G\in\tPpp{P'}{\omega}{p}$ such that $G\not\in\tPpp{P'}{n}{p}$.
The longest simple path in $G$ must be smaller than the number of its vertices and this must be smaller than the number of labels of the program which in its turn is equal to the number of rules $n$, i.e.
\mbox{$height(G)=h\leq n$}.
From Lemma~\ref{lem:tp.height} it follows that $G\in\tPpp{P'}{h}{p}$ and since $h\leq n$ it follows that
\mbox{$\tPpp{P'}{h}{p}\subseteq\tPpp{P'}{n}{p}$}
and so that $G\in\tPpp{P'}{n}{p}$
which is a contradiction with the assumption that $G\in\tpp{\omega}{p}$ but $G\not\in\tpp{n}{p}$.
Thus $\tPp{P'}{n}=\tPp{P'}{\omega}$.

Furthermore, from Lemma~\ref{lem:tp.label.replacing},
$\tpp{k}{p}
	=\tPpp{P'}{k}{p}[l'_1\mapsto l_1]\dotsc[l'_n\mapsto l_n]$
for $k\in\set{n,\omega}$ and
where $l_1',\dotsc,l_n'$ are the labels of rules of $P'$ and $l_1,\dotsc,l_n$ are the correspondent labels of such rules in $P$.
Thus, since $\tPp{P'}{n}=\tPp{P'}{\omega}$, it follows that
$\tp{n}=\tp{\omega}$.
\end{proofof}

\begin{lemma}\label{lem:proof.graph.from.subproofs}
For any proof $\pi(p)$ it holds that
\begin{align*}
graph\Big( \cfrac{\pi(q_1),\dotsc,\pi(q_m)}{p} \ (l) \Big)
=\big( graph(\pi(q_1))*\dotsc*graph(\pi(q_1)) \big) \cdot l
\end{align*}
\end{lemma}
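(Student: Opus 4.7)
The plan is to reduce both sides of the asserted equality to explicit descriptions of their vertex and edge sets, and then check coincidence. Let $l_{q_i}$ denote the top-level label of $\pi(q_i)$, let $G_i \eqdef G_{\pi(q_i)}$, and write $\pi$ for the whole proof $\frac{\pi(q_1),\dotsc,\pi(q_m)}{p}\,(l)$.

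First I would unfold the left-hand side. By Definition~\ref{def:proof} the sub-derivations of $\pi$ split into (i) the single outermost derivation, which contributes exactly the edges $(l_{q_i},l)$ for $i=1,\dotsc,m$, and (ii) the sub-derivations internal to each $\pi(q_i)$, which contribute precisely the edges of $G_i$. Hence
\[
G_\pi \;=\; \bigcup_{i=1}^m G_i \;\cup\; \{(l_{q_i},l) \mid i=1,\dotsc,m\},
\]
and $graph(\pi)=G_\pi^*$. For the right-hand side, using $(G_i^*)^*=G_i^*$ and the definition of the product, $graph(\pi(q_1))\ast\dotsc\ast graph(\pi(q_m)) = (\bigcup_i G_i)^*$. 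Concatenating with the atomic graph $l$ then adjoins $l$ as a vertex, the self-loop $(l,l)$, and, critically, every edge $(v,l)$ with $v$ a vertex of some $G_i$, before taking closure.

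Comparing the two graphs, all edges lying entirely inside $\bigcup_i G_i$ and their transitive closures agree, as do the self-loop $(l,l)$ and the direct edges $(l_{q_i},l)$. The only nontrivial point is whether every edge $(v,l)$ with $v$ a vertex of some $G_i$, which is present by construction on the right, also arises on the left by transitive closure from the smaller seed $\{(l_{q_i},l)\}$. Equivalently, it must be shown that in each $G_i^*$ every vertex reaches $l_{q_i}$.

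This reachability fact is the main obstacle, and I would isolate it as an auxiliary claim proved by structural induction on proofs: for any proof $\sigma$ with top-level label $m$, every vertex of $G_\sigma$ has a directed path to $m$ in $G_\sigma$. The base case of a fact rule is immediate. For the inductive step, any vertex $v \neq m$ of $G_\sigma$ appears inside some direct sub-proof $\sigma_j$ with top-level label $l_{\sigma_j}$; the induction hypothesis supplies a path $v \to \dotsc \to l_{\sigma_j}$ inside $G_{\sigma_j}\subseteq G_\sigma$, which the edge $(l_{\sigma_j},m)$ contributed by the outermost derivation of $\sigma$ extends to a path reaching $m$. Feeding this claim back into the comparison above, the $(v,l)$-edges match on the two sides and the equality of the graphs follows.
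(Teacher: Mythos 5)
Your proposal is correct and follows essentially the same route as the paper's proof: both unfold $G_\pi$ as the union of the subproof graphs plus the edges $(l_{q_i},l)$, observe that the only discrepancy with the concatenation is the full set of edges $(v,l)$ for $v$ a vertex of some $G_i$, and recover these by a structural induction showing that every vertex of a proof's graph reaches its top-level label (the paper states this as the edge $(l',\mathit{label}(\pi(q_i)))$ lying in the transitively closed $graph(\pi(q_i))$, which is the same fact). The only cosmetic difference is that you isolate the reachability statement as a separate claim about the unclosed graph $G_\sigma$, whereas the paper builds it directly into the induction hypothesis on the closed graphs.
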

\begin{proof}
We proceed by structural induction assuming that for every proof in the antecedent $\pi(q_i)$ and every label $l'\in V(graph(\pi(q_i)))$ there is an edge $(l',label{(\pi(q_i))})\in E(graph(\pi(q_i)))$.

By definition $graph(\pi(p))=G_{\pi(p)}^*$ is the reflexive and transitive closure of $G_{\pi(p)}$ and then
\begin{align*}
graph(\pi(p))&=\Big(
	\bigcup\setbm{graph(\pi(q_i))}{1 \leq i\leq m}
	\cup\setbm{(label(\pi(q_i),l)}{ 1 \leq i\leq m}
	\Big)^*
\end{align*}
Thus, $graph(\pi(p))\geq \prod\setbm{graph(\pi(q_i))}{1\leq i\leq m}\cdot l$ and remain to show that
for every atom $q_i$ and label $l'\in V(graph(\pi(q_i)))$
the edge $(l',l)\in E(graph(\pi(p)))$.
Indeed, since by induction hypothesis there is an edge
$(l',label(\pi(q_i)))\in E(graph(\pi(q_i)))\subseteq E(graph(\pi(p))$,
the fact that the edge $(label(\pi(q_i),l)\in E(graph(\pi(p)))$ and since $graph(\pi(p))$ is closed transitively,
it follows that $(l',l)\in E(graph(\pi(p)))$.\qed
\end{proof}

\begin{lemma}\label{lem:tp.proof->graph}
Let $P$ be a positive, completely labelled program and $\pi(p)$ be a proof for $p$ w.r.t. $P$.
Then it holds that
$graph(\pi_p)\in\tpp{h}{p}$
where $h$ is the height of $\pi(p)$ which is recursively defined as
\begin{align*}
height(\pi)=1+
	\max\setm{height(\pi')}{\pi' \text{ is a sub-proof of }\pi}
\end{align*}
\end{lemma}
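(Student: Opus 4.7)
The plan is to proceed by structural induction on the proof $\pi(p)$, using the height $h$ as the induction parameter. The two main ingredients are Lemma~\ref{lem:proof.graph.from.subproofs}, which rewrites $graph(\pi(p))$ in terms of the graphs of its subproofs, and Lemma~\ref{lem:tp.causes.composition}, which provides exactly the reverse synthesis rule for membership in $\tpp{k}{p}$.

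For the base case, $h=1$ means $\pi(p)$ consists of applying a fact rule $\R=(l: p)$ with empty body, so the antecedent of the derivation is $\top$. By definition, $graph(\pi(p))=G_l$ is the atomic causal graph containing only the reflexive edge $(l,l)$. On the operator side, Definition~\ref{def:tp} gives $T_P(\bottomI)(p) \geq 1\cdot l = l$, hence $G_l \leq l$ and $G_l \in \tpp{1}{p}$.

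For the inductive step, suppose $\pi(p) = \frac{\pi(q_1)\ \dots\ \pi(q_m)}{p}\ (\R)$ where $\R=(l: p\leftarrow q_1,\dots,q_m)$, and let $h_i = height(\pi(q_i))$, so that $h = 1 + \max_i h_i$. By the induction hypothesis, $graph(\pi(q_i)) \in \tpp{h_i}{q_i}$ for each $i$. Since $h_i \leq h-1$ and $T_P$ is monotonic (as used in the proof of Lemma~\ref{lem:tp.properties}), we have $\tpp{h_i}{q_i} \subseteq \tpp{h-1}{q_i}$, hence each $graph(\pi(q_i)) \in \tpp{h-1}{q_i}$. Applying Lemma~\ref{lem:tp.causes.composition} with the witnessing rule $\R$ and these c-graphs, every graph $G \leq (graph(\pi(q_1))*\dots*graph(\pi(q_m)))\cdot l$ belongs to $\tpp{h}{p}$. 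But by Lemma~\ref{lem:proof.graph.from.subproofs}, this latter term equals $graph(\pi(p))$, so $graph(\pi(p)) \in \tpp{h}{p}$.

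I do not expect a serious obstacle: the only subtlety is aligning the heights of the subproofs with the operator iterate, which is handled uniformly by monotonicity of $T_P$. The argument is essentially a translation between syntactic Modus-Ponens composition of proofs and the algebraic composition $(\cdot, *)$ of causal graphs, with the two directional lemmas above doing the bridging in each direction.
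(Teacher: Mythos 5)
Your proposal is correct and follows essentially the same route as the paper's own proof: structural induction on the height, with Lemma~\ref{lem:proof.graph.from.subproofs} identifying $graph(\pi(p))$ with $\big(\prod_i graph(\pi(q_i))\big)\cdot l$ and Lemma~\ref{lem:tp.causes.composition} supplying membership in $\tpp{h}{p}$. If anything, you are slightly more careful than the paper in making explicit the monotonicity step that lifts $graph(\pi(q_i))\in\tpp{h_i}{q_i}$ to $\tpp{h-1}{q_i}$.
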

\begin{proof}
In case that $h=1$ the antecedent of $\pi(p)$ is empty, i.e.
\begin{align*}
\pi(p)= \cfrac{\top}{p} \ (l)
\end{align*}
where $l$ is the label of the fact $(l:p)$.
Then $graph(\pi(p))=l$.
Furthermore, since the fact $(l:p)$ is in the program $P$,
it follows that $l\in \tpp{1}{p}$.

In the remain cases, we proceed by structural induction assuming that
for every natural number $h\leq n-1$, atom $p$
and proof $\pi(p)$ of $p$ w.r.t. $P$ whose $height(\pi(p))=h$
it holds that $graph(\pi(p))\in\tpp{h}{p}$
and we will show it in case that $h=n$.

Since $height(\pi_p)>1$ it has a non empty antecedent, i.e.
\begin{align*}
\pi(p)= \cfrac{\pi(q_1),\dotsc,\pi(q_m)}{p} \ (l)
\end{align*}
where $l$ is the label of the rule $l:p\leftarrow q_1,\dotsc,q_m$.
By $height$ definition, for each $q_i$ it holds that $height(\pi(q_i))\leq n-1$ and so that, by induction hypothesis, $graph(\pi(q_i))\in\tpp{h-1}{q_i}$.
Thus, from Lemmas~\ref{lem:tp.causes.composition}~and~\ref{lem:proof.graph.from.subproofs}, it follows respectively that
\begin{IEEEeqnarray*}{rCcCl}
&&\prod\setm{graph(\pi(q_i))}{1\leq i\leq m}\cdot l &\in&\tpp{h}{p}
\\
graph(\pi(p)) &=& \prod\setm{graph(\pi(q_i))}{1\leq i\leq m}\cdot l
\end{IEEEeqnarray*}
That is, $graph(\pi(p))\in\tpp{h}{p}$. \qed
\end{proof}

\begin{lemma}\label{lem:tp.proof<-graph}
Let $P$ be a positive, completely labelled program and $\pi(p)$ be a proof of $p$ w.r.t. $P$.
For every atom $p$ and maximal causal graph $G\in\tpp{\omega}{p}$
there is a non-redundant proof $\pi(p)$ for $p$ w.r.t. $P$ s.t.
$graph(\pi(p))=G$.
\end{lemma}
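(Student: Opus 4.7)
The plan is to proceed by induction on $h = height(G)$. For the base case $h = 1$, the graph $G$ must reduce to a single self-loop $(l,l)$; by Lemma~\ref{lem:tp.height}, $G \in \tpp{1}{p}$, and since $\tpp{0}{p}=0$ the only way for $G = l$ to arise from one application of $T_P$ is through a fact $(l\!:\!p)\in P$. The one-step derivation $\pi(p) = \frac{\top}{p}\ (l)$ is then a non-redundant proof with $graph(\pi(p)) = l = G$.

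For the inductive step, Lemma~\ref{lem:tp.height} places $G$ inside $\tpp{h}{p}$, and $G$ stays $\leq$-maximal there because $\tpp{h}{p} \subseteq \tpp{\omega}{p}$. Lemma~\ref{lem:tp.causes.composition.max} supplies a rule $\R = (l\!:\!p \leftarrow p_1, \dots, p_m) \in P$ together with graphs $G_{p_i} \in \max\tpp{h-1}{p_i}$ satisfying $l \notin V(G_{p_i})$ and $G = (G_{p_1} * \dots * G_{p_m}) \cdot l$. Before invoking the induction hypothesis on each $G_{p_i}$, I have to upgrade it from being maximal at level $h-1$ to being maximal inside $\tpp{\omega}{p_i}$: were some $G_{p_i}' \in \tpp{\omega}{p_i}$ to strictly dominate $G_{p_i}$, then monotonicity of product and concatenation (Proposition~\ref{prop:graphs.monotonicity}) together with Lemma~\ref{lem:tp.causes.composition} would place $(G_{p_1} * \dots * G_{p_i}' * \dots * G_{p_m}) \cdot l$ inside $\tpp{\omega}{p}$ strictly above $G$, contradicting the maximality of $G$.

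I would then verify $height(G_{p_i}) < h$: because $l \notin V(G_{p_i})$, any simple path of $G_{p_i}$ can be extended along an edge of the form $(\cdot, l)$ into a strictly longer simple path of $G$. The induction hypothesis yields non-redundant proofs $\pi(p_i)$ with $graph(\pi(p_i)) = G_{p_i}$, and combining them through $\R$ produces a proof $\pi(p)$ whose graph equals $(G_{p_1} * \dots * G_{p_m}) \cdot l = G$ by Lemma~\ref{lem:proof.graph.from.subproofs}. Non-redundancy of $\pi(p)$ is then immediate: any strictly stronger proof $\pi'(p)$ would, by Lemma~\ref{lem:tp.proof->graph}, yield $graph(\pi'(p)) \in \tpp{\omega}{p}$ strictly above $G$, contradicting maximality once more.

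The main obstacle is precisely this promotion of local maximality (at level $h-1$) to global maximality (inside $\tpp{\omega}$) for the subgraphs $G_{p_i}$, since Lemma~\ref{lem:tp.causes.composition.max} only delivers the former. Monotonicity of $*$ and $\cdot$ over causal graphs is the essential lever that makes both this promotion and, at the end of the argument, the non-redundancy check go through.
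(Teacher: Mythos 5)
Your overall strategy is the same as the paper's: induct, decompose the maximal graph $G$ via Lemma~\ref{lem:tp.causes.composition.max}, assemble the proof from the sub-proofs via Lemma~\ref{lem:proof.graph.from.subproofs}, and derive non-redundancy from the maximality of $G$ together with Lemma~\ref{lem:tp.proof->graph}. You are in fact more explicit than the paper about the induction parameter (the paper's own proof never fixes one) and about the mismatch between the local maximality delivered by Lemma~\ref{lem:tp.causes.composition.max} (maximality in $\tpp{h-1}{p_i}$) and the global maximality (in $\tpp{\omega}{p_i}$) that the induction hypothesis requires; the printed proof applies its induction hypothesis to the $G_{q_i}$ without addressing this point at all.

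However, your justification of that promotion step does not quite go through as written. Proposition~\ref{prop:graphs.monotonicity} only yields that $G_{p_i} \leq G_{p_i}'$ implies $(G_{p_1}*\dots*G_{p_i}*\dots*G_{p_m})\cdot l \leq (G_{p_1}*\dots*G_{p_i}'*\dots*G_{p_m})\cdot l$; it does not yield strictness. Since the product is essentially a union of edge sets, the edges by which $G_{p_i}$ exceeds $G_{p_i}'$ may already be contributed by another factor $G_{p_j}$, in which case the two products coincide and no contradiction with the maximality of $G$ arises. The repair is straightforward: replace each $G_{p_i}$ by a maximal element $G_{p_i}'$ of $\tpp{\omega}{p_i}$ lying above it (such elements exist because the subgraph relation is well-founded); each $G_{p_i}'$ belongs to some finite stage $\tpp{k}{p_i}$, so Lemma~\ref{lem:tp.causes.composition} places the new product in $\tpp{\omega}{p}$, and since that product is $\geq G$ while $G$ is maximal, it must \emph{equal} $G$. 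This gives a decomposition of $G$ whose components are globally maximal, which is all your induction hypothesis needs. With that single adjustment your argument is complete, and arguably tighter than the one in the paper.
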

\begin{proof}
From Lemma~\ref{lem:tp.causes.composition.max}
for any maximal graph $G\in\tpp{k}{p}$,
there is a rule
$l:p\leftarrow q_1,\dotsc, q_m$ and maximal graphs $G_{q_1}\in \tpp{h-1}{q_1},\dotsc, G_{q_m}\in \tpp{k-1}{q_m}$ s.t.
\begin{align*}
G=(G_{q_1}*\dotsc*G_{q_m} )\cdot l
\end{align*}
Furthermore, we assume as induction hypothesis that
for every atom $q_i$ there is a non redundant proof $\pi(q_i)$ for $q_i$ w.r.t. $P$ s.t.
$graph(\pi(q_i))=G_{q_i}$. Then $\pi(p)$  defined as
\begin{align*}
\pi(p)= \cfrac{\pi(q_1),\dotsc,\pi(q_m)}{p} \ (l)
\end{align*}
is a proof for $p$ w.r.t. $P$ which holds
$graph(\pi(p))=G_{p}$ (from Lemma~\ref{lem:proof.graph.from.subproofs})
and $height(\pi(p)\leq h$.
Furthermore, suppose that $\pi(p)$ is redundant, i.e.
there is a proog $\pi'$ for $p$ w.r.t $P$ such that
$graph(\pi(p))<graph(\pi')$.
Let $h=height(\pi')$.
Then, from Lemma~\ref{lem:tp.proof->graph}, it follows that
$graph(\pi')\in\tpp{h}{p}$ and then $graph(\pi')\in\tpp{\omega}{p}$
which contradicts the hypothesis that $G$ is maximal in $\tpp{\omega}{p}$.\qed
\end{proof}

%\newcounter{theoremBack}
%\setcounter{theoremBack}{\arabic{theorem}}
%\setcounter{theorem}{\ref{th:proofs}}
%\addtocounter{theorem}{-1}
%\begin{theorem}
%Let $P$ be a positive, completely labelled program, and $\Pi_p$ the set of non-redundant proofs for some atom $p$ with respect to $P$. If $I$ denotes the least model of $P$, then:
%\begin{IEEEeqnarray*}{c+x*}
%G \in causes(\Pi_p) \ \ \ \text{iff} \ \ \ G \ \text{is a maximal cause in } I(p) & $\Box$
%\end{IEEEeqnarray*}
%\end{theorem}
%\setcounter{theorem}{\arabic{theoremBack}}
\begin{proofof}{Theorem~\ref{th:proofs}}
From Theorem~\ref{theorem:tp.properties} it follows that the least model $I$ is equal to $\tp{\omega}$.
For the only if direction,
from Lemma~\ref{lem:tp.proof<-graph},
it follows that
for every maximal c-graph $G\in I(p)=\tpp{\omega}{p}$
there is a non-redundant proof $\pi(p)$ for $p$ w.r.t $P$ s.t.
$G=graph(\pi(p))$.
That is, $\pi(p)\in \Pi_p$ and then $G=graph(\pi(p))\in graph(\Pi_p)$.
For the if direction,
from Lemma~\ref{lem:tp.proof->graph},
for every $G\in graph(\Pi_p)$,
i.e. $G=graph(\pi(p))$ for some non-redundant proof $\pi(p)$ for $p$ w.r.t. $P$,
it holds that $G\in\tpp{\omega}{p}$ and so that $G\in I(p)$.
Furthermore, suppose that $G$ is not maximal, i.e. there is a maximal
c-graph $G'\in I(p)$ s.t. $G<G'$ and a proof $\pi'$ for $p$ w.r.t. $P$ s.t. $graph(\pi')=G'$ which contradicts that $\pi(p)$ is non-redundant.\qed
\end{proofof}

\begin{lemma}\label{lem:term.label.replacing}
Let $t$ be a causal term.
Then $value(t[l\mapsto u])= value(t)[l\mapsto u]$.
\end{lemma}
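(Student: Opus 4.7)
The proof proceeds by structural induction on the causal term $t$. The essential content is that label replacement, when transported to the semantic side, is a homomorphism with respect to $+$, $*$ and $\cdot$, and then the result falls out of the fact that $value$ is itself a homomorphism (Theorem~\ref{thm:free.algebra}). So my plan splits into two parts: first pin down what $[l \mapsto u]$ means on causal values and check that it commutes with the three operations; then do the induction.

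For the first part, $U[l \mapsto u]$ should be defined graph-wise: for each $G \in U$, replace every occurrence of the vertex $l$ in $G$ by $u$ (removing it, if $u = 1$) and re-close transitively, then close the resulting set of graphs downwards. Working at the graph level, it is straightforward that label replacement commutes with $\cup$ of edge-sets and with transitive/reflexive closure, whence it commutes with the graph operations $*$ (defined as closure of union) and $\cdot$ (defined as closure of $\odot$). Lifting to causal values: for any family $\{U_i\}$ one has $\bigl(\bigcup_i U_i\bigr)[l \mapsto u] = \bigcup_i U_i[l \mapsto u]$ and $\bigl(\bigcap_i U_i\bigr)[l \mapsto u] = \bigcap_i U_i[l \mapsto u]$, and similarly $(U_1 \cdot U_2)[l \mapsto u] = U_1[l \mapsto u] \cdot U_2[l \mapsto u]$, using that downward closure commutes with the graph-level replacement.

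For the induction itself: in the base case $t = l'$, if $l' \neq l$ both sides equal $value(l')$, and if $l' = l$ we have $value(l)[l \mapsto u] = \down{G_l}[l \mapsto u] = \down{G_u} = value(u) = value(l[l \mapsto u])$. In the inductive cases, syntactic substitution is defined componentwise, so $t[l \mapsto u] = \sum\{ t'[l\mapsto u] \mid t' \in S\}$ when $t = \sum S$, and analogously for $\prod$ and $\cdot$. Applying $value$, using that $value$ is a homomorphism (Theorem~\ref{thm:free.algebra}), then the induction hypothesis on each subterm, and finally the fact that $[l \mapsto u]$ distributes over the value-level operations (from the first part), gives
\[
value(t[l\mapsto u]) \;=\; \star\{ value(t')[l\mapsto u] \mid t' \in S\} \;=\; \bigl(\star\{value(t') \mid t' \in S\}\bigr)[l \mapsto u] \;=\; value(t)[l \mapsto u],
\]
where $\star$ ranges over $\sum$, $\prod$, or $\cdot$ as appropriate.

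The main obstacle is purely bookkeeping at the graph level: verifying that graph-level label replacement commutes with transitive closure and with the concatenation operator $\odot$, especially when $u = 1$, since then replacement deletes a vertex and one must take care that the edges created by transitivity through $l$ survive as edges between the former in- and out-neighbours of $l$. Once this is checked, the homomorphism property on $\VLb$ and the induction are routine.
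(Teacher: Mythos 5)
Your proof takes essentially the same route as the paper's: a structural induction on $t$ with the base case split on whether the label equals $l$, and the inductive cases handled by pushing the substitution through $\sum$, $\prod$ and $\cdot$. The only difference is that you explicitly define the replacement operation on causal values and verify that it commutes with $\cup$, $\cap$ and the lifted $\cdot$ (including the vertex-deletion subtlety when $u=1$), a step the paper's proof uses implicitly without spelling out; this makes your argument, if anything, more complete.
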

\begin{proof}
We proceed by structural induction.
In case that $t$ is a label. If $t=l$
then $value(l[l\mapsto u]=value(u)=\downarrow u=value(l)[l\mapsto u]$.
If $t=l'\neq l$ then
$value(l'[l\mapsto u]=value(l')=\downarrow l'=value(l')[l\mapsto u]$.
In case that $t=\prod T$  it follows that
$value(\prod T[l\mapsto u])=\bigcap\setm{value(t'[l\mapsto u])}{t'\in T})$ and by induction hypothesis $value(t'[l\mapsto u])=value(t')[l\mapsto u]$.
Then
$value(\prod T[l\mapsto u])
	=\bigcap\setm{value(t')[l\mapsto u]}{t'\in T})
	=value(\prod T)[l\mapsto u]$.
The cases for $t=\sum T$  is analogous.
In case that $t=t_1\cdot t_2$ it follows that
$value(t[l\mapsto u])
	=value(t_1[l\mapsto u])\cdot value(t_2[l\mapsto u])
	=value(t_1)[l\mapsto u]\cdot value(t_2)[l\mapsto u]
	=value(t)[l\mapsto u]$
\end{proof}

% \newcounter{theoremBack}
% \setcounter{theoremBack}{\arabic{theorem}}
% \setcounter{theorem}{\ref{theorem:least.model.label.replacing}}
% \addtocounter{theorem}{-1}
% \begin{theorem}
% %\label{theorem:least.model.label.replacing}
% Let $P,P'$ two positive causal logic programs, and $I,I'$ their respective least causal models such that $P'$ be the result of replacing label $l$ in $P$ by some $u$ (a label or $1$) then $I'(p)=I(p)[l \mapsto u]$ for any atom $p$.\qed
% \end{theorem}
% \setcounter{theorem}{\arabic{theoremBack}}
\begin{proofof}{Theorem~\ref{theorem:least.model.label.replacing}}
From Theorem~\ref{theorem:tp.properties}, models $I$ and $I'$ are respectively equal to $\tp{\omega}$ and $\tPp{P'}{\omega}$.
Furthermore, from Lemma~\ref{lem:tp.label.replacing}, it follows that
$\tPpp{P'}{\omega}{p}=\tpp{\omega}{p}[l\mapsto u]$ for any atom $p$.
Lemma~\ref{lem:term.label.replacing} shows that the replacing can be done in any causal term without operate it.
\end{proofof}

%\newcounter{theoremBack}
%\setcounter{theoremBack}{\arabic{theorem}}
%\setcounter{theorem}{\ref{theorem:least.model.classical.correspondence}}
%\addtocounter{theorem}{-1}
%\begin{theorem}
%%\label{theorem:least.model.classical.correspondence}
%Let $P$ be a causal positive logic program and $P'$ its unlabelled version. Furthermore, let $I$ be the least causal model of $P$ and $I'$ the least classical model of $P'$. Then $I'=I^{cl}$.\comment{$I$ is not used} \qed
%\end{theorem}
%\setcounter{theorem}{\arabic{theoremBack}}
\begin{proofof}
{Theorem~\ref{theorem:least.model.classical.correspondence}}
It is clear that if every rule in $P$ is unlabelled, i.e. $P=P'$, then their least model assigns $0$ to every $false$ atom and $1$ to every $true$ atom, so that their least models coincide with the classical one, i.e. $I=I'$ and then $I^{cl}=I=I'$.
Otherwise, let $P_n$ be a program where $n$ rules are labelled.
We can build a program $P_{n-1}$ removing one label $l$ and, from Theorem~\ref{theorem:least.model.label.replacing},
it follows that $I_{n-1}=I_n[l\rightarrow 1]$.
By induction hypothesis the corresponding classical interpretation of least model of $P_{n-1}$ coincides with the least model of the unlabelled program, i.e. $I_{n-1}^{cl}=I'$,
and then $I_n[l\mapsto 1]^{cl}=I_{n-1}^{cl}=I'$.
Furthermore,
for every atom $p$ and c-graph $G$ it holds that $G\in I_n(p)$ iff
$G[l\mapsto 1]\in I_n[l\mapsto 1](p)$.
Simple remain to note that $value(z)=\emptyset$,
so that $I_n(p)=0$ iff $I_n[l\mapsto 1](p)=0$ and consequently $I_n^{cl}=I_{n}[l\mapsto 1]^{cl}=I'$.\qed
\end{proofof}

%\newcounter{theoremBack}
%\setcounter{theoremBack}{\arabic{theorem}}
%\setcounter{theorem}{\ref{th:csm}}
%\addtocounter{theorem}{-1}
%\begin{theorem}
%%\label{th:csm}
%Let $P$ be a causal logic program and $P'$ its unlabelled version. 
%\begin{enumerate}
%\item if $I$ is a causal stable model of $P$, then $I^{cl}$ is a stable model of $P'$
%\item if $I'$ is a stable model of $P'$ then there is some causal stable model $I$ of $P$ such that $I'=I^{cl}$.\qed
%\end{enumerate}
%\end{theorem}
%\setcounter{theorem}{\arabic{theoremBack}}
\begin{proofof}{Theorem~\ref{th:csm}}
By definition $I$ and $I^{cl}$ assigns $0$ to the same atoms, so that
$P^I=P^{I^{cl}}$.
Furthermore let $Q$ (instead of $P'$ for clarity) be the unlabelled version of $P$. Then $Q^{I^{cl}}$ is the unlabelled version of $P^I$.
$(1)$ Let $I$ be a stable model of $P$ and $J$ be the least model of $Q^{I^{cl}}$.
Then, $I$ is the least model of $P^I$
and, from Theorem~\ref{theorem:least.model.classical.correspondence},
it follows that $I^{cl}=J$, i.e. $I^{cl}$ is a stable model of $Q$.
$(2)$ Let $I'$ is a stable model of $Q$ and $I$ be the least model of $P^{I'}$.
Since $I'$ is a stable model of $Q$, by definition it is the least model of $Q^{I'}$, furthermore, since $Q^{I'}$ is the unlabelled version of $P^{I'}$ it follows, from Theorem~\ref{theorem:least.model.classical.correspondence}, that $I^{cl}=I'$.
Note that $P^I=P^{I^{cl}}=P^{I'}$. Thus $I$ is a stable model of $P$.\qed
\end{proofof}

%%%%%%%%%%%%%%%%%%%%%%%%%%%%%%%%%%%%%%%%%%%%%%%%%%%%%%%%%%%%%%%%%%%%%%%%%
\newpage
\section{Algebraic completeness}

We will show that causal terms with the algebraic properties reflected in Figures~\ref{fig:DBLattice.ext}~and~\ref{fig:appl.ext} are correct and complete with respect to the algebra of causal values.

\begin{figure}[htbp]
\begin{center}
$
\begin{array}{c}
\hbox{\em Associativity} \\
\hline
%$
\begin{array}{r@{\ }c@{\ }r@{}c@{}l c r@{}c@{}l@{\ }c@{\ }l@{\ }}
t & + & (u & + & w) & = & (t & + & u) & + & w\\
t & * & (u & * & w) & = & (t & * & u) & * & w
\end{array}
%$
\end{array}
$
\ \
$
\begin{array}{c}
\ \ \ \ \hbox{\em Commutativity}\ \ \ \ \\
\hline
%$
\begin{array}{r@{\ }c@{\ }l c r@{\ }c@{\ }l@{\ }}
t & + & u & = & u & + & t\\ 
t & * & u & = & u & * & t
\end{array}
%$
\end{array}
$
\ \
$
\begin{array}{c}
\hbox{\em Absorption} \\
\hline
%$
\begin{array}{c c r@{\ }c@{\ }r@{}c@{}l@{\ }}
t & = & t & + & (t & * & u)\\
t & = & t & * & (t & + & u)
\end{array}
%$
\end{array}
$
\ \
\\
\vspace{10pt}
$
\begin{array}{c}
\hbox{\em Distributive} \\
\hline
%$
\begin{array}{r@{\ }c@{\ }r@{}c@{}l c r@{}c@{}l@{\ }c@{\ }r@{}c@{}l@{}}
t & + & (u & * & w) & = & (t & + & u) & * & (t & + & w)\\
t & * & (u & + & w) & = & (t & * & u) & + & (t & * & w)
\end{array}
%$
\end{array}
$
\ \
$
\begin{array}{c}
Identity \\
\hline
%$
\begin{array}{rcr@{\ }c@{\ }l@{\ }}
t & = & t & + & 0\\
t & = & t & * & 1
\end{array}
%$
\end{array}
$
\ \
$
\begin{array}{c}
\hbox{\em Idempotence} \\
\hline
%$
\begin{array}{rcr@{\ }c@{\ }l@{\ }}
t & = & t & + & t\\
t & = & t & * & t
\end{array}
%$
\end{array}
$
\ \ 
$
\begin{array}{c}
\hbox{\em Annihilator} \\
\hline
%$
\begin{array}{rcr@{\ }c@{\ }l@{\ }}
1 & = & 1 & + & t\\
0 & = & 0 & * & t
\end{array}
%$
\end{array}
$
\end{center}
\caption{Sum and product satisfy the properties of a completely distributive lattice.}
\label{fig:DBLattice.ext}
\end{figure}

\begin{figure}[htbp]
\begin{center}
\newcommand{\titleSep}{0pt}
\newcommand{\contentSep}{-10pt}
\newcommand{\rowSep}{5pt}
$
\begin{array}{c}
\hbox{\em Absorption}\vspace{\titleSep}\\
\hline\vspace{\contentSep}\\
\begin{array}{r@{\ }c@{\ }c@{\ }c@{\ }l c r@{\ }c@{\ }r@{\ }c@{\ }c@{\ }c@{\ }c@{\ }l@{\ }}
&& t &&& = & t & + & u & \cdot & t & \cdot & w \\
u & \cdot & t & \cdot & w & = & t & * & u & \cdot & t & \cdot & w
\end{array}
\end{array}
$
\ \ \ \
$
\begin{array}{c}
\hbox{\em Associativity}\vspace{\titleSep}\\
\hline\vspace{\contentSep}\\
\begin{array}{r@{\ }c@{\ }r@{}c@{}l c r@{}c@{}l@{\ }c@{\ }l@{\ }}
t & \cdot & (u & \cdot & w) & = & (t & \cdot & u) & \cdot & w\\
\\
\end{array}
\end{array}
$
\ \ \ \
$
\begin{array}{c}
\hbox{\em Identity}\vspace{\titleSep}\\
\hline\vspace{\contentSep}\\
\begin{array}{rc r@{\ }c@{\ }l@{\ }}
t & = & 1 & \cdot & t\\
t & = & t & \cdot & 1
\end{array}
\end{array}
$
\ \ \ \
$
\begin{array}{c}
\hbox{\em Annihilator}\vspace{\titleSep}\\
\hline\vspace{\contentSep}\\
\begin{array}{rc r@{\ }c@{\ }l@{\ }}
0 & = & t & \cdot & 0\\
0 & = & 0 & \cdot & t\\
\end{array}
\end{array}
$
\\
\vspace{\rowSep}
$
\begin{array}{c}
\hbox{\em Indempotence}\vspace{\titleSep}\\
\hline\vspace{\contentSep}\\
\begin{array}{r@{\ }c@{\ }l@{\ }c@{\ }l }
l & \cdot & l  & = & l\\
\\
\\
\end{array}
\end{array}
$
\hspace{.1cm}
$
\begin{array}{c}
\hbox{\em Addition\ distributivity}\vspace{\titleSep}\\
\hline\vspace{\contentSep}\\
\begin{array}{r@{\ }c@{\ }r@{}c@{}l c r@{}c@{}l@{\ }c@{\ }r@{}c@{}l@{}}
t & \cdot & (u & + & w) & = & (t & \cdot & u) & + & (t & \cdot & w)\\
( t & + & u ) & \cdot & w & = & (t & \cdot & w) & + & (u & \cdot & w)\\ \\
\end{array}
\end{array}
$
\hspace{.1cm}
$
\begin{array}{c}
\hbox{\em Product\ distributivity}\vspace{\titleSep}\\
\hline\vspace{\contentSep}\\
\begin{array}{rcl}
c \cdot d \cdot e & = & (c \cdot d) * (d \cdot e) \ \ \ \hbox{with} \ d \neq 1 \\
c \cdot (d*e)     & = & (c \cdot d) * (c \cdot e) \\
(c*d) \cdot e     & = & (c \cdot e) * (d \cdot e)
\end{array}
\end{array}
$
\end{center}
\vspace{-5pt}
\caption{Properties of the `$\cdot$' operator ($c,d,e$ are terms without `$+$' and $l$ is a label).}
\label{fig:appl.ext}
\end{figure}

\noindent
First note that the algebraic properties for ``$*$'' and ``$+$'' reflected Figure~\ref{fig:DBLattice.ext} are the common algebraic properties satisfy by distributive lattices. Then their correctness follows directly from Theorem~\ref{theorem:freelattice}.

\begin{proposition}[Application homomorphism]
\label{prop:down.application.homomorphism}
The mapping $\downarrow:\causes,\longrightarrow\VLb$ is an homomorphism between $\tuple{\causes,\cdot}$ and $\tuple{\VLb,\cdot}$,
\end{proposition}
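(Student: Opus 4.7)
The plan is to verify directly from the definitions that $\down{(G_1 \cdot G_2)} = \down{G_1} \cdot \down{G_2}$ for all causal graphs $G_1, G_2$, by proving the two set-theoretic inclusions separately. Unfolding the definition of concatenation on causal values, together with the fact that for any set $S$ of causal graphs $\down{S} = \bigcup_{G \in S} \down{G}$, this amounts to establishing
\[
\down{(G_1 \cdot G_2)} \;=\; \down{\setm{G_1' \cdot G_2'}{G_1' \leq G_1 \text{ and } G_2' \leq G_2}}.
\]

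For the $\subseteq$ direction, I would observe that $G_1 \leq G_1$ and $G_2 \leq G_2$ trivially, so $G_1 \cdot G_2$ itself belongs to the generating set on the right-hand side; hence $\down{(G_1 \cdot G_2)}$, being the principal ideal of $G_1 \cdot G_2$, is contained in the downward closure of that larger set.

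For the $\supseteq$ direction, I would invoke monotonicity of concatenation, established in Proposition~\ref{prop:graphs.monotonicity}: whenever $G_i' \leq G_i$ for $i = 1, 2$, two applications of monotonicity give $G_1' \cdot G_2' \leq G_1 \cdot G_2' \leq G_1 \cdot G_2$. Thus every generator of the right-hand ideal lies below $G_1 \cdot G_2$, and since $\down{(G_1 \cdot G_2)}$ is downward closed, the whole right-hand ideal is included in it.

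No significant obstacle is expected; the only point that demands care is the bookkeeping with the two occurrences of $\downarrow$ in the definition of the product on values---once as the outer closure producing an ideal and once implicit in each factor $\down{G_i}$---so that the argument reduces cleanly to the element-wise application of graph-level monotonicity of `$\cdot$'.
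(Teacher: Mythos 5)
Your proposal is correct and follows essentially the same route as the paper's proof: unfold the definition of `$\cdot$' on ideals to reduce the claim to $\down{(G_1\cdot G_2)} = \down{\setm{G_1'\cdot G_2'}{G_1'\leq G_1 \text{ and } G_2'\leq G_2}}$, then close the argument via Proposition~\ref{prop:graphs.monotonicity}. You merely make explicit the two inclusions that the paper compresses into a single remark, which is a welcome clarification but not a different argument.
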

\begin{proof}
Take any c-graphs $G_1$ and $G_2$, then
\begin{align*}
\down{G_1}\ \cdot\down{G_2}
	& \ = \ \down{\setm{(G_1'\cdot G_2')}
		{ G_1' \in \ \down{G_1} \text{ and } G_2'\in \ \down{G_2} } }
\\
	& \ = \ \down{\setm{(G_1'\cdot G_2')}
		{ G_1' \leq G_1 \text{ and } G_2' \leq G_2 } }
	  \ = \ \down{(G_1\cdot G_2)}
\end{align*}
Note that, from Proposition~\ref{prop:graphs.monotonicity},it follows that $G_1'\cdot G_2'\leq G_1\cdot G_2$. That is, $\downarrow$ in an homomorphism between $\tuple{\causes,\cdot}$ and $\tuple{\VLb,\cdot}$.
\end{proof}

\begin{theorem}[Homomorphism between causal graphs and values]\label{thm:homomorphism.graphs->values}
The mapping $\downarrow:\causes,\longrightarrow\VLb$ is an injective homomorphism between $\tuple{\causes,+,*\cdot}$ and $\tuple{\VLb,+,*,\cdot}$,
\end{theorem}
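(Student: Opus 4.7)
The plan is to assemble this statement from material already proved, since the heavy algebra has been handled by Theorem~\ref{theorem:freelattice} and Proposition~\ref{prop:down.application.homomorphism}. Concretely, I would split the claim into four parts and dispatch each one separately: injectivity, preservation of $*$, preservation of $+$, and preservation of $\cdot$.

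First I would handle \emph{injectivity}. Suppose $\down{G_1} = \down{G_2}$. Since $G_1 \in\ \down{G_1} =\ \down{G_2}$, the definition of the principal ideal gives $G_1 \leq G_2$. By the symmetric argument, $G_2 \leq G_1$, and antisymmetry of $\leq$ on $\causes$ (which itself follows from antisymmetry of the subgraph relation, recalling that $G\leq G'$ iff $G\supseteq G'$) yields $G_1=G_2$.

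Next I would obtain preservation of the lattice operations $*$ and $+$ directly from Theorem~\ref{theorem:freelattice}, which asserts that $\downarrow$ is an injective homomorphism (an embedding) of the lower semilattice $\tuple{\causes,*}$ into the free completely distributive lattice $\tuple{\VLb,+,*}$. This at once gives $\down{(G_1 * G_2)} =\ \down{G_1}\ *\ \down{G_2}$. For $+$, recall (from the appendix definition of $\sum S$) that addition on causal graphs is only \emph{partially} defined: $G_1+G_2$ exists exactly when one of the two graphs is $\leq$-larger than the other, in which case $G_1+G_2$ is that larger graph. Whenever $G_1+G_2$ is defined, say $G_1\leq G_2$ so $G_1+G_2=G_2$, we have $\down{G_1}\subseteq\ \down{G_2}$, so $\down{G_1}\cup\ \down{G_2}=\ \down{G_2}=\ \down{(G_1+G_2)}$, which by the definition of $+$ on $\VLb$ is $\down{G_1}+\ \down{G_2}$. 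Finally, preservation of $\cdot$ is exactly Proposition~\ref{prop:down.application.homomorphism}.

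The main obstacle, such as it is, is simply bookkeeping about the partial nature of $+$ on $\causes$ and making sure the statement of the theorem is interpreted compatibly with Theorem~\ref{theorem:freelattice} (which already gives the structural content). No new computation on graphs or ideals is required beyond what those earlier results provide.
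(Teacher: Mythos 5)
Your proposal is correct and follows essentially the same route as the paper, whose proof simply cites Theorem~\ref{theorem:freelattice} for the lattice part and Proposition~\ref{prop:down.application.homomorphism} for preservation of `$\cdot$'. You merely make explicit the injectivity argument and the bookkeeping for the partial operation `$+$' on $\causes$, which the paper leaves implicit.
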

\begin{proof}
This follows directly from the Theorem~\ref{theorem:freelattice} and Proposition~\ref{prop:down.application.homomorphism}
\end{proof}

\begin{proposition}\label{prop:term.nosum.principal}
For every causal term $t$ without addition `$+$' there exists a causal graph $G$ such that $t = \ \down{G}$.
\end{proposition}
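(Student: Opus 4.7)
My plan is to prove this by structural induction on the causal term $t$, since a term without the addition operator is built up from labels using only the product $\prod S$ (where $S$ consists in turn of terms without $+$) and concatenation $\cdot$.

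For the base cases: if $t=l$ is a label, take $G$ to be the atomic causal graph $G_l$ with the single edge $(l,l)$, so that by definition $l = \down{G_l}$. If $t = \prod \emptyset = 1$, take $G = G_\emptyset$, so $t = \down{G_\emptyset}$ is the top element.

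For the inductive step on $t = \prod S$, by the inductive hypothesis each $t' \in S$ equals $\down{G_{t'}}$ for some causal graph $G_{t'}$. Let $G \ \eqdef \ \prod \{G_{t'} \mid t' \in S\}$, which is a well-defined causal graph by Proposition~\ref{prop:prod.is.infimum}. Since $\downarrow$ is a homomorphism with respect to $*$ (Theorem~\ref{theorem:freelattice}, and extending to arbitrary meets using that $\VLb$ is \emph{completely} distributive), we conclude $\prod S = \prod_{t' \in S} \down{G_{t'}} = \down{\prod_{t' \in S} G_{t'}} = \down{G}$. For the inductive step on $t = t_1 \cdot t_2$, the inductive hypothesis gives causal graphs $G_1, G_2$ with $t_i = \down{G_i}$, and Proposition~\ref{prop:down.application.homomorphism} (the fact that $\downarrow$ preserves concatenation) yields $t_1 \cdot t_2 = \down{G_1} \cdot \down{G_2} = \down{G_1 \cdot G_2}$, so we can take $G \ \eqdef \ G_1 \cdot G_2$.

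The only genuinely non-routine ingredient is the preservation of infinite meets by $\downarrow$, which is exactly what the ``completely distributive'' part of Theorem~\ref{theorem:freelattice} buys us; once that is in hand, the induction is purely mechanical. No further subtleties arise because we never encounter a sum, so the ideal produced at every stage remains principal.
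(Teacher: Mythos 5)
Your proof is correct and follows essentially the same route as the paper's: structural induction on the term, handling the label base case via the atomic graph, the product case via the fact that $\downarrow$ preserves (possibly infinite) meets as guaranteed by Theorem~\ref{theorem:freelattice} together with Proposition~\ref{prop:prod.is.infimum}, and the concatenation case via Proposition~\ref{prop:down.application.homomorphism}. Your explicit treatment of $\prod\emptyset = 1 = \down{G_\emptyset}$ and your remark on infinite meets are minor additions of care that the paper's proof leaves implicit.
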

\begin{proof}
In case that $t$ is a label, by definition is the principal ideal $\down{t}$. We proceed by structural induction. In case that $t=\prod S$, by induction hypothesis, for every $t_i\in S$ there is some causal graph $G_i$ such that $t_i=\ \down{G_i}$ and from Theorem~\ref{theorem:freelattice} it follows that
$$
t=\prod S=\prod\setm{\down{G_i}}{t_i\in S}=\ \down{\setm{ G_i }{ t_i\in S } }$$
which is a principal ideal. In case that $t=t_1\cdot t_2$, by induction hypothesis, $t_1=\ \down{G_1}$ and $t_2=\ \down{G_2}$ and from Proposition~\ref{prop:down.application.homomorphism}, it follows that
$$
t=t_1\cdot t_2=\ \down{G_1}\ \cdot\down{G_2}=\ \down{(G_1\cdot G_2)}$$
which is a principal idea. 
\end{proof}

\begin{lemma}
Let $G_u$ and $G_v$ be two causal graphs. If
$G_u\supseteq G_v$, then \mbox{$term(G_u)\leq term(G_v)$} follows from the equivalence laws reflected in Figures~\ref{fig:DBLattice.ext}~and~\ref{fig:appl.ext}.
\end{lemma}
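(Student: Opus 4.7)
The plan is to reduce the subgraph hypothesis $G_u \supseteq G_v$ to a purely algebraic identity using only the lattice laws (idempotence, associativity and commutativity of `$*$') together with the characterisation of $\leq$ via absorption.

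First I would recall that, by the definition of $term$ together with Proposition~\ref{prop:graph.NF}, each causal graph is identified (at the term level) with the product of its edges:
\begin{align*}
term(G_u) \ = \ \prod\setm{ l_1 \cdot l_2 }{ (l_1,l_2) \in G_u }, \qquad
term(G_v) \ = \ \prod\setm{ l_1 \cdot l_2 }{ (l_1,l_2) \in G_v }.
\end{align*}
Since $\leq$ in the lattice is characterised by the absorption laws of Figure~\ref{fig:DBLattice.ext} (concretely, $u \leq w$ iff $u = u * w$), it suffices to establish the equation $term(G_u) = term(G_u) * term(G_v)$.

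Next I would carry out that computation. Using associativity and commutativity of `$*$', the meet on the right can be flattened into a single product indexed by the multiset of edges in $G_u$ together with those in $G_v$; then the idempotence law $t * t = t$ removes the duplicates, leaving a product indexed by the set $G_u \cup G_v$. The hypothesis $G_u \supseteq G_v$ gives $G_u \cup G_v = G_u$, so the resulting expression is literally $term(G_u)$. Written out:
\begin{align*}
term(G_u) * term(G_v)
   &\ =\ \prod\setm{ l_1 \cdot l_2 }{ (l_1,l_2) \in G_u \cup G_v} \\
   &\ =\ \prod\setm{ l_1 \cdot l_2 }{ (l_1,l_2) \in G_u} \ =\ term(G_u),
\end{align*}
where every equality uses only associativity, commutativity and idempotence of `$*$'. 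Hence, by absorption, $term(G_u) \leq term(G_v)$.

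The only delicate point I anticipate is that the products involved may be infinite (since causal graphs can have infinitely many edges), so the ``flattening'' step implicitly uses the extension of associativity, commutativity and idempotence of `$*$' to arbitrary sets of operands; this is precisely what Theorem~\ref{theorem:freelattice} guarantees through the completely distributive structure of $\tuple{\VLb,*,+}$, so no additional work beyond the laws of Figure~\ref{fig:DBLattice.ext} is required. Notice that the concatenation laws of Figure~\ref{fig:appl.ext} are not needed at all here; the subgraph relation is a purely ``meet of more factors'' phenomenon, and the argument is uniform in the particular edges $(l_1,l_2)$ that are being multiplied.
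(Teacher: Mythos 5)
Your proposal is correct and follows essentially the same route as the paper's own proof: both express $term(G_u)$ and $term(G_v)$ as products of edge terms, use associativity and commutativity of `$*$' to merge the two products, apply idempotence to eliminate the duplicated factors coming from $G_v \subseteq G_u$, and conclude $term(G_u)*term(G_v)=term(G_u)$, hence $term(G_u)\leq term(G_v)$. Your additional remark about infinite products and the completely distributive structure is a reasonable clarification but does not change the argument.
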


\begin{proof}
By definition $term(G_u) * term(G_v)$ is equal to
\begin{align}
	\prod\setbm{ u_1 \cdotl u_2 }{ (u_1,u_2) \in G_u }
		* \prod\setbm{ v_1 \cdotl v_2 }{ (v_1,v_2) \in G_v }
		\label{eq:1:lem:graph.supseteq->term.leq}
\end{align}
and since, $G_u\supseteq G_v$, it follows that every edge $(v_1,v_2) \in G_v$ also belong to $G_u$. Thus applying associative and conmutative laws we can rewrite \eqref{eq:1:lem:graph.supseteq->term.leq} as
\begin{align*}
\prod\setbm{ u_1 \cdotl u_2 }{ (u_1,u_2) \in G_u \backslash G_v }
* \prod\setbm{ (v_1 \cdotl v_2) * (v_1 \cdotl v_2) }{ (v_1,v_2) \in G_v }
\end{align*}
that, by product idempotence, is clearly equivalent to
\begin{align*}
\prod\setbm{ u_1 \cdotl u_2 }{ (u_1,u_2) \in G_u \backslash G_v }
* \prod\setbm{ (v_1 \cdotl v_2)  }{ (v_1,v_2) \in G_v }
\end{align*}
That is, $term(G_u) * term(G_v) = term(G_u)$ which implies $term(G_u) \leq term(G_v)$.
\end{proof}

\begin{lemma}\label{lem:value.subseteq->term.leq}
Let $U$ and $V$ be two causal values. If $U \subseteq V$, then $term(U)\leq term(V)$ follows from the equivalence laws reflected in Figures~\ref{fig:DBLattice.ext}~and~\ref{fig:appl.ext}.
\end{lemma}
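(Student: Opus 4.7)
The plan is to reduce $U \subseteq V$ to a lattice inequality between sums by unfolding the definition $term(U) = \sum \{term(G) \mid G \in U\}$. Specifically, I will show $term(U) + term(V) = term(V)$, which is equivalent to $term(U) \leq term(V)$ in the lattice. I would not need the graph-level lemma just proved (which relates graphs in the opposite order via $\supseteq$); instead I only need the properties of $+$ as a lattice join.

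First I would observe that since $U \subseteq V$, the indexing set $\{term(G) \mid G \in U\}$ of the sum $term(U)$ is a subset of the indexing set of $term(V)$. Using (possibly infinitary) associativity and commutativity of $+$, I would split
\[
term(V) \ = \ \sum \{term(G) \mid G \in V\} \ = \ \sum \{term(G) \mid G \in U\} \ + \ \sum \{term(G) \mid G \in V \setminus U\} \ = \ term(U) + W,
\]
where $W$ abbreviates the second summand. Then, using idempotence of $+$ together with associativity, I compute
\[
term(U) + term(V) \ = \ term(U) + (term(U) + W) \ = \ (term(U) + term(U)) + W \ = \ term(U) + W \ = \ term(V),
\]
which is precisely the statement $term(U) \leq term(V)$ in the lattice ordering generated by $+$.

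The only technical point is that the sums involved are in general infinite (since causal values are ideals and can be unbounded in the poset $\langle \causes, \leq\rangle$). The splitting $\sum_{G \in V} = \sum_{G \in U} + \sum_{G \in V \setminus U}$ is a rearrangement of an arbitrary join in $\VLb$, which is legitimate because Theorem~\ref{theorem:freelattice} guarantees that $\VLb$ is a completely distributive lattice, so that infinitary associativity and commutativity of $+$ hold. This is the main (though mild) obstacle; everything else reduces to a two-line application of idempotence of $+$ on the finite expression obtained after the split.
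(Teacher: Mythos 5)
Your proof is correct and follows essentially the same route as the paper: both establish $term(U)+term(V)=term(V)$ by splitting the sum over $V$ into the parts indexed by $U$ and by $V\setminus U$ and then invoking (infinitary) associativity, commutativity and idempotence of `$+$'. The only cosmetic difference is that you apply idempotence once to the aggregated term $term(U)+term(U)$ whereas the paper applies it summand-by-summand; your explicit remark that the infinite rearrangement is licensed by complete distributivity is a point the paper leaves implicit.
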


\begin{proof}
By definition $term(U)+term(V)$ is equal to
\begin{align}
\sum\setm{ term(G_u) }{ G_u \in U }
+ \sum\setm{ term(G_v) }{ G_v \in V }
   \label{eq:1:lem:value.subseteq->term.leq}
\end{align}
and since, $U\subseteq V$, it follows that every c-graph $G_u \in U$ also belong to $V$. Thus applying associative and conmutative laws we can rewrite \eqref{eq:1:lem:value.subseteq->term.leq} as
\begin{align*}
	\sum\setbm{ G_u }{ (u_1,u_2) \in U }
		+ \sum\setbm{ G_v + G_v }{ G_v \in V \backslash U }
\end{align*}
that, by summ idempotence, is clearly equivalent to
\begin{align*}
	\sum\setbm{ G_u }{ (u_1,u_2) \in U }
		+ \sum\setbm{ G_v }{ G_v \in V \backslash U }
\end{align*}
That is, $term(U) + term(V) = term(V)$ which implies $term(U) \leq term(V)$.
\end{proof}

\begin{proposition}[Application associativity]\label{prop:values.appl.associative}
Let $T$, $U$ and $W$ be three causal values. Then, it holds that $U \cdot (T \cdot W)  \ = \ ( U \cdot T ) \cdot W$ and further
\mbox{$U \cdot T \cdot W \ = \ \down{\setbm{ G_U \cdot G_T \cdot G_W}{G_U \in U,\ G_T \in T \text{ and } G_W \in W}}$}.
\end{proposition}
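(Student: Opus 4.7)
The plan is to reduce the statement about causal values to the already-established associativity of concatenation on causal graphs (Proposition~\ref{prop:graph.appl.associative}), using the definition of $\cdot$ on ideals and the monotonicity of concatenation on graphs (Proposition~\ref{prop:graphs.monotonicity}).

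First I would prove the ``flattened'' equation
\[
U \cdot T \cdot W \ = \ \down{\setbm{ G_U \cdot G_T \cdot G_W}{G_U \in U,\ G_T \in T,\ G_W \in W}}
\]
by computing $(U \cdot T) \cdot W$ directly from the definition. Unfolding once gives
\[
(U \cdot T) \cdot W \ = \ \down{\setm{ H \cdot G_W }{ H \in U \cdot T \text{ and } G_W \in W } },
\]
and unfolding $U \cdot T$ shows that every $H \in U \cdot T$ satisfies $H \leq G_U \cdot G_T$ for some $G_U \in U$, $G_T \in T$. The key observation is that the two candidate sets, namely $\{(G_U \cdot G_T)\cdot G_W\}$ and $\{H \cdot G_W : H \leq G_U\cdot G_T\}$, generate the same principal-ideal closure: one inclusion is trivial, and the other follows because $H \leq G_U \cdot G_T$ implies $H \cdot G_W \leq (G_U \cdot G_T)\cdot G_W$ by monotonicity (Proposition~\ref{prop:graphs.monotonicity}), so its down-closure is already contained in the down-closure of the larger set. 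This mirrors exactly the argument already used in Proposition~\ref{prop:appl.associative}, so I would simply invoke that style of reasoning.

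Having reduced $(U \cdot T) \cdot W$ to $\down{\{(G_U \cdot G_T)\cdot G_W\}}$, I would perform the symmetric computation on $U \cdot (T \cdot W)$ to obtain $\down{\{G_U \cdot (G_T \cdot G_W)\}}$. Associativity of causal values then follows pointwise from associativity on graphs: for each triple $(G_U, G_T, G_W)$ we have $(G_U \cdot G_T) \cdot G_W = G_U \cdot (G_T \cdot G_W)$ by Proposition~\ref{prop:graph.appl.associative}, so the two indexed families of graphs are equal, and hence so are their down-closures. This also establishes the flattened formula for $U \cdot T \cdot W$, since both parenthesisations yield the same set.

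I do not foresee a real obstacle here: everything is a bookkeeping exercise on ideals, and the one non-trivial ingredient (graph-level associativity) is already available. The only minor subtlety is making sure the ``$H \leq G_U \cdot G_T$'' step is handled cleanly so that the down-closure of $\{H \cdot G_W\}$ genuinely collapses to the down-closure of $\{(G_U \cdot G_T)\cdot G_W\}$; this is where monotonicity of `$\cdot$' on graphs is essential and must be cited explicitly.
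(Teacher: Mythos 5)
Your proposal is correct and follows essentially the same route as the paper: unfold $(U\cdot T)\cdot W$ by definition, use monotonicity of `$\cdot$' on graphs to collapse $\down{\setm{G'\cdot G_W}{G'\leq G_U\cdot G_T}}$ to $\down{\setm{(G_U\cdot G_T)\cdot G_W}{\dots}}$, compute $U\cdot(T\cdot W)$ symmetrically, and conclude pointwise via Proposition~\ref{prop:graph.appl.associative}. No gaps; this is exactly the paper's argument.
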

\begin{proof}
By definition it follows that
\begin{IEEEeqnarray*}{*x+r C l}
&(U\cdot T)\cdot W
	& = &
	\down{\setm{ G_U \cdot G_T }{ G_U \in U \text{ and } G_T \in T}}
		\cdot W
	\\
&	& = &
	\down{\setm{ G' \cdot G_W }
		{ G_U \in U,\ G_T \in T,\ G'\leq G_U\cdot G_T \text{ and } G_W \in W}}
	\\
&	& = &
	\down{\setm{ (G_U\cdot G_T) \cdot G_W }
		{ G_U \in U,\ G_T \in T \text{ and } G_W \in W}}
		\label{eq:prop:values.appl.associative.1}
\\
\IEEEeqnarraymulticol{4}{l}{
	\text{In the same way, it also follows that}
}
\\
& U \cdot (T \cdot W)
	& = &
	\down{\setm{ G_U\cdot (G_T \cdot G_W) }
		{ G_U \in U,\ G_T \in T \text{ and } G_W \in W}}
	\label{eq:prop:values.appl.associative.2}
\end{IEEEeqnarray*}
Then it is enough to show that
$(G_U\cdot G_T) \cdot G_W \ = \ G_U\cdot (G_T \cdot G_w) = G_U \cdot G_T \cdot G_W$ which holds due to Proposition~\ref{prop:graph.appl.associative}.
\end{proof}

\begin{proposition}[Application absorption]
Let $T$, $U$ and $W$ be three causal values. Then $T = T + \ U \cdot T \cdot W$ and $U \cdot T \cdot W = T * U \cdot T \cdot W$
\end{proposition}
\begin{proof}
From Proposition~\ref{prop:values.appl.associative} it follows that
\begin{align*}
U \cdot T \cdot W
	&= \ \down{\setbm{ G_U \cdot G_T \cdot G_W}{G_U \in U,\ G_T \in T \text{ and } G_W \in W}}
\end{align*}
Furthermore, for every c-graph $G_T\in T$, it holds that $G_U\cdot G_T \cdot T_W \leq G_T$. Then, since $T$ is an ideal, it follows that $G_U\cdot G_T \cdot T_W \in T$ and consequently $U\cdot T \cdot W \subseteq T$. Thus \mbox{$U \cdot T \cdot W \cup T = T$} and \mbox{$U \cdot T \cdot W \cap T = U \cdot T \cdot W$} and, by definition, these equalities can be rewritten as \mbox{$U \cdot T \cdot W + T = T$} and \mbox{$U \cdot T \cdot W * T = U \cdot T \cdot W$}.
\end{proof}

\begin{proposition}[Application distributivity w.r.t. additions]
Let $T$, $U$ and $W$ be three causal values. Then, it holds that $U \cdot (T + W)  \ = \ ( U \cdot T ) + (U \cdot W$ and 
\mbox{$( U + T ) \cdot W \ = \ ( U \cdot W ) + ( T \cdot W)$}.
\end{proposition}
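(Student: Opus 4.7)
My plan is to prove both identities by unfolding the definitions of $+$ and $\cdot$ on causal values and reducing the claim to the elementary fact that the principal-ideal operator $\downarrow$ distributes over set-theoretic union. Since I may assume Theorem~\ref{theorem:freelattice} and all previously proved identities, everything needed is already in place; this is essentially a calculation.

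First, I would compute the right-hand side of the first identity. By the definition of $+$ as union of ideals and the definition of $\cdot$,
\begin{align*}
(U \cdot T) + (U \cdot W)
  &\ =\ (U \cdot T) \cup (U \cdot W) \\
  &\ =\ \down{\setbm{G_U \cdot G_T}{G_U \in U,\ G_T \in T}}
      \ \cup\ \down{\setbm{G_U \cdot G_W}{G_U \in U,\ G_W \in W}}.
\end{align*}
Next I would use the fact that for any pair of sets of causal graphs $A,B$ one has $\down{A} \cup \down{B} = \down{A \cup B}$ (an immediate consequence of the definition of $\downarrow$), so that the previous expression equals
\begin{align*}
\down{\setbm{G_U \cdot G'}{G_U \in U,\ G' \in T \cup W}}
  \ =\ U \cdot (T \cup W)\ =\ U \cdot (T + W),
\end{align*}
where the last step applies the definitions of $\cdot$ and $+$ in reverse. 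This proves the first identity.

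For the second identity, $(U + T) \cdot W = (U \cdot W) + (T \cdot W)$, the argument is entirely symmetric: one splits the comprehension over the first coordinate instead of the second. I do not anticipate any obstacle; the only step that is not a pure unfolding of definitions is the distributivity of $\downarrow$ over union, which is immediate and does not require any of the more intricate algebraic machinery developed earlier for the concatenation operator.
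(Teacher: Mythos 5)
Your proof is correct and follows essentially the same route as the paper's: both unfold $+$ as union and $\cdot$ as the down-set of pairwise concatenations, then merge the two comprehensions over the second coordinate using the fact that $\downarrow$ distributes over union (which the paper leaves implicit but you state explicitly), and handle the second identity by symmetry. No gaps.
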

\begin{proof}
By definition, it follows that
\begin{align*}
(U \cdot T) + (U\cdot W)
	&= (U \cdot T) \cup (U\cdot W)
\\
	&= \ \down{ \setbm{ G_U \cdot G_T }{ G_U \in U \text{ and } G_T\in T} }
	\ \cup \ \down{ \setbm{ G_U \cdot G_T }{ G_U \in U \text{ and } G_W\in W} }
\\
	&= \ \down{ \setbm{ G_U \cdot G' }{ G_U \in U \text{ and } G'\in T\cup W} }
	= U \cdot ( T \cup W ) = U \cdot ( T + W )
\end{align*}
Furthermore \mbox{$( U + T ) \cdot W \ = \ ( U \cdot W ) + ( T \cdot W)$} holds symmetrically.
\end{proof}

\noindent
Product distributivity laws follows from Propositions~\ref{prop:graphs.appl.distr.over.prods}~and~\ref{prop:graphs.appl.distr.over.prods.cont}.

\begin{corollary}
The applications distributivity properties reflected in Figure~\ref{fig:appl.ext} hold.
\end{corollary}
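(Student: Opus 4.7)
The plan is to lift each of the three identities from causal graphs to causal values by exploiting the fact that the operands involved are free of~`$+$'. By Proposition~\ref{prop:term.nosum.principal}, any causal term $t$ that contains no addition denotes a principal ideal, so I would first write $c = \downarrow G_c$, $d = \downarrow G_d$ and $e = \downarrow G_e$ for suitable causal graphs $G_c, G_d, G_e \in \causes$. This reduction is what makes the proof easy: without it, we would have to reason about arbitrary ideals on both sides, but principal ideals let us work representative-by-representative.

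Next I would appeal to Theorem~\ref{thm:homomorphism.graphs->values}, which tells us that the embedding $\downarrow : \causes \longrightarrow \VLb$ is a homomorphism with respect to all three operations $\cdot$, $*$ and $+$. In particular, for any causal graphs $G_1, G_2$ we have $\downarrow(G_1 \cdot G_2) = \downarrow G_1 \cdot \downarrow G_2$ and $\downarrow(G_1 * G_2) = \downarrow G_1 * \downarrow G_2$. Applying this to both sides of the second and third laws reduces the identities $c \cdot (d * e) = (c \cdot d) * (c \cdot e)$ and $(c * d) \cdot e = (c \cdot e) * (d \cdot e)$ to the graph-level identities already established in Proposition~\ref{prop:graphs.appl.distr.over.prods}, which asserts exactly the distributivity of concatenation over arbitrary products of causal graphs.

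For the first identity $c \cdot d \cdot e = (c \cdot d) * (d \cdot e)$, the hypothesis $d \neq 1$ has to be translated correctly. Since $1 = \downarrow G_\emptyset$ and the embedding $\downarrow$ is injective, $d \neq 1$ is equivalent to $G_d \neq G_\emptyset$. Under this condition, Proposition~\ref{prop:graphs.appl.distr.over.prods.cont} gives $G_c \cdot G_d \cdot G_e = (G_c \cdot G_d) * (G_d \cdot G_e)$, and applying $\downarrow$ to both sides again yields the desired identity in $\VLb$.

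The only delicate point, and what I expect to be the sole obstacle, is this translation of the side condition $d \neq 1$: one has to verify that $G_d = G_\emptyset$ would force $d = 1$, so that the graph-level hypothesis of Proposition~\ref{prop:graphs.appl.distr.over.prods.cont} is indeed available whenever the corollary's hypothesis holds. Everything else is a direct homomorphic transfer from previously proven graph-level lemmas.
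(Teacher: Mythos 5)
Your proof is correct and takes essentially the same route as the paper's own: reduce $c$, $d$, $e$ to principal ideals via Proposition~\ref{prop:term.nosum.principal}, invoke the graph-level identities of Propositions~\ref{prop:graphs.appl.distr.over.prods} and~\ref{prop:graphs.appl.distr.over.prods.cont}, and transfer them to $\VLb$ through the homomorphism of Theorem~\ref{thm:homomorphism.graphs->values}. Your explicit translation of the side condition $d \neq 1$ into $G_d \neq G_\emptyset$ via injectivity of $\downarrow$ is a detail the paper leaves implicit, and it is handled correctly.
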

\begin{proof}
From proposition~\ref{prop:term.nosum.principal}, it follows that $c$, $d$ and $e$ are the principal ideals $\down{G_c}$, $\down{G_d}$ and $\down{G_e}$ for some c-graphs $G_c$, $G_d$ and $G_e$. Furthermore, from Proposition~\ref{prop:graphs.appl.distr.over.prods} distributivity holds for causal graphs, i.e. \mbox{$G_c \cdot (G_d * G_e) = (G_c \cdot G_d) * (G_c \cdot G_e)$} and \mbox{$(G_c * G_d) \cdot G_e = (G_c \cdot G_e) * (G_d \cdot G_e)$} and from Theorem~\ref{thm:homomorphism.graphs->values} it follows that this also holds for they principal ideals $\down{G_c}$, $\down{G_d}$ and $\down{G_e}$. In the same way, from Proposition~\ref{prop:graphs.appl.distr.over.prods.cont}, it follows $c \cdot d \cdot e = ( c \cdot d ) * ( d \cdot e )$.
\end{proof}

\begin{proposition}[Application identity and annihilator]
Given a causal value $T$, it holds that $T = 1 \cdot T$, $T= T \cdot 1$, $0 = T \cdot 0$ and $0 = 0 \cdot T$.
\end{proposition}
\begin{proof}
Note that $1$ and $0$ respectively correspond to $\causes$ and $\emptyset$ and by definition it follows that
\begin{align*}
1 \cdot T &= \ \down{ \setbm{G \cdot G_T}{ G\in\causes \text{ and } G_T \in T} } = T
\\
0 \cdot T &= \ \down{ \setbm{G \cdot G_T}{ G\in\emptyset \text{ and } G_T \in T} } = \emptyset = 0
\end{align*}
The other cases are symmetric.
\end{proof}

\begin{proposition}[Application idempotentce]
Given a label $l$, it holds that $l\cdot l = l$
\end{proposition}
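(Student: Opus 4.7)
The plan is to reduce the statement about causal values down to the corresponding statement about causal graphs, where it has already been established. By the convention introduced after the definition of causal value, a label $l$ appearing in a causal term stands for the principal ideal $\down{G_l}$, where $G_l$ is the atomic causal graph whose unique vertex is $l$ and whose unique edge is $(l,l)$. So the goal becomes showing $\down{G_l}\cdot\down{G_l} = \ \down{G_l}$.

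First, I would invoke Proposition~\ref{prop:down.application.homomorphism} (that $\downarrow$ is an application homomorphism) to rewrite the left-hand side as $\down{(G_l\cdot G_l)}$. Then I would appeal to the earlier Proposition stating that $G\cdot G = G$ for any causal graph $G$ whose only edge is $(l,l)$; applied to $G_l$ this yields $G_l\cdot G_l = G_l$. Substituting back gives $\down{(G_l\cdot G_l)} = \ \down{G_l}$, which by the label convention is $l$.

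There is essentially no obstacle here: the work was already done in the two lemmas cited, and the present proposition is just their composition lifted from graphs to the associated principal ideals. The only thing worth spelling out explicitly, for clarity, is the unpacking step $G_l\cdot G_l = (G_l\odot G_l)^*$, whose vertex set is $\{l\}$ and whose edge set is $\{(l,l)\}\cup\{(l,l)\}\cup\{(v_1,v_2)\mid v_1,v_2\in\{l\}\} = \{(l,l)\}$, already closed under reflexivity and transitivity, hence equal to $G_l$.
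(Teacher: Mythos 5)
Your proof is correct and follows essentially the same route as the paper's: both reduce the claim to the graph-level identity $G_l\cdot G_l=G_l$ and lift it to principal ideals via the fact that $\downarrow$ preserves `$\cdot$' (you cite the application-homomorphism proposition directly, the paper cites the full homomorphism theorem, but the step used is identical). The extra unpacking of $(G_l\odot G_l)^*$ at the end is fine, though redundant given the cited graph-level idempotence proposition.
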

\begin{proof}
By definition $l$ corresponds to the causal value $\down{G_l}$ where $G_l$ is the causal graphs which contains the only edge $(l,l)$. Furthermore $G_l \cdot G_l = G_l$ and from Theorem~\ref{thm:homomorphism.graphs->values} it follows that \mbox{$\down{G_l} \ = \ \down{G_l} \ \cdot \down{G_l}$} and consequently $l= l \cdot l$.
\end{proof}

\begin{proposition}[Causal term representation]
For every causal value $T$ is equal to $\sum\setbm{ term(G) }{G \in T}$.
\end{proposition}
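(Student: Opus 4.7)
The plan is to reduce the claim to the simple observation that any ideal equals the union of the principal ideals of its elements, once we have identified what $term(G)$ evaluates to in $\VLb$.

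First I would verify the key identification: for each causal graph $G$, the causal value obtained by evaluating $term(G) = \prod\{ l_1\cdotl l_2 \mid (l_1,l_2) \in G \}$ coincides with the principal ideal $\down{G}$. This follows in two steps. By Proposition~\ref{prop:graph.NF}, the graph $G$ itself decomposes as $G = \prod\{ l\cdot l' \mid (l,l') \in G\}$ using the `$*$' and `$\cdot$' operations on causal graphs. Then, by Theorem~\ref{thm:homomorphism.graphs->values}, the mapping $\downarrow : \causes \longrightarrow \VLb$ is a homomorphism for both `$*$' and `$\cdot$', so
\[
\down{G} \;=\; \prod_{(l,l')\in G} \down{(l\cdot l')} \;=\; \prod_{(l,l')\in G} \down{l}\cdot\down{l'} \;=\; term(G),
\]
where the last equality uses that each label $l$ represents the value $\down{l}$.

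Next, since the sum of causal values is defined as set union, I would rewrite
\[
\sum\setm{ term(G) }{ G \in T } \;=\; \bigcup_{G \in T} \down{G}.
\]
It then remains to show $T = \bigcup_{G \in T} \down{G}$. The inclusion $\supseteq$ uses the ideal property of $T$: for any $G \in T$ and any $G' \leq G$ we have $G' \in T$, hence $\down{G} \subseteq T$. The reverse inclusion $\subseteq$ is immediate from the reflexivity of $\leq$, since every $G \in T$ belongs to its own principal ideal $\down{G}$.

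There is no real obstacle here; the only point requiring care is keeping the syntactic causal term $term(G)$ conceptually distinct from the causal value it denotes, and invoking Theorem~\ref{thm:homomorphism.graphs->values} to identify them. Everything else is the standard fact that an ideal is the union of the principal ideals of its members.
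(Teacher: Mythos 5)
Your proof is correct and follows essentially the same route as the paper: both identify $term(G)$ with the principal ideal $\down{G}$ via Proposition~\ref{prop:graph.NF} and the homomorphism property of $\downarrow$ (Theorem~\ref{thm:homomorphism.graphs->values}), and then use the fact that an ideal is the union of the principal ideals of its members. The only difference is that you spell out the latter fact (downward closure for one inclusion, reflexivity for the other), which the paper simply asserts.
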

\begin{proof}
For every causal term $T$, it holds that $T=\sum\setm{ \down{G} }{G \in T }$. Furthermore, from Proposition~\ref{prop:graph.NF}, it follows that every causal graph $G$ is equal to $\prod\setm{l_1 \cdot l_2}{(l_1,l_2) \text{ is an edge of } G }$ and then, from Theorem~\ref{thm:homomorphism.graphs->values}, it holds that $\down{G} = \prod\setm{l_1 \cdot l_2}{(l_1,l_2) \text{ is an edge of } G }$. Consequently $T=\sum\setm{ term(G) }{ G\in T}$.
\end{proof}

\begin{proposition}\label{prop:term.nosum.normal.form}
Every causal term without sums $c$ can be rewritten as $normal(c)\eqdef\prod\setm{l_1 \cdot l_2)}{ (l_1,l_2) \text{ is an edge of } G }$ for some casual graph $G$ using the algebraic equivalences in Figures~\ref{fig:DBLattice.ext}~and~\ref{fig:appl.ext}.
\end{proposition}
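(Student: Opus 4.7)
The plan is to prove the proposition by structural induction on the sum-free causal term $c$, using the algebraic laws of Figures~\ref{fig:DBLattice.ext} and~\ref{fig:appl.ext} to rewrite $c$ into a product of the form $\prod\{l_1 \cdot l_2 \mid (l_1,l_2) \in G\}$ for some causal graph $G$. The base cases are immediate. If $c = l$ is a label, idempotence yields $l = l \cdot l = \prod\{l \cdot l\}$, which is the normal form of the atomic graph $G_l$ with single edge $(l,l)$. If $c = 1 = \prod \emptyset$, this is already the normal form of the empty causal graph $G_\emptyset$.

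For the product case $c = \prod S$, the induction hypothesis provides a normal form $\prod\{l_1 \cdot l_2 \mid (l_1,l_2) \in G_t\}$ for each $t \in S$. Associativity and commutativity of $*$ collapse the whole product into $\prod\{l_1 \cdot l_2 \mid (l_1,l_2) \in \bigcup_{t \in S} G_t\}$, while idempotence removes duplicates. The graph $\bigcup_{t \in S} G_t$ need not be transitively closed, so I iteratively apply the derivable transitivity law $(l_1 \cdot l_2) * (l_2 \cdot l_3) = (l_1 \cdot l_2) * (l_2 \cdot l_3) * (l_1 \cdot l_3)$ (Proposition~\ref{prop:term.nosum.transitivity}, a consequence of transitivity and absorption in Figure~\ref{fig:appl.ext}) to insert every transitively implied edge without changing the value of the term. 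A secondary induction on the length of the witnessing walk shows that every edge of the closure $(\bigcup_t G_t)^*$ can be introduced this way.

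For the concatenation case $c = c_1 \cdot c_2$, I apply the IH to both factors and use distributivity of $\cdot$ over $*$ (both directions) to obtain $\prod\{(l_1 \cdot l_2) \cdot (l_3 \cdot l_4) \mid (l_1,l_2) \in G_1,\ (l_3,l_4) \in G_2\}$. By associativity each factor becomes $l_1 \cdot l_2 \cdot l_3 \cdot l_4$, and repeated application of the transitivity law (licensed because every intermediate factor is a label and therefore $\neq 1$) rewrites each such four-factor chain as a product of its pairwise consecutive edges plus all transitively implied ones. Aggregating over all $(l_1,l_2) \in G_1$ and $(l_3,l_4) \in G_2$ yields all edges of $G_1 \odot G_2$ together with every transitive edge, i.e.\ exactly the normal form associated with $(G_1 \odot G_2)^* = G_1 \cdot G_2$.

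The main obstacle will be the bookkeeping in the two inductive cases: verifying that iterated application of extended transitivity really does introduce every transitive edge (and nothing else, modulo absorption and idempotence), and that the manipulations go through for infinite index sets $S$. The free, completely distributive lattice structure established in Theorem~\ref{theorem:freelattice} justifies the infinite distributivity steps, so the core difficulty reduces to the finite combinatorial argument for transitive closure, which is handled by the auxiliary induction on walk length.
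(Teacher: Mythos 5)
Your proof is correct and relies on exactly the same algebraic machinery as the paper's own argument: distributivity of `$\cdot$' over `$*$', the chain law $c\cdot d\cdot e=(c\cdot d)*(d\cdot e)$, identity and idempotence to handle $1$'s and single labels, and the derived transitivity of Proposition~\ref{prop:term.nosum.transitivity} to insert the closure edges. The only difference is organizational: you proceed by structural induction, composing the normal forms of subterms case by case, whereas the paper performs a single global normalization (first flattening the whole term into a product of label chains $l_1\cdot\dotsc\cdot l_n$ via distributivity, then decomposing each chain into consecutive edges and closing transitively); the two routes are interchangeable, and yours is if anything slightly more explicit about the concatenation case and the infinitary steps.
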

\begin{proof}
We start rewritten $c$ as $\prod C$ where every causal term $x\in C$ is in the form of $l_1\cdot\dotsc\cdot l_n$ by by applying the distributive law until no product is in the scope of ``$\cdot$''. Then we remove each occurrence of $1$ applying the identity law and we replace term $x$ by $x \cdot x$ when $x$ is a label. That is, we have $C'$ s.t. every $x' \in C$ is equal to $x'=l_1\cdot\dotsc\cdot l_n$ with $n>1$ and $l_i\neq 1$.
Then we rewrite each $x'\in C'$ as $l_1\cdot l_2*l_2\cdot l_3*\dotsc*l_{n-1}\cdot l_n$ by successively application of the equivalence $c \cdot d \cdot e= c\cdot d * d\cdot e$. Finally we add every transitive edge applying the equivalence $c\cdot d*d\cdot e=c \cdot d * d\cdot e * c \cdot e$ (Proposition~\ref{prop:term.nosum.transitivity}).
\end{proof}

\begin{proposition}\label{prop:term.normal.form}
Every causal term $t$ can be rewritten as $normal(t)\eqdef\setm{normal(c)}{c\in S}$ for some set of terms without sums $S$ using the algebraic equivalences in Figures~\ref{fig:DBLattice.ext}~and~\ref{fig:appl.ext}.
\end{proposition}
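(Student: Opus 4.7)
The plan is to proceed by structural induction on the causal term $t$, reducing every case to Proposition~\ref{prop:term.nosum.normal.form}. The base case is when $t = l$ is a label: then $t$ is already a sum-free term and Proposition~\ref{prop:term.nosum.normal.form} produces $normal(l)$, so $normal(t) = \sum\{normal(l)\}$ trivially.

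For the inductive step, I would handle three cases. If $t = \sum S'$, then by the induction hypothesis each $t' \in S'$ rewrites as $\sum\{normal(c) \mid c \in S_{t'}\}$; using associativity and commutativity of `$+$' I would flatten the result into a single sum $\sum\{normal(c) \mid c \in \bigcup_{t' \in S'} S_{t'}\}$. If $t = \prod S'$, apply the induction hypothesis to each factor and then use product distributivity over sums (Figure~\ref{fig:DBLattice.ext}) to push the meet inside: the result becomes a sum indexed by choice functions selecting one sum-free summand from each factor, and each such selection yields a product of sum-free terms which is itself sum-free; Proposition~\ref{prop:term.nosum.normal.form} then normalises each summand. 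Finally, if $t = t_1 \cdot t_2$, apply induction to $t_1$ and $t_2$, then use the addition distributivity laws for `$\cdot$' (Figure~\ref{fig:appl.ext}) to rewrite $(\sum_i c_i) \cdot (\sum_j d_j)$ as $\sum_{i,j} (c_i \cdot d_j)$, each summand being a sum-free term that Proposition~\ref{prop:term.nosum.normal.form} normalises.

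The main obstacle will be the product case when $S'$ is infinite: binary product distributivity over sums, when iterated, naturally produces sums indexed by (possibly infinite) choice functions, and one must justify that this expansion is legal from the stated equivalences. Since the laws in Figure~\ref{fig:DBLattice.ext} are stated for the completely distributive lattice structure guaranteed by Theorem~\ref{theorem:freelattice} (where meet is defined for arbitrary subsets and distributes over infinite joins), this expansion is admissible, but this point should be noted explicitly rather than waved away. A similar remark applies to addition distributivity for `$\cdot$' in the concatenation case, which extends to arbitrary sums because `$\cdot$' was defined on arbitrary causal values and distributes over arbitrary addition in $\VLb$.

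Once each summand is sum-free, a final invocation of Proposition~\ref{prop:term.nosum.normal.form} converts it to the canonical product of edge-terms $l_1 \cdot l_2$, completing the proof that $t$ is equivalent (under the stated equivalences) to $\sum\{normal(c) \mid c \in S\}$ for the set $S$ of sum-free summands so obtained.
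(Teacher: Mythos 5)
Your proof is correct and follows essentially the same route as the paper's: push sums outward using distributivity until no sum occurs in the scope of `$*$' or `$\cdot$', then normalise each sum-free summand via Proposition~\ref{prop:term.nosum.normal.form}. The paper's own argument is a one-line version of exactly this; your explicit structural induction and your remark that the infinite-$S$ case needs the completely distributive (infinitary) laws merely make the same idea more precise.
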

\begin{proof}
We only have to rewrite $t$ as a causal term where sums are not in the scope of ``$*$'' and ``$\cdot$'' by successively application of distributivity over sums. Then the statement holds from Proposition~\ref{prop:term.nosum.normal.form}
\end{proof}

\begin{proposition}
Let $c$ and $d$ be causal terms without ``$+$''. Then $c \leq d$ iff $c = c*d$ follows from the algebraic equivalences in Figures~\ref{fig:DBLattice.ext}~and~\ref{fig:appl.ext}.
\end{proposition}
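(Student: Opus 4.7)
The plan is to reduce both sides of the biconditional to the same combinatorial condition on the causal graphs associated with $c$ and $d$ via the normal-form machinery already established.

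First I would invoke Proposition~\ref{prop:term.nosum.normal.form} to rewrite $c$ and $d$, using only the laws in Figures~\ref{fig:DBLattice.ext} and~\ref{fig:appl.ext}, into their edge-product normal forms $\prod\{\, l_1\cdot l_2 \mid (l_1,l_2)\in E(G_c)\,\}$ and $\prod\{\, l_1\cdot l_2 \mid (l_1,l_2)\in E(G_d)\,\}$ for suitable causal graphs $G_c$ and $G_d$. By Proposition~\ref{prop:term.nosum.principal} these denote the principal ideals $\down{G_c}$ and $\down{G_d}$, so by definition of the value order, $c\leq d$ is equivalent to $\down{G_c}\subseteq\, \down{G_d}$, which in turn holds iff $E(G_d)\subseteq E(G_c)$.

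Next I would compute a normal form for $c*d$. Concatenating the two edge products using associativity and commutativity of $*$ yields a single product indexed by the multiset union of $E(G_c)$ and $E(G_d)$; the idempotence law $t*t=t$ from Figure~\ref{fig:DBLattice.ext} then collapses the duplicates on $E(G_c)\cap E(G_d)$. The result is $c*d = \prod\{\, l_1\cdot l_2 \mid (l_1,l_2)\in E(G_c)\cup E(G_d)\,\}$, derived using only the permitted equivalences.

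Both directions of the biconditional then fall out easily. If $c\leq d$, then $E(G_c)\cup E(G_d)=E(G_c)$, so the normal form of $c*d$ coincides factor-by-factor with that of $c$; chaining the two rewriting derivations yields $c=c*d$. Conversely, if the equation $c=c*d$ is derivable from the laws, soundness of those laws in the causal-value lattice (Theorem~\ref{theorem:freelattice}) gives $value(c)=value(c)\cap value(d)$, hence $value(c)\subseteq value(d)$, i.e., $c\leq d$. The main delicacy is in the merging step: one must check that every duplicated factor produced by concatenating the two products is eliminated purely by $*$-idempotence, without smuggling in absorption or transitivity rewrites that would introduce edges outside $E(G_c)\cup E(G_d)$.
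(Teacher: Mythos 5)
Your proposal is correct and follows essentially the same route as the paper: rewrite $c$ and $d$ into edge-product normal forms, identify them with principal ideals $\down{G_c}$ and $\down{G_d}$, and reduce $c \leq d$ to the equation $c = c*d$. In fact you go slightly further than the paper's own proof, which breaks off at ``let us see that $c'=c'*d'$ follows from the algebraic equivalences'': your merging-and-idempotence derivation of $c=c*d$ from $E(G_d)\subseteq E(G_c)$ is exactly the missing step (it mirrors the paper's earlier unnumbered lemma showing $term(G_u)*term(G_v)=term(G_u)$ when $G_u\supseteq G_v$), and your soundness appeal for the converse direction is likewise the intended argument.
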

\begin{proof}
From Proposition~\ref{prop:term.nosum.normal.form}, $c$ and $d$ can be  rewrite as $c'=\prod\setm{l_1 \cdot l_2)}{ (l_1,l_2) \text{ is an edge of } G_c }$ and $d'=\prod\setm{l_1 \cdot l_2)}{ (l_1,l_2) \text{ is an edge of } G_d }$ for some causal graphs $G_c$ and $G_d$. Furthermore, from Proposition~\ref{prop:graph.NF} and Theorem~\ref{thm:homomorphism.graphs->values}, it holds that $\down{G_c}=c'$ and $\down{G_d}=d'$.
Thus, by definition $c\leq d$ iff $\down{G_c}\leq \ \down{G_d}$ iff $\down{G_c}= \down{G_c} \ \cap \down{G_d}$ iff $\down{G_c} = \ \down{G_c} \ * \down{G_d}$ iff $c' = c' * d'$. Let us see that $c' = c'*d'$ can be follows from the algebraic equivalences.
\end{proof}

\label{lastpage}
\end{document}